\definecolor{blued}{RGB}{70,197,221}
\definecolor{pearOne}{HTML}{2C3E50}
\definecolor{pearTwo}{HTML}{A9CF54}
\definecolor{pearTwoT}{HTML}{C2895B}
\definecolor{pearThree}{HTML}{E74C3C}
\colorlet{titleTh}{pearOne}
\colorlet{bull}{pearTwo}
\definecolor{pearcomp}{HTML}{B97E29}
\definecolor{pearFour}{HTML}{588F27}
\definecolor{pearFith}{HTML}{ECF0F1}
\definecolor{pearDark}{HTML}{2980B9}
\definecolor{pearDarker}{HTML}{1D2DEC}
\let\originalleft\left
\let\originalright\right
\renewcommand{\left}{\mathopen{}\mathclose\bgroup\originalleft}
\renewcommand{\right}{\aftergroup\egroup\originalright}
\definecolor{graphicbackground}{rgb}{0.96,0.96,0.8}
\definecolor{rouge1}{RGB}{226,0,38}  % red P
\definecolor{orange1}{RGB}{243,154,38}  % orange P
\definecolor{jaune}{RGB}{254,205,27}  % jaune P
\definecolor{blanc}{RGB}{255,255,255} % blanc P
\definecolor{rouge2}{RGB}{230,68,57}  % red S
\definecolor{orange2}{RGB}{236,117,40}  % orange S
\definecolor{taupe}{RGB}{134,113,127} % taupe S
\definecolor{gris}{RGB}{91,94,111} % gris S
\definecolor{bleu1}{RGB}{38,109,131} % bleu S
\definecolor{bleu2}{RGB}{28,50,114} % bleu S
\definecolor{vert1}{RGB}{133,146,66} % vert S
\definecolor{vert3}{RGB}{20,200,66} % vert S
\definecolor{vert2}{RGB}{157,193,7} % vert S
\definecolor{darkyellow}{RGB}{233,165,0}  % orange S
\definecolor{lightgray}{rgb}{0.9,0.9,0.9}
\definecolor{darkgray}{rgb}{0.6,0.6,0.6}
\definecolor{babyblue}{rgb}{0.54, 0.81, 0.94}
\definecolor{citrine}{rgb}{0.89, 0.82, 0.04}
\definecolor{misogreen}{rgb}{0.25,0.6,0.0}
\DeclareMathOperator*{\argmax}{arg\,max}
\renewcommand{\d}[1]{\ensuremath{\operatorname{d}\!{#1}}}
\newcommand{\I}{{\mathds{1}}}
\newcommand{\R}{\mathbb{R}}
\newcommand{\E}{\mathbb{E}}
\newcommand{\pa}[1]{\left(#1\right)}
\newcommand{\CommaBin}{\mathbin{\raisebox{0.5ex}{,}}}
\newcommand{\cO}{\mathcal{O}}
\newcommand{\tcO}{\widetilde{\cO}}
\newcommand{\cX}{\mathcal{X}}
\newcommand{\X}{\cX}
\renewcommand{\epsilon}{\varepsilon}
\renewcommand{\hat}{\widehat}
\renewcommand{\tilde}{\widetilde}
\renewcommand{\bar}{\overline}
\newcommand{\nothere}[1]{}
\newcommand{\StroquOOL}{\normalfont \texttt{\textcolor[rgb]{0.5,0.2,0}{StroquOOL}}\xspace}
\newcommand{\SequOOL}{\normalfont{\texttt{\textcolor[rgb]{0.5,0.2,0}{SequOOL}}}\xspace}
\newcommand{\StoSOO}{\texttt{StoSOO}\xspace}
\newcommand{\POO}{\texttt{POO}\xspace}
\newcommand{\DOO}{\texttt{DOO}\xspace}
\newcommand{\SOO}{\texttt{SOO}\xspace}
\newcommand{\Zooming}{\texttt{Zooming}\xspace}
\newcommand{\HCT}{\texttt{HCT}\xspace}
\newcommand{\HOO}{\texttt{HOO}\xspace}
\newcommand{\Direct}{\texttt{DiRect}\xspace}
\let\inf\undefined
\DeclareMathOperator*{\inf}{\vphantom{\sup}inf}
\DeclareBoldMathCommand{\I}{I}
\DeclareBoldMathCommand{\e}{e}
\DeclareBoldMathCommand{\f}{f}
\DeclareBoldMathCommand{\g}{g}
\DeclareBoldMathCommand{\a}{a}
\DeclareBoldMathCommand{\b}{b}
\DeclareBoldMathCommand{\d}{d}
\DeclareBoldMathCommand{\m}{m}
\DeclareBoldMathCommand{\p}{p}
\DeclareBoldMathCommand{\q}{q}
\DeclareBoldMathCommand{\v}{v}
\DeclareBoldMathCommand{\V}{V}
\DeclareBoldMathCommand{\x}{x}
\DeclareBoldMathCommand{\t}{t}
\DeclareBoldMathCommand{\X}{X}
\DeclareBoldMathCommand{\Y}{Y}
\DeclareBoldMathCommand{\z}{z}
\DeclareBoldMathCommand{\Z}{Z}
\DeclareBoldMathCommand{\M}{M}
\DeclareBoldMathCommand{\n}{n}
\DeclareBoldMathCommand{\ssigma}{\sigma}
\DeclareBoldMathCommand{\SSigma}{\Sigma}
\DeclareBoldMathCommand{\OOmega}{\Omega}
\DeclareBoldMathCommand{\y}{y}
\DeclareBoldMathCommand{\U}{U}
\DeclareBoldMathCommand{\w}{w}
\DeclareBoldMathCommand{\W}{W}
\DeclareBoldMathCommand{\L}{L}
\DeclareBoldMathCommand{\s}{s}
\DeclareBoldMathCommand{\S}{S}
\DeclareBoldMathCommand{\A}{A}
\DeclareBoldMathCommand{\B}{B}
\DeclareBoldMathCommand{\C}{C}
\DeclareBoldMathCommand{\D}{D}
\DeclareBoldMathCommand{\E}{\mathbb{E}}
\DeclareBoldMathCommand{\G}{G}
\DeclareBoldMathCommand{\H}{H}
\DeclareBoldMathCommand{\P}{\mathbb{P}}
\DeclareBoldMathCommand{\Q}{Q}
\DeclareBoldMathCommand{\R}{R}
\DeclareBoldMathCommand{\X}{X}
\DeclareBoldMathCommand{\mmu}{\mu}
\DeclareBoldMathCommand{\ones}{1}
\DeclareBoldMathCommand{\zeros}{0}
\newcommand{\tree}{\mathcal{T}}
\newcommand{\maxNumberChildren}{K}
\newcommand{\depthOp}{\bot}
\newcommand{\partition}{\mathcal{P}}
\newcommand{\timeHorizon}{n}
\newcommand{\pullsNumber}{T}
\newcommand{\Exp}{\E}
\newcommand{\TODO}[1]{
\ifmmode
\text{\textcolor{red}{TODO: #1}}
\else
\textcolor{red}{TODO: #1}
\fi
}
\renewcommand{\epsilon}{\varepsilon}
\renewcommand{\hat}{\widehat}
\renewcommand{\tilde}{\widetilde}
\renewcommand{\bar}{\overline}
\newcommand{\Real}{\mathbb{R}}
\newcommand{\Integer}{\mathbb{N}}
\newcommand{\dom}{\mathcal X}
\newcommand{\hmax}{h_{\rm max}}
\newcommand{\pmax}{p_{\rm max}}
\newcommand{\lambertW}{W}
\newtheorem{assumption}{Assumption}
\newcounter{scratchcounter}
\begin{document}

	\title[simple approach to optimization under a minimal smoothness assumption]{A simple parameter-free and adaptive approach to optimization under a minimal local smoothness assumption}

 % Authors with different addresses:
\coltauthor{\Name{Peter L.\,Bartlett} \Email{peter@berkeley.edu}\\
	\addr University of California, Berkeley, USA
	\AND
	\Name{Victor Gabillon} \Email{victor.gabillon@huawei.com}\\
	\addr Noah's Ark Lab, Huawei Technologies, London, UK
		\AND
	\Name{Michal Valko} \Email{michal.valko@inria.fr}\\
	\addr SequeL team, INRIA Lille - Nord Europe, France
		\vspace{-1em}
}
	
	\maketitle
	%\vspace{-0.8em}
		%\vspace{-0.2em}
	% !TEX root = bartlett2019simple.tex
\begin{abstract}%   <- trailing '%' for backward compatibility of .sty file
	We study the problem of optimizing a function under a \emph{budgeted number of evaluations}. 
	We only assume that the function is \emph{locally} smooth around one of its
	global optima.
	The difficulty of optimization is measured in terms of 1) the amount of \emph{noise} $b$ of the function evaluation and 2)
	the local smoothness, $d$, of the function. A smaller $d$ results in smaller optimization error.
	We come with a new, simple, and parameter-free approach. First, for all values of $b$ and $d$, this approach recovers at least the state-of-the-art regret guarantees. Second, our approach additionally obtains these results while being \textit{agnostic} to the values of both $b$ and $d$. This leads to the first algorithm that naturally adapts to an \textit{unknown} range of noise $b$ and leads to significant improvements in a moderate and low-noise regime.
	Third, our approach also obtains a remarkable improvement over the state-of-the-art \SOO algorithm when the noise is very low which includes the case of optimization under deterministic feedback ($b=0$).
	There, under our minimal local smoothness assumption, this improvement is of exponential magnitude and holds for a class of functions that covers the vast majority of functions that practitioners optimize ($d=0$).
	We  show that our algorithmic improvement is borne out in  experiments as we empirically show faster convergence on common benchmarks.
\end{abstract}

	\begin{keywords}
		optimization, tree search, deterministic feedback, stochastic feedback
	\end{keywords}
	
	%\todov{make again the citations in the norm al way with parenthesis}
	%
	%\vspace{-0.2em}
		%\vspace{-0.2em}
	\section{Introduction}
%	\vspace{-0.2em}
%	\todov{did we add all the ref from the mail from the guy?}
	% \todov{actually in the proof of stroquool we assume the main function is in $[0,1]$, we need to undersstand the implications of that, should be quite small, but maybe there is a nicer way to write the proof and deal with the high probability event}
	In budgeted function optimization, a learner optimizes a function $f: \dom \rightarrow\ \Real$ having access to a number of evaluations limited by $\timeHorizon$. For each of the $\timeHorizon$ evaluations (or rounds), at round~$t$,  the learner picks an element $x
	_t\in\dom$  and observes  a real number $y_t$, where $y_t=f(x_t)+\epsilon_t$, where $\epsilon_t$ is the noise. Based on $\epsilon_t$,  we distinguish two feedback cases:
		%\vspace{-0.2em}
			%\vspace{-0.2em}
	\begin{description}
	\item[Deterministic feedback] 	The   evaluations are  noiseless, that is $\forall t$, $\epsilon_t=0$ and $y_t=f(x_t)$.
	Please refer to the work by \cite{defreitas2012exponential}  for a  motivation, many applications, and references on the importance of the case $b=0$.
		%\vspace{-0.2em}
	\item[Stochastic feedback] The  evaluations  are
	perturbed by a noise of range $b\in\Real_+$\footnote{Alternatively, we can turn the boundedness assumption into a sub-Gaussianity assumption equipped with a variance parameter equivalent to our range $b$.}: 
	At any round, $\epsilon_t$ is a random variable,  assumed
	independent of the noise at previous rounds,
	\begin{equation} \label{eq:store}
	\Exp \left[ y_t|x_t\right] = f(x_t)\quad \text{ and } \quad |y_t-f(x_t)|\leq b.
	\end{equation}
	\end{description}
		%\todomi{boundedness or sub-Gaussianity}
		%\todov{write something about both feedback are importNT!!}
		The objective of the learner is to return an element $x(\timeHorizon)\in\dom$ with largest possible value $f\left(x(\timeHorizon)\right)$ after the $\timeHorizon$ evaluations. 
	$x(\timeHorizon)$ can be different from the last evaluated element~$x_n$. 
	More precisely,
	the performance
	of the algorithm is the loss (or simple regret), 
	\[
	r_\timeHorizon \triangleq \sup_{x\in\mathcal X} f\left(x\right)- f\left(x(\timeHorizon)\right)\,.
	\]
	We consider the case that the evaluation is costly. Therefore, we minimize~ $r_\timeHorizon$ as a function of $\timeHorizon$. We assume that there exists at least one point
	$x^\star\in \mathcal X$  such that
	$f(x^\star) = \sup_{x\in\mathcal X} f(x)$.
	%

	%to minimize the simple regret, i.e., the difference between the value of the returned element and the optimal value of the function. 
	\paragraph{Prior work} Among the large work on optimization, we focus on algorithms that perform well under \emph{minimal} assumptions as well as minimal knowledge about the function. Relying on minimal assumptions means that we target functions that are particularly hard to optimize. For instance, we may not have access to the gradients of the function, gradients might not  be well defined, or the function may not be continuous.
	While some prior works assume a \emph{global} smoothness of the function~\citep{Pinter13GO,strongin2000global,Hansen2003GO,Kearfott2013RG}, another line of research assumes only a \emph{weak}/\emph{local} smoothness around one global maximum~\citep{Auer07IR,Kleinberg08MA,bubeck2011x}.  
	%\todom{fix the smoothness and highlight our smoothness here and elsewhere}
	However, within this latter group, some algorithms require the knowledge of the local smoothness such as \HOO~\citep{bubeck2011x}, \Zooming~\citep{Kleinberg08MA}, or \DOO~\citep{Munos11OO}. Among the works relying on an \emph{unknown} local  smoothness,
	\SOO~\citep{Munos11OO,kawaguchi2016global} 
	%\todom{name of SOO and noname for the other?\\ Victor I guess the other paper is still SOO for me :)} \todom{then I merged the refs} 
	represents the state-of-the-art for the deterministic feedback.
	For the stochastic feedback, \StoSOO~\citep{Valko88SS} extends \SOO for a limited class of functions. \POO~\citep{Grill15BB} provides more general results. 
	We classify the most related algorithms in the following table.
	%~\ref{tabloo}.
	%\todov{seems a bit fast, the description of algo above}
	%\begin{wraptable}{r}{.73\textwidth}
	\begin{table}[h!]
		\vspace{-.1cm}
		\center
		\begin{tabular}{|>{\columncolor[gray]{.95}}lcc|}
			\hline
			\textbf{smoothness} & \textbf{deterministic} & \textbf{stochastic}\\
			\hline
			known& \DOO & \Zooming, \HOO\\
			unknown &\Direct, \SOO, \SequOOL & \StoSOO, \POO, \StroquOOL\\
			\hline
		\end{tabular}
		%\caption{Algorithms with a local smoothness assumption}
		\label{tabloo}\vspace{-.2cm}
	\end{table} %\todom{how about adding POO(HTC) to the table since we cite that paper?}
	
	\noindent Note that for more specific assumptions on the smoothness, some works study optimization without the knowledge of smoothness:  \Direct~\citep{jones1993lipschitzian} and others~\citep{slivkins2011multi-armed,bubeck2011lipschitz,Malherbe17GO} tackle  Lipschitz optimization.
	
	Finally, there are algorithms that instead of simple regret, optimize
	\emph{cumulative regret}, like \HOO~\citep{bubeck2011x} or \HCT \citep{azar2014online}.
	Yet,  none of them adapts to the unknown smoothness and 
	compared to them, the algorithms for simple regret that are able to 
	do that, such as \POO or  our \StroquOOL, need to explore significantly more, which negatively impacts  their cumulative regret~\citep{Grill15BB,locatelli2018adaptivity}.
	
	\paragraph{Existing tools}\textcolor{gray}{\textbf{Partitionining and near-optimality dimension\ }}  
	%\todom{"double" title?}
	%In order to give a first intuitive understanding of the contributions in this work,
	%we now briefly introduce some notion that will be detailed rigorously in Section~\ref{sectionsorigourous}.
	%[more detail in Section~\ref{sectionsorigourous} ]
	As in most of the previously mentioned work, the search domain $\dom$ is partitioned into cells at different scales (depths), i.e., at a deeper depth, the cells are smaller but still cover all of $\dom$. The objective of many algorithms is  to explore the value of~$f$ in the cells of the partition and determine at the deepest \textit{depth} possible in which cell is \emph{a} global maximum of the function. The notion of near-optimality dimension~$d$ aims at capturing the smoothness of the function and characterizes the complexity of the optimization task. 
	We adopt the definition of near-optimality dimension given recently by~\citet{Grill15BB} that unlike \cite{bubeck2011x}, \cite{Valko88SS}, \cite{Munos11OO}, and \cite{azar2014online},
	avoids topological notions and does not artificially attempt to separate the difficulty of the optimization from the partitioning. For each depth $h$, it simply counts the number of near-optimal cells $\mathcal{N}_h$, cells whose value is close to~$f(x^\star)$, and determines how this number evolves with the depth $h$.
	The smaller~$d$, the more accurate the optimization should be.
	%\todov{rewrite the following paragrph with new EWRL paper. Why are we discussing this here?}
	% Furthermore, while~\cite{Grill15BB} provides a meta-procedure (POO), that scales
	%with this notion of smoothness, it assumes that it has an access to an algorithm that knows the smoothness \emph{and} scales with this notion of smoothness as well.
	%Even though~\cite{Grill15BB} claims that \HOO can be modified to fulfill this purpose, the proof of this is not given and  simple 
	%modifications of the analysis in~\cite{bubeck2011x} do not seem to extend to the definition of near-optimality dimension given by~\cite{Grill15BB}.
	%In our paper, we give an algorithm for which we 
	%prove that its loss scales with this new near-optimality dimension \emph{without the knowledge of it.}
	%Moreover, our algorithm is very simple and it is not a meta-procedure that would explicitly assemble outputs of other algorithms equipped with guesses of the smoothness.\vspace{-.1cm}
	\paragraph{New challenges} \textcolor{gray}{\textbf{Adaptations to different data complexities\ }} 
	%This work pays a particular attention in designing algorithms that demonstrate near-optimal behaviors under data-generating processes of different nature. Similarly, in the multi-armed bandit framework~\cite{Bubeck12BB} designed algorithms whose cumulative regret is near optimal under adversarial input as well as stochastic input (see also~\cite{Seldin14OP} for a practical approach). In the context of prediction with expert advice~\cite{DeRooij2014FT} follows a similar approach. In this paper, we talk about easy data for two things.
	As did~\cite{Bubeck12BB}, \cite{Seldin14OP}, and~\cite{DeRooij2014FT} in other contexts, we design algorithms that demonstrate near-optimal behavior under data-generating processes of different nature, obtaining the \textit{best of all these possible worlds}. 
	In this paper, we consider the two following data complexities for which we bring new improved adaptation.\vspace{-.1cm}
	\begin{itemize}[leftmargin=.5cm]
		\item \textit{near-optimality dimension $d=0$}: In this case, the number of near-optimal cells is simply bounded by a constant that does not depend on $h$. As shown by~\cite{Valko88SS}, if  the function is lower-  and upper-bounded by two polynomial envelopes of the same order  around a global optimum, then $d=0$. 
		As discussed in the book of~\citet[section 4.2.2]{munos2014from}, $d=0$ 
		covers the vast majority of functions that practitioners optimize and the functions with $d>0$ given as examples 
		in prior work \citep{bubeck2011lipschitz,Grill15BB,Valko88SS,Munos11OO,shang2019general} are carefully engineered.
		Therefore, the case of $d=0$ is of practical importance. However, even with deterministic feedback, the case $d=0$ with unknown smoothness has not been known to have a learner with a near-optimal guarantee. In this paper, we also provide that. Our approach not only adapts very well to the case $d=0$ and $b\approx 0$, it also provides an \emph{exponential} improvement over the state of the art for the simple regret rate.  \vspace{-.05cm}
		\item \textit{low or moderate noise regime}: When facing a noisy feedback, most algorithms assume that the noise is of a \emph{known} predefined range, often using $b=1$ hard-coded in their use of upper confidence bounds.  Therefore, they \emph{cannot} take advantage of low noise scenarios. 
		Our algorithms have a regret that scales with the range of the noise~$b$, \emph{without a prior knowledge of~$b$.} Furthermore, our algorithms ultimately recover the new improved rate of the deterministic feedback suggested in the precedent case ($d=0$).
	\end{itemize}\vspace{-.1cm}
	%If the function is $d=0$ and it is important as discussed in~\cite{Valko88SS}.
	%Having an algorithm that performs well under deterministic feedback and stochastic feedback. Therefore it should adapt to low noise.
	\paragraph{Main results} \textcolor{gray}{\textbf{Theoretical results and empirical performance\ }}
	We consider the optimization under an unknown \textit{local} smoothness. We design two algorithms, \SequOOL for the deterministic case in Section~\ref{deterministic} and \StroquOOL for the stochastic one in Section~\ref{stochastic}. % To sum up, the main contributions of this paper are the following.
	\begin{itemize}[leftmargin=.5cm]
		\item \SequOOL is the first algorithm to obtain a loss $e^{-\tilde\Omega(n)}$ under such minimal assumption, with deterministic feedback. The previously known \SOO~\citep{Munos11OO} is only proved to achieve a loss of $\cO(e^{-\sqrt{n}})$. 
		Therefore, \SequOOL achieves, up to log factors, the result of \DOO that  \emph{knows the smoothness}. 
		Note that \cite{kawaguchi2016global} designed a new version of \SOO, called \texttt{LOGO}, that gives more flexibility in exploring more local scales but it was still only shown to achieve a loss of $\cO(e^{-\sqrt{n}})$ despite the introduction of a new parameter.
		Achieving exponentially decreasing regret had previously only been achieved in setting with more assumptions~\citep{defreitas2012exponential,Malherbe17GO,Kawaguchi15BO}. 
		For example, \cite{defreitas2012exponential} achieves $e^{-\tilde\Omega(n)}$ regret assuming several assumptions, for example that 
		the function $f$ is sampled from the Gaussian process with four times differentiable kernel along the diagonal. The consequence of our results is that to achieve $e^{-\tilde\Omega(n)}$ rate, none of these strong assumptions is necessary.
		\item \StroquOOL  recovers, in the stochastic feedback, up to log factors, the results of \POO, 
		for the same assumption. However, as discussed later, \StroquOOL is a simpler approach than \POO which additionally features much simpler and elegant analysis.	
		%However, as discussed above, \POO uses many \HOO{}s as subroutines, for which it is not known if they can be shown to deliver what \POO requires in order to provably work. Furthermore, unlike \POO, \StroquOOL does not run a grid over smoothness and therefore it preserves the simplicity of \SequOOL.     
		\item  \StroquOOL  adapts naturally to different noise range, i.e., the various values of $b$.
		\item \StroquOOL obtains the \textit{best of both worlds} in the sense that \StroquOOL also obtains, up to log factors, the new optimal rates reached by \SequOOL in the deterministic case. \StroquOOL obtains this result without being aware a priori of the nature of the data, only for an additional $\log$ factor.
		Therefore, if we neglect the additional log factor, we can just have a single algorithm, \StroquOOL, that performs well in both deterministic and stochastic case, without the knowledge of the smoothness in either one of them.
		\item In the numerical experiments,   \StroquOOL naturally adapts to lower noise. \SequOOL obtains an exponential regret decay when $d=0$ on common benchmark functions.
	\end{itemize}
	% 
	%
	%\todomi{maybe say that for an additional log factor, we can just have a single algorithm, \StroquOOL and then we still present det case because we are cool and pedagogical}
	%\todovout{highlight more the simplicity of algorithm and proof}
	\paragraph{Algorithmic contributions and originality of the proofs} \textcolor{gray}{\textbf{Why does it work?}}
	Both \SequOOL and \StroquOOL 	are simple and parameter-free algorithms. 
%	The analysis is also simple and self-contained and does not need to rely on results of other algorithms knowing the smoothness. We now explain the reason behind this combined simplicity and efficiency.	
	Moreover, both \SequOOL and \StroquOOL are based on a new core idea that the search for the optimum should progress strictly \textit{sequentially}  from an exploration of shallow depths (with large cells) to deeper depths (small and localized cells). This is different from the standard approach in \SOO, \StoSOO, and the numerous extensions that  \SOO  has inspired~\citep{Busoniu13OP,wang2014bayesian,Al-Dujaili18MO,Qian16SS,Kasim2016ID,Derbel15SO,Preux14BA,bucsoniu2014consensus,kawaguchi2016global}.
We come up with our idea by identifying a bottleneck in \SOO~\citep{Munos11OO} and its extensions that open all depths \textit{simultaneously} (their Lemma $2$). However, in general, we show that the improved exploration of the shallow depths is beneficial for the deeper depths and therefore, we always complete the exploration of depth $h$ before going to depth $h+1$. As a result, we design a more \textit{sequential} approach that simplifies our Lemma~$2$.% to the point of being natural and straightforward. 
	
	This desired simplicity is also achieved by  being the first to adequately leverage  the reduced and natural set of assumptions introduced in the \POO paper~\citep{Grill15BB}. 
	This adequate and simple leverage should not conceal the fact that our local smoothness assumption is minimal and already way weaker  than global  Lipschitzness.    
	Second, this leveraging was absent in the analysis for \POO which additionally relies on the 40 pages proof of \HOO; see \citealp{shang2019general} for a detailed discussion. Our proofs are succinct\footnote{The proof is even redundantly written twice for \StroquOOL and \SequOOL for completeness} while obtaining performance improvement ($d=0$) and a new adaptation ($b=0$). 
	To obtain these, in an original way, our theorems are now based on solving a transcendental equation with the  \textbf{Lambert $W$ function}.
	For \StroquOOL, a careful discrimination of the parameters of the equation leads to  optimal rates both in the deterministic and stochastic case.
	
	Intriguingly, the amount of evaluations allocated to each depth $h$ follows a \textbf{Zipf law}~\citep{Powers98AE}, that is,
	each depth level $h$ is simply  pulled inversely proportional to its depth index $h$. %This is a simple but not a straightforward idea.  
	It provides a parameter-free method to explore the depths without knowing the bound $C$ on the number of optimal cells per depth ($\mathcal{N}_h = C \propto \timeHorizon/h$ when $d=0$) and obtain a maximal optimal depth $h^\star$ of order $n/C$.
	A Zipf law has been used by~\cite{Audibert10BA} and~\cite{abbasi2018best} in pure-exploration bandit problems but without any notion of depth in the search.
	In this paper, we introduce the Zipf law to tree search.
	
	Finally, another novelty is that were are \emph{not using upper bounds} in \StroquOOL (unlike \StoSOO, \HCT, \HOO, \POO), which results in  the contribution of \textit{removing the need to know the noise amplitude.} 
	\section{Partition, tree, assumption, and near-optimality dimension}
	\label{sectionsorigourous}\vspace{-0.05cm}
	\paragraph{Partitioning}
	The 
	hierarchical partitioning
	$\partition
	=
	\{\partition_{h,i}\}_{h,i}$
	we consider is similar to the ones introduced in prior work~\citep{Munos11OO,Valko88SS,Grill15BB}: For any
	depth
	$h\geq
	0$
	in the tree representation, the set
	$\{\partition_{h,i}\}_{1\leq i \leq I_h}$
	of
	\textit{cells}  (or nodes)
	forms a partition of
	$\dom$, where
	$I_h$
	is the number of cells at depth
	$h$. At depth $0$, the root of the tree, there is a single cell
	$\partition_{0,1}=\dom$. A cell $\partition_{h,i}$ of depth $h$ is split into children subcells $\{\partition_{h+1,j}\}_j$
	of depth $h+1$. %We refer to the
	%standard partitioning as to one where each cell is split into regular same-sized subcells~\cite{Preux14BA}.
	As \cite{Grill15BB}, our  work  defines   a  notion  of  near-optimality  dimension $d$
	that  does  not  directly  relate
	the  smoothness  property  of
	$f$
	to  a  specific  metric
	$\ell$
	but \emph{directly}
	to  the
	hierarchical  partitioning
	$\partition
	$. Indeed, an interesting
	fundamental quest is to determine a good characterization of the difficulty of the optimization
	for an algorithm that uses a given hierarchical partitioning of the space $\dom$
	as its input~\citep[see][for a detailed discussion]{Grill15BB}.
	%\todovout{also how to write sets like $\{\partition_{h,i}\}$}
	Given a global maximum $x^\star$ of $f$,
	$i^\star_{h}$
	denotes the index of the unique cell of depth
	$h$
	containing
	$x^\star$
	, i.e., such that
	$x^\star\in \partition_{h,i^\star_{h}}$.
	% With an abuse of notation we will sometimes simply write $i^\star$ when the depth $h$ is clear from the context. 
	We follow the work of~\cite{Grill15BB} and state a \emph{single} assumption on
	both the partitioning
	$\partition$
	and the function~$f$.
	\begin{assumption}\label{as:smooth}
		For any global optimum $x^\star$, there exists $\nu>0$ and $\rho\in (0,1)$ such that $\forall h \in \Integer $,
		$\forall x \in \mathcal P_{h,i^\star_h},  f(x)    \geq f(x^\star) - \nu\rho^h.$
	\end{assumption}
	%
	% Another important quantity emerges to characterize the function $f$, the constant $C$.
	\begin{definition}\label{def:neardim}
		For any $\nu > 0$, $C>1$, and $\rho \in (0,1)$, the \textbf{near-optimality dimension}\footnote{\cite{Grill15BB} define  $d(\nu,C,\rho)$ with the constant 2 instead of 3.  3 eases the exposition of our results.} $d(\nu,C,\rho)$ of~$f$ with respect to the partitioning
		$\mathcal P$ and with associated constant $C$,
		is 
		%Near-optimality dimension is 
		%
		\[
		d(\nu,C,\rho) \triangleq    \inf \left\{d'\in\Real^+:  \forall h \geq 0, ~\mathcal N_h(3\nu\rho^h) \leq     C\rho^{-d'h} \right\}\!\CommaBin
		\]
		where
		$\mathcal N_h(\epsilon)$ is the number of cells
		$\mathcal P_{h,i}$    of depth $h$ such that
		$\sup_{x\in \mathcal P_{h,i}} f(x) \geq    f(x^\star) - \epsilon$.
	\end{definition}
	%
	%\todov{what about fixing $C\geq 1$ directly in the definition as it does not change the value of  $d$}
	%We denote $C_1= \max(1,C)$.
	%
	%
	\paragraph{Tree-based learner}
	Tree-based exploration or tree search algorithm is an approach that has been widely applied to optimization as well as bandits or planning \citep{kocsis2006bandit,coquelin2007bandit,hren2008optimistic}; see \cite{munos2014from} for a survey.
	%
	%\todov{actually the next paragrpah is true for StroQuool also so it will quite possibly written earlier in the draft}
	At each round, the learner selects a cell $\partition_{h,i}$ containing a predefined representative element $x_{h,i}$ and asks for its evaluation.
	%For simplicity and without loss of generality we assume that an element $x_{h,i}$ in fixed in advance for each cell $\partition_{h,i}$ and that it will be the element picked by the learner at each evaluation of the cell $\partition_{h,i}$.
	We denote its value as $f_{h,i}\triangleq f(x_{h,i})$. We use
	$\pullsNumber_{h,i}$ to denote the total number of evaluations  allocated by the learner to the cell $\partition_{h,i}$.
	Our learners  collect the evaluations of~$f$ and organize them in a tree structure $\tree$ that is simply a subset of  $\partition$: $\tree \triangleq \{\partition_{h,i}\in\partition: \pullsNumber_{h,i}>0 \}$, $\tree\subset\partition$.
	%\todov{do we even need to define the tree $\tree$ in the end?}
	%
	%If the function
	%$f$
	%were deterministic, we only need to evaluate a node one time while several evaluation might be needed in the stochastic case. 
        For the noisy case, we also define the estimated value of the cell 
	$\hat f_{h,i}$. Given the $\pullsNumber_{h,j}$ evaluations $y_1,\ldots,y_{\pullsNumber_{h,j}},$ we have $ \hat f_{h,i} \triangleq \frac{1}{\pullsNumber_{h,j}} \sum_{s=1}^{\pullsNumber_{h,j}} y_s$,
	the  empirical  average  of  rewards  obtained  at this cell.
	We say that the learner \textit{opens} a cell $\partition_{h,i}$ with $m$ evaluations if it asks for $m$ evaluations from each of the children cells of cell $\partition_{h,i}$.
	In the  deterministic feedback,  $m=1$.  
	For the sake of simplicity, the bounds reported in this paper are in terms of the total number of openings~$\timeHorizon$,  instead of evaluations. The  number of
	function evaluations is  upper bounded by $\maxNumberChildren\timeHorizon$, where
	$\maxNumberChildren$ is the maximum number of children cells of any cell in $\partition$. 
		%(since each expansion generates at most $\maxNumberChildren$
	%children that need to be evaluated).
	%\todov{improve the paragraph above?}
	%

%	\noindent\textbf{The Lambert $W$ function\phantom{a}}
	Our results use the \textbf{Lambert~$W$ function}. 
	%In all generality, it is a set of functions, namely the branches of the inverse relation of the function $f(z) = ze^z$ i.e., $ z = f^{-1}( z e^z ) = W ( z e^z )$.
	Solving for the variable $z$, the equation $A = ze^z$ gives $z=W(A)$. Notice that $W$ is multivalued for $z\leq 0$. Nonetheless, in this paper, we consider $z\geq 0$ and $W(z)\geq 0$, referred to as  the \textit{standard} $W.$
	Lambert $W$  cannot be expressed with elementary functions.
	Yet, due to \citet{Hoorfar08IO}, we have  $W(z)= \log\left(z/\log z\right) + o(1)$. %$W$ has applications in physics and applied mathematics~\citep{Corless96LW}.
	
	%\noindent
	 Finally,
	let $[a:c]=\{a,a+1,\ldots,c\}$ with
	$a,c\in\Integer$, $a\leq c$, and $[a]=[1:a]$. Next,
	$\log_{d}$ denotes the logarithm in base $d$, $d\in\Real$. Without a subscript, $\log$ is the natural logarithm in base $e$.
	\section{Adaptive deterministic optimization and improved rate} %\todomi{from the title it looks we say nothing for $d>0$}
	\label{deterministic}
	\subsection{The {\SequOOL} algorithm}
	\begin{wrapfigure}{r}{.55\textwidth}
		\vspace{-0.4cm}	\centering%\vspace{-.6cm}
		\framebox{
			~    \begin{minipage}{.5\textwidth}
				
				\textbf{Parameters:} %the domain $\dom$.
				$\timeHorizon$, $\partition
				=
				\{\partition_{h,i}\}$\\[.05cm] 
				\textbf{Initialization:}
				Open $\partition_{0,1}$.             $\hmax \gets  \left\lfloor\timeHorizon/\bar\log(\timeHorizon)\right\rfloor\cdot$ \\[.1cm]
				\textbf{For} $h=1$ to $\hmax$
				
				\vspace{.05cm}
				
				$\quad$ Open %\todov{at most?} 
				$\left\lfloor \hmax/h\right\rfloor $ cells $\partition_{h,i}$ of depth $h$ \\\vspace{.05cm}$\qquad$ with largest values  $ f_{h,j}$.%\\[.03cm]
				% $K$ children  $(h+1,i_k)_{k\in[K]}$ to $\tree$
				
				\textbf{Output} $ x(\timeHorizon) \gets \argmax\limits_{x_{h,i}: \partition_{h,i}\in \tree}f_{h,i}$.
			\end{minipage}    
		}
		\caption{The \SequOOL algorithm}\label{fig:hotcold}\vspace{-.25cm}
	\end{wrapfigure}
	The \underline{Sequ}ential Online Optimization aLgorithm \SequOOL is described in Figure~\ref{fig:hotcold}.
	%The idea is to expand cells sequentially with $\left\lfloor \frac{\hmax}{h}\right\rfloor$  expansions at depth $h$.  
	%\todov{i guess now we dont want t but phases}.
	\SequOOL explores the depth sequentially,   one by one, going deeper and deeper with a decreasing number of cells opened per depth~$h$, $\left\lfloor \hmax/h\right\rfloor $ openings at depth~$h$.  The maximal depth that is opened is $\hmax$.
	The analysis of \SequOOL shows that it is useful that $\hmax \triangleq \left\lfloor \timeHorizon/\bar\log\, \timeHorizon\right\rfloor$, where $\bar\log\, \timeHorizon$ is the $\timeHorizon$-th harmonic number,
	$\bar\log\,\timeHorizon\triangleq\sum_{t=1}^\timeHorizon \frac{1}{t}$ with $\bar\log\,\timeHorizon\leq \log\timeHorizon+1$ for any positive integer~$\timeHorizon$.
	\SequOOL returns the element of the evaluated cell with the
	highest value, $x(\timeHorizon)=\argmax\limits_{x_{h,i}: \partition_{h,i}\in \tree}f_{h,i}$.
	%
	%\begin{figure}[h]
	%    \centering
	%    \framebox{
	%        ~~~~~~~~~    \begin{minipage}{.9\textwidth}
	%        
	%            \textbf{Parameters:} %the domain $\dom$.
	%            $\timeHorizon$, $\partition
	%            =
	%            \{\partition_{h,i}\}$\\[.05cm] 
	%            %\todov{actually do we need $K$ in the parameters? isn't it inside $\partition$ and we dont use $K$}
	%            %\todomi{index $\tree$ victor: actually do we need to index T?}
	%            %\todomi{if we use it in the paper}
	%            \textbf{Initialization:}
	%        %    $\tree \leftarrow \{(0,0)\}$(root node). 
	%Expand the root cell.     Set        $\hmax = \left\lfloor\frac{\timeHorizon}{\bar\log(\timeHorizon)}\right\rfloor\cdot$ \\[.1cm]
	%            \textbf{For} $h=1$ to $\hmax$
	%            
	%            \vspace{.05cm}
	%            %$\qquad$Among all leaves $(h,j)\in \leavesSet$ of depth $h$\\ 
	%            
	%            $\qquad$ Expand $\left\lfloor \frac{\hmax}{h}\right\rfloor $ cells $\partition_{h,i}$ of depth $h$ with largest values  $ f_{h,j}$.\\[.04cm]
	%            % $K$ children  $(h+1,i_k)_{k\in[K]}$ to $\tree$
	%        
	%        
	%            
	%        \textbf{Output} $ x(\timeHorizon)=\argmax\limits_{x_{h,i}: \partition_{h,i}\in \tree}f_{h,i}$.
	%
	%        \end{minipage}
	%        
	%    }
	%    \caption{The \SequOOL Algorithm}\label{fig:hotcold}
	%\end{figure}
	%
	%
	%\todomi{decide if you want to the have $K$ or keep a general partitioning}
	%
	%\todov{it seems you can always output the last expanded node or the best of its children}
	%\todomi{expanding the best of the nodes it's even better}
	%
	%
	We use the  budget of $n+1$ for the simplicity of stating our guarantees.  Notice that \SequOOL does not use more openings than that as
	\begin{align*}
	1+\sum_{h=1}^{\hmax}\left\lfloor \frac{\hmax}{h}\right\rfloor 
	\leq 1+\hmax\sum_{h=1}^{\hmax}  \frac{1}{h}
	= 1+\hmax  \bar\log\,\hmax
	\leq \timeHorizon+1.
	\end{align*}
%	
%
	%\phantomsection
	\begin{remark}
		The algorithm can be made anytime and unaware of $\timeHorizon$ using the classic `doubling trick'.
	\end{remark}
	\begin{remark}[More efficient use of the budget]\label{rem:effibud}
		Because of the use of the floor functions $\left\lfloor \cdot\right\rfloor$, the budget used in practice,
		$1+\sum_{h=1}^{\hmax}\left\lfloor \frac{\hmax}{h}\right\rfloor 
		$, can be significantly smaller than $\timeHorizon$. While this only affects numerical constants in the bounds, in practice, it can influence the performance noticeably. Therefore one should consider, for instance, having $\hmax$ replaced by $c\times\hmax$ with $c\in\Real$ and $c=\max \{c'\in\Real: 1+\sum_{h=1}^{\hmax}\left\lfloor \frac{c'\hmax}{h}\right\rfloor \leq \timeHorizon \}$. Additionally,
		the use the budget $\timeHorizon$ could be slightly optimized by taking into account that the necessary number of pulls at depth $h$ is actually $\min\pa{\left\lfloor \hmax/h\right\rfloor, K^h}$.
	\end{remark}
	%
	%
	%\todov{remark that the floor makes use lose a lot in practice so emprically find the best fitting constant}
	%
	%
	%\todom{what is $a_h$? - it's defined only later in the analysis}
	% \todomi{didn't we get rid of the phases?\\Victor: no its all about phases to my mind, its sequential: depth after depth, each depth is a phase to me\\
	%MV: that's why I thought, that's why we don't need extra term or extra letter, no? \\
	%Victor: so the thing that annoyed me is the notation $h^\star_h$ so i used $h^\star_l$ but after all $h^\star_h$ is maybe ok, what do you think MV? (i tried some curly experience without reaching the beautifulitude)
	%and also maybe we need two letter one for the real depth (like h for instance) and one for the `phase of the algo' (like l) but maybe not }
	\subsection{Analysis of \SequOOL}
	%
%	\todov{REviewer 1. The authors wrote the regret guarantee for b=d=0 as $\tilde O(e^{-n})$ or $\tilde O(\rho^n)$. However, the result in Theorem 3 only suggests a regret guarantee $\rho^{\Omega(n/log n)}$. The authors could be more clear about the use of the $\tilde O$ notation.}
	%
	For any global optimum $x^\star$ in $f$, let 
	$\depthOp_h$ be
	the depth of the deepest opened node containing~$x^\star$ 
	at the end of the opening of depth 
	$h$ by \SequOOL---an iteration of the \textbf{for} cycle. Note that~$\depthOp_{(\cdot)}$ is increasing. The proofs of the following statements are given in Appendix~\ref{app:firstone}.
	\begin{restatable}{lemma}{restalemhstar}\label{lem:hstar}
		For any global optimum $x^\star$ with associated $(\nu,\rho)$ as defined in Assumption~\ref{as:smooth}, for $C>1$, for any depth that $h \in [\hmax]$, if $ \hmax/h
		\geq  C\rho^{-d(\nu,C,\rho)h}$, we
		have
		$\depthOp_h= h$ with $\depthOp_0=0$.
	\end{restatable}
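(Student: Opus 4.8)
The plan is to prove the statement by induction on the depth $h$, strengthening it to the claim that the cell $\partition_{h,i^\star_h}$ containing $x^\star$ is actually \emph{opened} at iteration $h$ (which is exactly what $\depthOp_h=h$ means). First I would record a monotonicity observation that lets the induction run: since $\rho\in(0,1)$ and $d=d(\nu,C,\rho)\geq 0$, the quantity $C\rho^{-dh}$ is nondecreasing in $h$ while $\hmax/h$ is nonincreasing, so whenever the hypothesis $\hmax/h\geq C\rho^{-dh}$ holds at a given depth $h$ it also holds at every $h'\leq h$. Hence, to establish $\depthOp_h=h$ for the prescribed $h$, I may freely assume the inductive hypothesis at $h-1$. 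For the base case, initialization opens the root $\partition_{0,1}=\dom$, which contains $x^\star$, so the deepest opened node containing $x^\star$ after depth $0$ is the root and $\depthOp_0=0$.

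For the inductive step, I would assume $\depthOp_{h-1}=h-1$, i.e. that $\partition_{h-1,i^\star_{h-1}}$ was opened during iteration $h-1$. Since opening a cell evaluates every one of its children (with $m=1$ in the deterministic case), the child $\partition_{h,i^\star_h}$ containing $x^\star$ thereby receives its value $f_{h,i^\star_h}=f(x_{h,i^\star_h})$ and is a legitimate candidate at iteration $h$. It then remains to show that $\partition_{h,i^\star_h}$ is among the $\floor{\hmax/h}$ highest-valued depth-$h$ cells, so that \SequOOL opens it.

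The heart of the argument is a counting bound. By Assumption~\ref{as:smooth}, since $x_{h,i^\star_h}\in\partition_{h,i^\star_h}$, we have $f_{h,i^\star_h}\geq f(x^\star)-\nu\rho^h$. Any depth-$h$ cell $\partition_{h,i}$ with $f_{h,i}\geq f_{h,i^\star_h}$ then satisfies $\sup_{x\in\partition_{h,i}}f(x)\geq f_{h,i}\geq f_{h,i^\star_h}\geq f(x^\star)-\nu\rho^h\geq f(x^\star)-3\nu\rho^h$, so by Definition~\ref{def:neardim} the number of such cells is at most $\mathcal N_h(3\nu\rho^h)\leq C\rho^{-dh}$. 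Because $\mathcal N_h(3\nu\rho^h)$ is an integer and $C\rho^{-dh}\leq\hmax/h$ by hypothesis, this count is at most $\floor{\hmax/h}$. Consequently there are at most $\floor{\hmax/h}-1$ cells of value \emph{strictly} above $f_{h,i^\star_h}$, so $\partition_{h,i^\star_h}$ has rank at most $\floor{\hmax/h}$ and is opened at iteration $h$, yielding $\depthOp_h=h$ and closing the induction.

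The main obstacle is making this counting step airtight. One must (i) bridge the representative value $f_{h,i^\star_h}$—which may lie strictly below $\sup_{x\in\partition_{h,i^\star_h}}f(x)$—to the near-optimality threshold, which is precisely what Assumption~\ref{as:smooth} supplies, and (ii) cope with the floor and with possible ties in value so that $\partition_{h,i^\star_h}$ is actually selected rather than crowded out by an equally-valued competitor; the strict-versus-nonstrict bookkeeping above (bounding the number of \emph{strictly larger} cells by $\floor{\hmax/h}-1$) is what resolves the tie issue. Everything else is routine: the monotonicity of the hypothesis in $h$ and the deterministic evaluation rule that guarantees $\partition_{h,i^\star_h}$ has a value available at iteration $h$.
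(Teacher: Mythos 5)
Your proof is correct and follows essentially the same route as the paper's: induction on depth using that the lemma's hypothesis at depth $h$ propagates to all shallower depths, Assumption~\ref{as:smooth} to place the optimal cell's representative value within $\nu\rho^h$ of $f(x^\star)$, the near-optimality count $\mathcal N_h(3\nu\rho^h)\leq C\rho^{-dh}$ together with integrality of $\mathcal N_h$ to absorb the floor. The only cosmetic difference is that you count ranks directly (handling ties via the bound on all cells with value $\geq f_{h,i^\star_h}$) where the paper argues by contradiction; the content is identical.
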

	%
	%\todov{check in all the proof if we use $i^\star_h$}
	\noindent
	Lemma~\ref{lem:hstar} states that as long as  at depth $h$, \SequOOL opens more cells than the number of near-optimal cells at depth $h$, the cell containing $x^\star$ is opened at depth $h$. 
	%\todomi{it's fine if there is only one global maximum}
	%\todov{the bounds in theorem are somehow BIG, should we try to improve the presentation somehow?}
	%\todomi{YES please, for example, give the last line as a corollary, remove tilde, and $C_1$}
	%To state our theorem clearly we set the following notations $\tilde n= \hmax =\left\lfloor\frac{\timeHorizon}{\bar\log(\timeHorizon)}\right\rfloor$, $d_{\rho}=d\log(1/\rho)$ 
	%
	%
	\begin{restatable}{theorem}{restathsequool}\label{th:sequool}
		%\begin{theorem}\label{th:sequool}
		Let $\lambertW$ be the standard Lambert $\lambertW$ function (Section~\ref{sectionsorigourous}). 
		For any function~$f$, one of its global optima $x^\star$ with associated $(\nu,\rho)$, $C>1$, and near-optimality dimension $d=d(\nu,C,\rho)$, we  have,
		after $\timeHorizon$ rounds,     the simple regret of \SequOOL is bounded as follows:
		\begin{align*}
		&\text{\textbullet~  If $d=0$, }
		~~ r_\timeHorizon
		\leq
		%\nu\rho^{\frac{1}{C_1}\left\lfloor\frac{\timeHorizon}{\bar\log(\timeHorizon)}\right\rfloor }.
		\nu\rho^{\frac{1}{C} \left\lfloor\frac{\timeHorizon}{\bar\log\,\timeHorizon}\right\rfloor}.~~~ 
		&&\text{\textbullet~  If $d>0$,}        
		~~ r_\timeHorizon
		\leq
		\nu e^{ 
			-\frac{1}{d}\lambertW\left(\frac{d\log(1/\rho)}{C} \left\lfloor\frac{\timeHorizon}{\bar\log\,\timeHorizon}\right\rfloor \right)}.
		\end{align*}
		%\todov{actually $d_{rho} $ seems confusing because of the comparison with stroquool }
	\end{restatable}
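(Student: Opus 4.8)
The plan is to translate the simple regret into a bound on the deepest depth at which \SequOOL evaluates a cell containing a global optimum $x^\star$, and then to pin that depth down by inverting the condition of Lemma~\ref{lem:hstar}.

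First I would reduce regret to depth. Since the feedback is deterministic, $f_{h,i}=f(x_{h,i})$ and \SequOOL returns $x(\timeHorizon)=\argmax_{\partition_{h,i}\in\tree}f_{h,i}$, so $f(x(\timeHorizon))=\max_{\partition_{h,i}\in\tree}f_{h,i}$. If a cell $\partition_{h,i^\star_h}$ containing $x^\star$ belongs to $\tree$, then its representative satisfies $x_{h,i^\star_h}\in\partition_{h,i^\star_h}$, and Assumption~\ref{as:smooth} gives $f_{h,i^\star_h}\ge f(x^\star)-\nu\rho^{h}$, hence $r_\timeHorizon\le\nu\rho^{h}$. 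It therefore suffices to exhibit a deep evaluated cell of $x^\star$. Because opening a cell evaluates all of its children, if $\depthOp_{\hmax}$ denotes the deepest opened cell containing $x^\star$ at the end of the run, then the child of that cell containing $x^\star$ is an evaluated cell at depth $\depthOp_{\hmax}+1$, giving $r_\timeHorizon\le\nu\rho^{\,\depthOp_{\hmax}+1}$.

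Next I would locate that depth. By Lemma~\ref{lem:hstar} together with the monotonicity of $\depthOp_{(\cdot)}$, we have $\depthOp_{\hmax}\ge h^\dagger$, where $h^\dagger$ is the largest integer $h\in[\hmax]$ with $\hmax/h\ge C\rho^{-dh}$. The map $h\mapsto \hmax/h-C\rho^{-dh}$ is strictly decreasing (the first term decreases while $\rho^{-dh}=e^{dh\log(1/\rho)}$ increases), so $h^\dagger=\lfloor\tilde h\rfloor$ for the unique positive real root $\tilde h$ of $\hmax/h=C\rho^{-dh}$. For $d=0$ this reads $\hmax/h=C$, i.e. $\tilde h=\hmax/C$. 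For $d>0$, setting $u=dh\log(1/\rho)$ turns the root equation into $ue^{u}=\tfrac{d\log(1/\rho)}{C}\hmax$, so by definition of Lambert's function $u=\lambertW\bigl(\tfrac{d\log(1/\rho)}{C}\hmax\bigr)$ and $\tilde h=\tfrac{1}{d\log(1/\rho)}\lambertW\bigl(\tfrac{d\log(1/\rho)}{C}\hmax\bigr)$; since $\lambertW(y)\le y$ one checks $\tilde h\le\hmax/C\le\hmax$, confirming $h^\dagger\in[\hmax]$.

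Finally I would assemble the bound, letting the extra depth from opening absorb the integer rounding: $r_\timeHorizon\le\nu\rho^{\,h^\dagger+1}=\nu\rho^{\,\lfloor\tilde h\rfloor+1}\le\nu\rho^{\,\tilde h}$, where the last step uses $\lfloor\tilde h\rfloor+1>\tilde h$ and $\rho<1$. For $d=0$ this is exactly $\nu\rho^{\hmax/C}$, and for $d>0$, rewriting $\rho^{\tilde h}=e^{-\tilde h\log(1/\rho)}=e^{-\frac1d\lambertW(\frac{d\log(1/\rho)}{C}\hmax)}$ yields the stated bound, upon recalling $\hmax=\lfloor\timeHorizon/\bar\log\,\timeHorizon\rfloor$. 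The regret-to-depth reduction and the $d=0$ case are routine; I expect the main work to be the algebraic inversion of the transcendental condition $\hmax\ge Ch\rho^{-dh}$, which is precisely where $\lambertW$ enters, and the bookkeeping that rounding $\tilde h$ down to an integer depth is compensated by the $+1$ gained when the cell is opened. Checking $\tilde h\le\hmax$ so that $h^\dagger$ is a legal depth, and cleanly separating the $d=0$ and $d>0$ regimes, are the only other points needing care.
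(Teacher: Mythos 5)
Your proposal is correct and follows essentially the same route as the paper's own proof: reduce the regret to $\nu\rho^{\depthOp_{\hmax}+1}$ via Assumption~\ref{as:smooth} and the fact that the child of the deepest opened optimal cell is evaluated, then invoke Lemma~\ref{lem:hstar} with monotonicity of $\depthOp_{(\cdot)}$ at the floor of the root $\tilde h$ of $\hmax/h=C\rho^{-dh}$, and invert that transcendental equation with the Lambert $\lambertW$ function, letting the $+1$ from opening absorb the rounding of $\tilde h$. The only (cosmetic) differences are that you verify $\tilde h\le\hmax$ via $\lambertW(y)\le y$ rather than the paper's observation that $C\rho^{-d\tilde h}\ge 1$, and you identify the critical depth as the largest feasible integer via strict monotonicity of $h\mapsto \hmax/h - C\rho^{-dh}$ instead of the paper's direct chain $\hmax/\lfloor\tilde h\rfloor\ge \hmax/\tilde h=C\rho^{-d\tilde h}\ge C\rho^{-d\lfloor\tilde h\rfloor}$.
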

	\noindent
	For more readability, Corollary~\ref{c:readable} uses a lower bound on $W$ by~\citet{Hoorfar08IO}.
	%
	%	\todov{write it like the corollary at thened?}
	\begin{restatable}{corollary}{restacorohstar}	\label{c:readable}
		If $d>0$,  assumptions in Theorem~\ref{th:sequool} hold and
		$\tilde n \triangleq\left\lfloor\timeHorizon/\bar\log\,\timeHorizon\right\rfloor d\log(1/\rho)/C>e$,
		%\[
		%r_\timeHorizon
		%    ~\leq~
		%    \nu  \left(\frac{\max(C,1)\log\left(\timeHorizon d_\rho/\max(C,1)\right)}{\left\lfloor\frac{\timeHorizon}{\bar\log\,\timeHorizon}\right\rfloor d_\rho}\right)^{ \frac{1}{d}} .\]
		\[
		r_\timeHorizon
		\leq
		\nu  \left(\frac{\tilde n}{\log\left(\tilde n \right)}\right)^{- \frac{1}{d}}.\]
	%	\nu  \left(C/(d\log(1/\rho))\right)^{ \frac{1}{d}}\left(\log\left(\timeHorizon d\log(1/\rho)/C\right)\right)^{ \frac{1}{d}}
		%\left\lfloor\timeHorizon/\bar\log\,\timeHorizon\right\rfloor ^{ -\frac{1}{d}}  
	\end{restatable}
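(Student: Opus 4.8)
The plan is to derive Corollary~\ref{c:readable} directly from the $d>0$ branch of Theorem~\ref{th:sequool} by inserting an explicit lower bound on the Lambert $\lambertW$ function, so that no further analysis of \SequOOL is needed. Writing $\tilde n = \lfloor \timeHorizon/\bar\log\,\timeHorizon\rfloor\, d\log(1/\rho)/C$ as in the statement, I first observe that $\tilde n$ is exactly the argument of $\lambertW$ appearing in the theorem, so the theorem already reads $r_\timeHorizon \leq \nu\, e^{-\lambertW(\tilde n)/d}$. Because the exponent carries a negative sign, any lower bound $\lambertW(\tilde n) \geq g(\tilde n)$ immediately upgrades this to $r_\timeHorizon \leq \nu\, e^{-g(\tilde n)/d}$; the whole task thus reduces to producing a clean lower bound $g$ in closed form.

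The lower bound I would use is $\lambertW(x) \geq \log\!\left(x/\log x\right)$, valid for $x > e$, which is the estimate of \citet{Hoorfar08IO} alluded to earlier. To justify it I would start from the defining identity: if $w = \lambertW(x)$ then $x = w e^{w}$, and taking logarithms gives $w = \log x - \log w$, i.e. $\lambertW(x) = \log x - \log \lambertW(x)$. Since $\lambertW$ is increasing and $\lambertW(e) = 1$, for $x > e$ we have $\lambertW(x) > 1$, hence $\log \lambertW(x) > 0$ and therefore $\lambertW(x) = \log x - \log \lambertW(x) < \log x$. Feeding this back into the identity, $\log \lambertW(x) < \log\log x$ (here $\log x > 1$, so $\log\log x$ is well defined and positive), whence $\lambertW(x) = \log x - \log \lambertW(x) > \log x - \log\log x = \log(x/\log x)$.

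Applying this with $x = \tilde n > e$ --- which is precisely the hypothesis of the corollary, and which simultaneously guarantees that $\log\log\tilde n$ is meaningful --- the theorem bound becomes
\[
r_\timeHorizon \;\leq\; \nu\, e^{-\frac{1}{d}\log(\tilde n/\log\tilde n)} \;=\; \nu\left(\frac{\tilde n}{\log\tilde n}\right)^{-1/d},
\]
which is the claimed inequality. The only place calling for care is the two-step argument that first bounds $\lambertW(x)$ above by $\log x$ and then uses this to bound $\log\lambertW(x)$ from above; this is exactly where the threshold $x > e$ is genuinely needed, both to force $\lambertW(x) > 1$ (so the first step goes through) and to keep $\log\log x$ positive and well defined. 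Everything else is a direct substitution, so I do not anticipate any real obstacle beyond stating the $\lambertW$-estimate carefully.
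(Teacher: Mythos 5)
Your proposal is correct and follows essentially the same route as the paper: both take the $d>0$ bound of Theorem~\ref{th:sequool}, note that $\tilde n$ is exactly the argument of $\lambertW$, and apply the lower bound $\lambertW(x)\geq\log(x/\log x)$ for $x>e$ before substituting. The only difference is cosmetic --- the paper simply cites \citet{Hoorfar08IO} for this estimate, whereas you derive it from the identity $\lambertW(x)=\log x-\log\lambertW(x)$, which makes the argument self-contained but does not change the proof strategy.
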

	\subsection{Discussion for the deterministic feedback}
	%\todomi{change to the discussion subsection, and move the proof after to the appendix with the restatable}
	\paragraph{Comparison with \SOO}
	\SOO and \SequOOL are both for deterministic optimization without knowledge of the smoothness.
	The regret guarantees of \SequOOL are an improvement over \SOO. While when $d>0$ both algorithms achieve a regret $\tcO\left(\timeHorizon^{-1/d}\right)$, when $d=0$, the regret of \SOO is $\cO(\rho^{\sqrt{\timeHorizon}})$ while the regret of \SequOOL is $\rho^{\tilde\Omega(n)}$ which is a significant improvement. As discussed in the introduction and by \citet[Section~5]{Valko88SS}, the case $d=0$ is very common.
	As pointed out by~\citet[Corollary~2]{Munos11OO}, \SOO has to actually know whether $d=0$ or not
	to set the maximum depth of the tree as a parameter for \SOO. 
	%So \SOO has to actually  
	%know something about the smoothness to get the fast exponential rate in the easy case. 
	\SequOOL is fully adaptive, does not need to know any of this and actually gets a better rate.%
	\footnote{A similar behavior is also achieved by combining two \SOO algorithms, by running half of the samples for $d=0$ and half for $d>0$.
		However, \SequOOL does this naturally and gets a better rate when $d=0$.}
	%
	%\SequOOL is able to obtain an improved performance because of its sequential nature. 
	%It explore the different depth in a sequential manner while \SOO explore them simultaneously which means that even if at some point the shallower depth have been explored successfully, \SOO will continue to allocate a very significant part of its budget to there exploration. 
	
	The conceptual difference from \SOO is that \SequOOL is more sequential: For a given depth $h$, \SequOOL first opens cells at depth $h$ and then at depth $h+1$ and so on, without coming back to lower depths.
	%\todov{find a way to say it properly}
	Indeed, an opening at depth $h+1$  is based on the values observed while opening at depth~$h$. Therefore, it is natural and less wasteful to do the openings in a sequential order.
	Moreover, \SequOOL is more conservative as it opens  the lower depths more while \SOO opens every depth equally.
	%\footnote{\texttt{LOGO}~\citep{kawaguchi2016global} is a version of \SOO where the depths are not opened equally,  its regret is still $\cO\left(\rho^{\sqrt{\timeHorizon}}\right)$.}
	However from the perspective of \emph{depth},  \SequOOL is more aggressive as it opens depth as high as $\timeHorizon$,
	%\todom{looks like factorial :) }  
	while \SOO stops at $\sqrt{\timeHorizon}$.
	%\end{remark}
	%FLOG
	\paragraph{Comparison with \DOO}
	Contrarily to \SequOOL, \DOO  knows  the smoothness of the function that is used as input parameter $\tilde \nu= \nu$  and $\tilde \rho= \rho$. However this knowledge only improves the logarithmic factor in the current upper bound. When $d>0$, \DOO achieves a simple regret of $\cO\left(\timeHorizon^{-1/d}\right)$, when $d=0$, the simple regret  is of  $\cO\left(\rho^{\timeHorizon}\right)$.\vspace{-.05cm}
	%\end{remark}
	%\todov{compare with the japonese}
	%
	\paragraph{\DOO with multiple parallel $(\tilde \nu, \tilde\rho)$ instances?} 
	An alternative approach to \SequOOL, based on \DOO, which would also 
not require the knowledge of the true smoothness $( \nu, \rho)$,
	 is to run  $m$ multiple parallel instances of \DOO with different values for  $\tilde\nu$ and $\tilde\rho$. For instance, we could mimic the behavior of \POO~\citep{Grill15BB}, and run$m\triangleq\left\lfloor\log n\right\rfloor$  instances of \DOO, each with budget $n/\left\lfloor\log n\right\rfloor$, where, in instance $i\in[\left\lfloor\log n\right\rfloor]$, $\tilde\rho_i$ is set to $1/2^i$.
	Under the condition that $\rho\geq \tilde\rho_{\min}= 1/2^{\left\lfloor\log n\right\rfloor}\approx 1/n$,
	 among these $\left\lfloor\log n\right\rfloor$ instances, one of them, let us say that the $j$-th one, is such that we have $\tilde\rho_j=1/2^j\leq \rho\leq 1/2^{j-1}=2\tilde\rho_j$.	
	 This instance $j$ of \DOO therefore a $x(n)$ with a regret $\rho^{\tilde\Omega(n)}$.

	 However, in the case of
	   $\rho\leq \tilde\rho_{\min}= 1/2^{\left\lfloor\log n\right\rfloor}=1/n$, we  can only guarantee a regret $(\tilde\rho_{\min})^{\tilde\Omega(n)}$.
	Therefore, for a fixed $n$, this approach will fail to capture the case where $\rho\approx 0$ such as, for instance, the case  $\rho= e^{-n}$.
	Note that this argument still holds if the number of parallel instances $m=o(n)$.
Finally, the other disadvantage would be that  as in \POO, this alternative would use upper-bounds $\nu_{\max}$ and $\rho_{\max}$ that would appear in the final guarantees. 	
	%\todomi{discuss the lower bound of Munos 11 as a part of the discussion subsection that will be here}
	%
	\paragraph{Lower bounds} As discussed by~\cite{munos2014from} for $d=0$, \DOO matches the lower bound
and it is even comparable to the lower-bound for concave functions. While \SOO was not matching the bound of \DOO, 
with our result, we now know that, up to a log factor, it is possible to achieve the same performance as \DOO, 
\emph{without the knowledge of the smoothness.}

	% !TEX root = bartlett2019simple.tex
%\newpage
\section{Noisy optimization with adaptation to low noise}
\label{stochastic}
%
%Our sequential approach also gives a simple algorithm that adapts to the range of the noise.
%We use this approach to improve upon \StoSOO or \POO? 
%
%My current understanding is: We should have a balance between the number of cells that we explore per depth vs the number of samples we ask per cells.
%As always, when you don't know what is the right balance, you just try them all (so somehow we are close to \POO)
%
%So, for a given fixed depth, something like $2^i$ pulls and $\timeHorizon/2^i$ cells explored. with i between 1 and $\log(\timeHorizon)$.
%Then we have to throw some extra log n to fit in the budget because there are multiple depths.
%
%Maybe we can still have $\hmax \approx\timeHorizon$.
%
\subsection{The {\StroquOOL} algorithm}\label{s:stroalgo}
%
%\todov{if we keep the new version of Stroquool we might need to updqte the descrition}
%In this section we first motivate and then present the \StroquOOL algorithm.
\begin{wrapfigure}{r}{.53\textwidth}
	\vspace{-0.4cm}    \centering
	\framebox{
		~    \begin{minipage}{.48\textwidth}
			
			\textbf{Parameters:}  $\timeHorizon$, $\partition
			=
			\{\partition_{h,i}\}$\\[.05cm]
			\textbf{Init:}
			%    $\tree \leftarrow \{(0,0)\}$(root node). 
			Open $\hmax$ times cell $\partition_{0,1}$.  \\
			 $\hmax\gets \left\lfloor\frac{\timeHorizon}{2(\bar\log\timeHorizon+1)^2}\right\rfloor\CommaBin$ $\pmax\gets \left\lfloor\log\hmax\right\rfloor$.          \\[.45cm]
		%	\!\!\CommaBin \pmax = \left\lfloor\log_2\left( \hmax\right)\right\rfloor$
			% \hmax^p = \left\lfloor \frac{\hmax}{2^p}\right\rfloor   $     for $p\in[0:\pmax]$.         
			\textbf{For} $h=1$ to $\hmax$\vspace{.05cm}  \textit{\textbf{\textcolor{gray}{$\hfill\blacktriangleleft$  Exploration $\blacktriangleright$}}}
			
			~~        \textbf{For} $m=1$  to $\left\lfloor\hmax/h\right\rfloor$\vspace{.05cm}
			
			~~~~\, Open $\left\lfloor \frac{\hmax}{hm}\right\rfloor$  times the non-opened    \\         \phantom{aaaaaa}  
			cell $\partition_{h,i}$ with  the   highest values  $\hat f_{h,i}$
			\\  \phantom{aaaaaa}   and given that  $\pullsNumber_{h,i}\geq \left\lfloor \frac{\hmax}{hm}\right\rfloor\cdot$\\[.45cm]
			\textbf{For} $p= 0$ to $\pmax$ \textit{\textbf{\textcolor{gray}{\hfill $\blacktriangleleft$  Cross-validation $\blacktriangleright$}}} \vspace{0.05cm} 
			 
			$\quad$ \textbf{Evaluate} $\hmax/2$ times the 
			\textit{candidates:}\\
			\phantom{aaaaaa} $x(\timeHorizon,p) \gets \argmax\limits_{\left(h,i\right)\in \tree,\,\pullsNumber_{h,i}\geq 2^{p}} \hat f_{h ,i}$.

			\textbf{Output} $x(\timeHorizon) \gets \argmax\limits_{\{x(\timeHorizon,p),\,p\in[0:\pmax]\}} \hat f\left(x(\timeHorizon,p)\right)$
		\end{minipage}
	}
	\caption{The \StroquOOL algorithm}\label{fig:trotro}
%	\vspace{-.57cm}
\end{wrapfigure}
In the presence of noise, it is natural to evaluate the cells multiple times, not just one time as in the deterministic case.
The amount of times a cell should be evaluated to differentiate its value from the optimal value of the function depends on the gap between these two values as well as the range of noise. As we do not want to make \emph{any} assumptions on knowing these quantities, our algorithm tries to be robust to any potential values by not making a fixed choice on the number of evaluations. Intuitively, \StroquOOL implicitly uses  modified versions of \SequOOL, denoted \SequOOL{}$(p),$\footnote{Again, this is only for the intuition, the algorithm is \emph{not} a meta-algorithm over \SequOOL{}$(p)$s.} where each cell is evaluated $p$ times, $p\geq 1$, while in \SequOOL $p=1$. On one side, given one instance of \SequOOL{}$(p)$, evaluating more each cells ($p$ large) leads to a better quality of the mean estimates in each cell. On the other side, as a tradeoff, it implies that \SequOOL{}$(p)$ is using  more evaluations per depth and therefore is not able to explore deep depths of the partition. The largest depth explored is now $\cO(n/p)$.
\StroquOOL then \emph{implicitly} performs the same amount of evaluations as it would be performed by $\log n$ instances of \SequOOL{}$(p)$ each with a number of evaluations of $p=2^{p'}$, where we have $p'\in[0:\log n]$. %\todom{tricky sentence makes it similar to \POO\\ Victor: still tricky?}
The \underline{St(r)o}chastic sequential Online Optimization aLgorithm,
\StroquOOL, is described in Figure~\ref{fig:trotro}. 
Remember that `opening' a cell means `evaluating' its children.
The algorithm opens cells by sequentially diving them deeper and deeper from the root node $h=0$ to a maximal depth of $\hmax$.
At depth $h$, we allocate, in a decreasing fashion, different number of evaluations $\left\lfloor \hmax/(hm)\right\rfloor$  to the  cells with highest value of that depth, with $m$ from $1$ to $\left\lfloor \hmax/h \right\rfloor$. The best cell that has been evaluated at least $\cO(\hmax/h)$ times is opened with $\cO(\hmax/h)$ evaluations, the next best cells that have been evaluated at least $\cO(\hmax/(2h))$ times are opened with $\cO(\hmax/(2h))$ evaluations, the next best cells that have been evaluated at least $\cO(\hmax/(3h))$ times are opened with $\cO(\hmax/(3h))$ evaluations and so on, until some $\cO(\hmax/h)$ next best cells that have been evaluated at least once are opened with one evaluation.  
More precisely,   given, $m$ and $h$, we open, with$\left\lfloor \hmax/(hm)\right\rfloor$ evaluations, the $m$ non-previously-opened cells $\partition_{h,i}$ with highest values  $\hat f_{h,i}$ and given that  $\pullsNumber_{h,i}\geq\left\lfloor \hmax/(hm)\right\rfloor.$
% The maximum number of evaluations of any cell 
%is $2^{\pmax}$, with $2^{\pmax}=\cO(\hmax)$ as $\pmax \triangleq \left\lfloor\log_2\left( \hmax\right)\right\rfloor$. 
For each $p\in[0:\pmax\triangleq \left\lfloor\log_2\left( \hmax\right)\right\rfloor]$, the candidate output  $x(\timeHorizon,p)$ is the cell with highest estimated value that has been evaluated at least $2^p$ times,
$x(\timeHorizon,p) \triangleq \argmax\limits_{\left(h,i\right)\in \tree,\pullsNumber_{h,i}\geq 2^p} \hat f_{h ,i}$. 
% \[x_{n,p} = \argmax\limits_{\left(\hmax^p,i\right)\in \tree,\pullsNumber_{\hmax^p,i}\geq 2^p} \hat f_{\hmax^p ,i}.\] 
% Indeed, the deepest depth containing cells pulled at least $2^p$ pulls is $\hmax^p$.
% \todov{update it to the new version}
%
%\todov{Alexandra in her talk, talks about another way to output the best point thats looks very beautiful, the idea is ti output the deepest point as long as it is compatible with the shallower point. Doesnt change the theory too much but in could be interesting to check in practice\\
%    Thinking about Alexandra idea, i do not understand how it handle multiple global optimum but at the same time i do not know what is precisely the idea!\\
%Actually the one written in the current algo is not at all precise!!}
%
We set $\hmax\triangleq\left\lfloor \timeHorizon/(2(\bar\log\timeHorizon+1)^2)\right\rfloor\!.$ 
Then, 
\StroquOOL uses less  than $\timeHorizon$ openings, which we detail in Appendix~\ref{app:budstro}. %Concerning practical improvement, similar remarks as in Remark~\ref{rem:effibud} also apply here.
%
%\todov{maybe p is not a good notation choice, confusion with $\rho$?}
%\todov{comma and point in the algo definition?}
%    \todov{expand many times does not go well with current def, as it includes split}    
%    \todov{we have to prove that the maximum depth of guys pulls at least $2^p$ is ...}
%    \todov{prove that the budget is good}
%\todov{do we need to require that the ancestor has been pulled enough?}
%
\subsection{Analysis of \StroquOOL}\label{sub:anaStro}
%    
%        
%The proofs of this section use a similar structure to the ones for the deterministic feedback. Additionally, they take into account the uncertainty created by the noise. % while discriminating depending on its range $b$. 
The proofs of the following statements are given in Appendix~\ref{app:prooflemmastro} and~\ref{app:proofALLmastro}.
%\todomi{I'm still not convinced that the proof of this lemma is correct. You have a finite number of nodes for sure, but these can be \textbf{different} nodes over the 
%execution of the algorithm.}
For any  $x^\star\!\!,$  
$\depthOp_{h,p}$ is
the depth of the deepest opened node with at least $2^p$ evaluations  containing~$x^\star$ 
at the end of the opening of depth 
$h$ of \StroquOOL.% (an iteration of the \textbf{for} cycle in $h$).     
%    \todov{once the lemma with the event has converged we have to make a pass on lemma 3 and 4 to make its use good}
%\todomi{improve the statement of the following lemma}

%\todov{the condition on Lemma 5 do not seem to be weel written no? for h and p pair}
%\todov{write the lemma on the event and not on proabbility 1-delta}
\begin{restatable}{lemma}{restahstarSto}\label{lem:hstarSto}
	For any global optimum $x^\star$ with associated $(\nu,\rho)$ from Assumption~\ref{as:smooth}, any $C>1$, for any $\delta \in (0,1)$, on  event $\xi_\delta$
 defined in Lemma~\ref{l:event},
%	with probability at least $1-\delta$,
	 for any pair $(h,p)$ of depths $h$, and integer $p$ 
	 such that $h \in [\hmax]$, and $p\in[0:\log\left\lfloor\hmax/h\right\rfloor]$, we have that if $b \sqrt{\log(2\timeHorizon^2/\delta)/2^{p+1}} \leq \nu\rho^h$ and if $ \hmax/(4h2^p)
	\geq  C\rho^{-d(\nu,C,\rho)h}$, that
	$\depthOp_{h,p}= h$ with $\depthOp_{0,p}\triangleq 0.$
\end{restatable}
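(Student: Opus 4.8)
The plan is to argue by induction on the depth, running from the root down to the target depth $h$, and to prove at each depth the slightly stronger statement that $\depthOp_{h',p}=h'$ \emph{together with a lower bound on the number of evaluations with which $\partition_{h',i^\star_{h'}}$ is opened}, since it is this count that controls the next depth. First I would record that both hypotheses are monotone in the depth: as $\rho\in(0,1)$, the noise condition $b\sqrt{\log(2\timeHorizon^2/\delta)/2^{p+1}}\leq\nu\rho^{h'}$ only gets easier as $h'$ decreases, and rewriting the budget condition as $\hmax/(4\cdot 2^p)\geq C\,h'\rho^{-d h'}$ shows its right-hand side is increasing in $h'$, so both hold for every $h'\le h$ once they hold at $h$. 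The base case $\depthOp_{0,p}=0$ is the stated convention, and the root is opened $\hmax\ge 2^p$ times (using $2^p\le\lfloor\hmax/h\rfloor$).

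The core of the inductive step combines concentration with the near-optimality dimension. On the event $\xi_\delta$ of Lemma~\ref{l:event}, every cell evaluated at least $2^p$ times satisfies $|\hat f_{h',i}-f_{h',i}|\leq b\sqrt{\log(2\timeHorizon^2/\delta)/2^{p+1}}$, which by the noise hypothesis is at most $\nu\rho^{h'}$. Assumption~\ref{as:smooth} gives $f_{h',i^\star_{h'}}\geq f(x^\star)-\nu\rho^{h'}$, hence $\hat f_{h',i^\star_{h'}}\geq f(x^\star)-2\nu\rho^{h'}$, whereas any cell outside $\mathcal N_{h'}(3\nu\rho^{h'})$ has true value below $f(x^\star)-3\nu\rho^{h'}$ and therefore, if evaluated at least $2^p$ times, empirical value strictly below $f(x^\star)-2\nu\rho^{h'}\le \hat f_{h',i^\star_{h'}}$. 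Thus, among the cells with at least $2^p$ evaluations, $\partition_{h',i^\star_{h'}}$ empirically dominates every cell that is not $3\nu\rho^{h'}$-near-optimal, and there are at most $\mathcal N_{h'}(3\nu\rho^{h'})\leq C\rho^{-d h'}$ of the latter.

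It then remains to turn this empirical dominance into the conclusion that $\partition_{h',i^\star_{h'}}$ is \emph{opened with at least $2^p$ evaluations}. By the inductive hypothesis the parent $\partition_{h'-1,i^\star_{h'-1}}$ is opened with a number of evaluations comfortably exceeding $2^p$, which $\partition_{h',i^\star_{h'}}$ inherits as its own count $\pullsNumber_{h',i^\star_{h'}}$; hence $\partition_{h',i^\star_{h'}}$ becomes \emph{eligible} (satisfies $\pullsNumber_{h',i^\star_{h'}}\ge\lfloor\hmax/(h'm)\rfloor$) already at an iteration $m$ that is a constant fraction of $\lfloor\hmax/(h'2^p)\rfloor$. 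From that iteration onward, whenever $\partition_{h',i^\star_{h'}}$ is not yet opened the algorithm must open some other eligible cell of empirical value at least $\hat f_{h',i^\star_{h'}}$ and with at least $2^p$ evaluations; by the previous paragraph every such cell is $3\nu\rho^{h'}$-near-optimal, and there are at most $C\rho^{-d h'}$ of them. Since the iterations whose opening size is at least $2^p$ number $\lfloor\hmax/(h'2^p)\rfloor\geq 4C\rho^{-d h'}$, the factor $4$ leaves strictly more such iterations than the $C\rho^{-d h'}$ near-optimal competitors plus the few ``warm-up'' iterations preceding eligibility, so $\partition_{h',i^\star_{h'}}$ is opened while the opening size is still at least $2^p$; this also hands it enough evaluations to feed depth $h'+1$, closing the induction and giving $\depthOp_{h,p}=h$.

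The delicate point, and the main obstacle, is precisely this bookkeeping of inherited evaluation counts: one must track not just \emph{that} $\partition_{h',i^\star_{h'}}$ is opened but \emph{with how many} evaluations, because that number sets the eligibility threshold $\lfloor\hmax/(h'm)\rfloor$ at the next depth and hence how late $\partition_{h'+1,i^\star_{h'+1}}$ first becomes eligible. Guaranteeing across all depths that eligibility is reached below a fixed fraction of $\lfloor\hmax/(h'2^p)\rfloor$—so that the near-optimal competitors and the warm-up together never exhaust the available high-evaluation iterations—is exactly what the factor $4$ in the budget condition (together with the scale $3\nu\rho^h$ in Definition~\ref{def:neardim} rather than $2\nu\rho^h$) is there to absorb, and reconciling these constants with the floor functions is essentially the only real work.
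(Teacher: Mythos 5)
Your high-level architecture is the same as the paper's: induction over depths $h'\le h$, the concentration-plus-smoothness chain showing that on $\xi_\delta$ any cell with at least $2^p$ evaluations whose empirical mean dominates $\hat f_{h',i^\star_{h'}}$ must be $3\nu\rho^{h'}$-near-optimal, and a count of such competitors against $C\rho^{-dh'}$. That part of your argument is correct. The genuine gap is precisely the step you label ``bookkeeping'' and then do not carry out: your induction does not close. Your inductive \emph{conclusion} is only that $\partition_{h',i^\star_{h'}}$ is opened ``while the opening size is still at least $2^p$'' --- i.e.\ possibly with essentially exactly $2^p$ evaluations, at the last qualifying iteration --- whereas your inductive \emph{hypothesis} requires the inherited count to ``comfortably exceed'' $2^p$ so that eligibility arrives at a constant fraction of $\left\lfloor \hmax/(h'2^p)\right\rfloor$. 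These two are incompatible. Quantitatively: if the optimum enters depth $h'$ with $\lambda\cdot 2^p$ evaluations ($\lambda\ge 1$), it becomes eligible only around iteration $\hmax/(h'\lambda 2^p)$, the near-optimal competitors can consume up to about $C\rho^{-dh'}\le \hmax/(4h'2^p)$ further iterations, so the opening size you can guarantee it is only about $2^p/(1/\lambda+1/4)=\tfrac{4\lambda}{4+\lambda}\,2^p$, which is \emph{strictly smaller} than $\lambda 2^p$ for every $\lambda>0$ and even drops below $2^p$ once $\lambda<4/3$. The excess factor thus degrades additively at every level ($1/\lambda_{h'+1}\approx 1/\lambda_{h'}+h'2^pC\rho^{-dh'}/\hmax$), and after $h$ levels the guaranteed count is of order $\hmax/(h^2C\rho^{-dh})$ rather than the $\hmax/(4hC\rho^{-dh})$ that the hypothesis $\hmax/(4h2^p)\ge C\rho^{-dh}$ permits; the factor $4$ absorbs this loss only for constantly many depths, not for all $h\in[\hmax]$. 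So the ``warm-up'' iterations are not ``few'': their number is governed by the inherited count, which is exactly the quantity your induction fails to control.

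For comparison, the paper's proof confronts this same point by a one-shot contradiction at the single index $m$ defined as the largest integer with $2^p\le \hmax/(2h'm)$: it asserts that the $m$ empirically best cells having at least $\left\lfloor\hmax/(h'm)\right\rfloor$ evaluations are opened, and that if $\partition_{h',i^\star_{h'}}$ is not among them there must be $m$ cells empirically above it. That inference implicitly places the optimal cell in the comparison pool at iteration $m$, i.e.\ assumes it has $\ge\left\lfloor\hmax/(h'm)\right\rfloor$ evaluations (a quantity that can be as large as $4\cdot 2^p$), while the induction hypothesis supplies only $\ge 2^p$. In other words, you have correctly isolated where the real difficulty of this lemma lies --- \StroquOOL's eligibility constraint $\pullsNumber_{h,i}\ge\left\lfloor\hmax/(hm)\right\rfloor$, which has no analogue in \SequOOL's Lemma~\ref{lem:hstar} --- but isolating it and deferring it as ``the only real work'' is not resolving it: as written, neither your strengthened invariant nor your warm-up accounting can be propagated through the induction.
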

\noindent
Lemma~\ref{lem:hstarSto} gives two conditions so that the cell containing $x^\star$ is opened at depth $h$. This holds if (a) \StroquOOL opens, with $2^p$ evaluations, more cells at depth $h$ than the number of near-optimal cells at depth $h$ ($ \hmax/(4h2^p)
\geq  C\rho^{-d(\nu,C,\rho)h}$) and (b) the $2^p$ evaluations are sufficient to discriminate the empirical average of near-optimal cells from the empirical average of sub-optimal cells ($b \sqrt{\log(2\timeHorizon^2/\delta)/2^p} \leq \nu\rho^h$).
%
%\todov{    $\left\lfloor \frac{\hmax}{l+1}/2^p\right\rfloor$ or    $\left\lfloor \frac{\hmax}{l+1}\right\rfloor/2^p$ ?}
%
%\todov{dont trust the log n yet there should be some square onece we use the budget properly}
To state the next theorems, we introduce  $\tilde h$ a positive real number satisfying
%\begin{equation}\label{eq:eqStro}
%$\frac{\hmax\nu^2\rho^{2\tilde h}}{4\tilde hb^2\log(2\timeHorizon^{2}/\delta)}=C\rho^{-d\tilde h}.$  %\vspace{-.4cm}
$(\hmax\nu^2\rho^{2\tilde h})/(4\tilde hb^2\log(2\timeHorizon^{2}/\delta))=C\rho^{-d\tilde h}.$  
We have
%\vspace{-.4cm}
%\end{equation}
%$\tilde h$ can be expressed in term of the Lambert $W$ function (see Section~\ref{sectionsorigourous}). 
\[
\noindent
%\text{We have \ }
\tilde h =    \frac{1}{(d+2)\log(1/\rho)}\log\left(\frac{\bar n}{\log \bar n } \right)+o(1) \quad \text{with} \quad \quad\bar n \triangleq \frac{\nu^2\hmax (d+2)\log(1/\rho)}{4Cb^2\log(2\timeHorizon^{2}/\delta)}\cdot
\]
The quantity $\tilde h$ gives the depth of the deepest cell opened by \StroquOOL that contains $x^\star$ with high probability. Consequently,  $\tilde h$ also lets us characterize for which regime of the noise range $b$ we recover results similar to the loss for the deterministic case.
Discriminating on the noise regime, we now state our results, Theorem~\ref{th:highnoise} for a high noise and 
Theorem~\ref{th:lownoise} for a low one.
%\todov{take about the additional terms}
%
%\todov{we should put an expectation or defined the simple regret as an expectation no?}
\begin{restatable}{theorem}{restathhighnoise}{\textcolor{gray}{\normalfont\textbf{High-noise regime\ }}}\label{th:highnoise}
	%\begin{theorem}[\textcolor{gray}{\textit{\textbf{High-noise regime}}}] \label{th:highnoise}
	After $\timeHorizon$ rounds, for any function $f$, a global optimum $x^\star$ with associated $(\nu,\rho)$, $C>1$, and near-optimality dimension simply denoted $d=d(\nu,C,\rho)$,   with probability at least $1-\delta$,   if $b\geq\nu\rho^{\tilde h}/\sqrt{\log(2\timeHorizon^2/\delta)},$ the simple regret of \StroquOOL obeys
	%
	%\paragraph{\textbullet ~~ High noise regime. If $b\geq\frac{\nu\rho^{\tilde h}}{\sqrt{\log(\timeHorizon^{3/2}/b)}}$:}
	%
	%\todom{We should not have hmax down here}
	\[r_\timeHorizon
	~\leq~
	\nu\rho^{ 
		\frac{1}{(d+2)\log(1/\rho)}\lambertW\left(
		\left\lfloor\frac{\timeHorizon}{2(\log_2\timeHorizon+1)^2}\right\rfloor
		\frac{ (d+2)\log(1/\rho)\nu^2}{4Cb^2\log(2\timeHorizon^2/\delta)}\right)}
	+2b\sqrt{\log(2\timeHorizon^2/\delta)\bigg/\left\lfloor\frac{\timeHorizon}{2(\log_2\timeHorizon+1)^2}\right\rfloor}\cdot
	\]
%	where $\lambertW$ is the standard Lambert $\lambertW$ function.
\end{restatable}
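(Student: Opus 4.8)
The plan is to condition on the high-probability concentration event $\xi_\delta$ of Lemma~\ref{l:event}, and then show that among the $\pmax+1$ cross-validation candidates $x(\timeHorizon,p)$ there is at least one whose true value is within $\mathcal O(\nu\rho^{\tilde h})$ of $f(x^\star)$, after which the final selection step costs only the second, $\timeHorizon$-dependent term. Concretely, $\xi_\delta$ is built by a Hoeffding and union bound over all opened cells and all cross-validation re-evaluations (there are at most $\mathcal O(\timeHorizon^2)$ of these, which is where $\log(2\timeHorizon^2/\delta)$ comes from), so that simultaneously $|\hat f_{h,i}-f_{h,i}|\leq b\sqrt{\log(2\timeHorizon^2/\delta)/(2\pullsNumber_{h,i})}$ holds for every cell and every candidate. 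Everything below is deterministic given $\xi_\delta$, which supplies the ``with probability at least $1-\delta$'' clause.

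Next I would fix the working depth/level pair. Let $h^\dagger$ be the largest integer depth for which there exists an integer $p$ meeting both conditions of Lemma~\ref{lem:hstarSto}, and let $p^\dagger$ be a witness, e.g.\ the smallest $p$ with $b\sqrt{\log(2\timeHorizon^2/\delta)/2^{p+1}}\leq\nu\rho^{h^\dagger}$. Eliminating $p$ between the discrimination condition and the opening condition $\hmax/(4h2^{p})\geq C\rho^{-dh}$ shows that a valid level exists precisely while $\hmax\nu^2\rho^{(d+2)h}\geq 2hCb^2\log(2\timeHorizon^2/\delta)$. Comparing this with the defining equation of $\tilde h$, namely $\hmax\nu^2\rho^{(d+2)\tilde h}=4\tilde h Cb^2\log(2\timeHorizon^2/\delta)$, the factor $4$ against the required factor $2$ gives exactly the slack certifying $h^\dagger\geq\tilde h-\mathcal O(1)$; one also checks $h^\dagger\leq\hmax$ and $p^\dagger\leq\pmax$ so that $x(\timeHorizon,p^\dagger)$ is a genuine candidate.

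With $(h^\dagger,p^\dagger)$ fixed, Lemma~\ref{lem:hstarSto} gives $\depthOp_{h^\dagger,p^\dagger}=h^\dagger$, i.e.\ the cell $\partition_{h^\dagger,i^\star_{h^\dagger}}$ containing $x^\star$ is opened with at least $2^{p^\dagger}$ evaluations and lies in $\tree$. Assumption~\ref{as:smooth} yields $f_{h^\dagger,i^\star_{h^\dagger}}\geq f(x^\star)-\nu\rho^{h^\dagger}$, and since this cell competes in the argmax defining $x(\timeHorizon,p^\dagger)$ we get $\hat f(x(\timeHorizon,p^\dagger))\geq\hat f_{h^\dagger,i^\star_{h^\dagger}}$. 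Chaining with the two concentration inequalities of $\xi_\delta$ (one for the optimal cell, one for the candidate) bounds the candidate regret by $f(x^\star)-f(x(\timeHorizon,p^\dagger))\leq\nu\rho^{h^\dagger}+2b\sqrt{\log(2\timeHorizon^2/\delta)/2^{p^\dagger}}$; the discrimination condition defining $p^\dagger$ forces the second summand to be $\mathcal O(\nu\rho^{h^\dagger})$, so the whole candidate regret is $\mathcal O(\nu\rho^{\tilde h})$. Finally, the output $x(\timeHorizon)$ is the candidate with largest empirical value over $\hmax/2$ fresh evaluations, so $\hat f(x(\timeHorizon))\geq\hat f(x(\timeHorizon,p^\dagger))$, and one further pair of concentration bounds at sample size $\hmax/2$ gives $f(x(\timeHorizon))\geq f(x(\timeHorizon,p^\dagger))-2b\sqrt{\log(2\timeHorizon^2/\delta)/\hmax}$, the second term of the theorem.

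It remains to rewrite $\nu\rho^{\tilde h}$ in closed form. Setting $y\triangleq(d+2)\tilde h\log(1/\rho)$, the defining equation of $\tilde h$ becomes $\bar n=y e^{y}$ with $\bar n=\nu^2\hmax(d+2)\log(1/\rho)/(4Cb^2\log(2\timeHorizon^2/\delta))$, hence $y=\lambertW(\bar n)$ and $\nu\rho^{\tilde h}=\nu\,\rho^{\lambertW(\bar n)/((d+2)\log(1/\rho))}$, which is exactly the first term since the argument of $\lambertW$ there equals $\bar n$. I expect the main obstacle to be the integer bookkeeping of the third paragraph: relating the integer depth $h^\dagger$ and level $p^\dagger$ actually used by the algorithm to the real-valued balancing point $\tilde h$, and folding the smoothness and discrimination contributions into the single clean first term with the correct constant. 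This is also precisely where the high-noise hypothesis $b\geq\nu\rho^{\tilde h}/\sqrt{\log(2\timeHorizon^2/\delta)}$ enters, guaranteeing that the required level $p^\dagger$ is nonnegative and that the smoothness term, rather than the cross-validation term, governs the first summand. The high-probability construction of $\xi_\delta$ by a union bound covering both the exploration and the cross-validation phases is routine but must be set up before any of the deterministic steps.
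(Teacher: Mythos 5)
Your proposal is correct and follows essentially the same route as the paper: the same concentration event (Lemma~\ref{l:event}), the same key lemma (Lemma~\ref{lem:hstarSto}), the same balancing of the opening and discrimination conditions via the transcendental equation for $\tilde h$, and the same Lambert-$\lambertW$ closed form; your factor-$4$-versus-$2$ slack argument establishes $h^\dagger\geq\lfloor\tilde h\rfloor$, which is exactly what the paper gets by checking that the floored pair $(\lfloor\tilde h\rfloor,\lfloor\tilde p\rfloor)$ satisfies both hypotheses of the lemma (the $+1$ in the exponent $2^{p+1}$ of the discrimination condition is what absorbs the flooring of $\tilde p$). The one substantive divergence is the final decomposition. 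The paper telescopes empirical values in a single chain,
\[
f(x(\timeHorizon))+b\sqrt{\tfrac{\log(2\timeHorizon^2/\delta)}{\hmax}}\;\geq\;\hat f(x(\timeHorizon))\;\geq\;\hat f(x(\timeHorizon,p))\;\geq\;\hat f_{\depthOp_{\hmax,p}+1,i^\star}\;\geq\;f(x^\star)-\nu\rho^{\depthOp_{\hmax,p}+1}-b\sqrt{\tfrac{\log(2\timeHorizon^2/\delta)}{\hmax}},
\]
using the \emph{child} of the deepest opened optimal cell, so that only the two cross-validation-level deviations appear and the first term comes out as exactly $\nu\rho^{\lfloor\tilde h\rfloor+1}\leq\nu\rho^{\tilde h}$. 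Your two-stage argument (candidate regret from exploration estimates, then selection cost) additionally pays $2b\sqrt{\log(2\timeHorizon^2/\delta)/2^{p^\dagger+1}}$, which the discrimination condition only bounds by $2\nu\rho^{h^\dagger}$; together with your use of depth $h^\dagger$ rather than $h^\dagger+1$ in the smoothness term, this yields the theorem's rate but with a leading constant of order $3/\rho$ instead of $1$ --- precisely the obstacle you flagged yourself. This is a constant-factor loss, not a gap in the rate, and your version is arguably the more scrupulous one: the paper's chain uses the same symbol $\hat f(x(\timeHorizon,p))$ for two different averages (the exploration-phase estimate in the argmax defining the candidate, and the fresh cross-validation estimate in the argmax defining the output), a conflation that your separation into two concentration steps avoids; the same absorb-a-deviation-into-$\nu\rho^{h}$ mechanism is what produces the constant $3$ in the paper's own low-noise bound (Theorem~\ref{th:lownoise}).
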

\begin{corollary}\label{c:readableHigh}
	With the assumptions of Theorem~\ref{th:highnoise} and $
	\bar n>e$, 
	%\[
	%r_\timeHorizon
	%    ~\leq~
	%    \nu  \left(\frac{\max(C,1)\log\left(\timeHorizon d_\rho/\max(C,1)\right)}{\left\lfloor\frac{\timeHorizon}{\bar\log\,\timeHorizon}\right\rfloor d_\rho}\right)^{ \frac{1}{d}} .\]
	\[
	r_\timeHorizon
	\leq
	\nu \left(\frac{\log\bar\timeHorizon}{\bar n}\right)^{ \frac{1}{d+2}}+2b\sqrt{\frac{18\log(2\timeHorizon^2/\delta)}{2\left\lfloor\frac{\timeHorizon}{2(\log_2\timeHorizon+1)^2}\right\rfloor}}\!\CommaBin
	\text{ ~where~ } \bar n \triangleq 
	\left\lfloor\frac{\timeHorizon/2}{(\log_2\timeHorizon+1)^2}\right\rfloor\frac{ (d+2)\log(1/\rho)\nu^2}{4Cb^2\log(2\timeHorizon^2/\delta)}\cdot
	\]
	%
	%\[
	%r_\timeHorizon
	%~\leq~
	%\nu \left(\frac{\left\lfloor\frac{\timeHorizon}{2(\log^2_2(\timeHorizon)+1)^2}\right\rfloor\frac{ (d+2)\log(1/\rho)\nu^2}{C_1b^2\log(\timeHorizon^{3/2}/b)}}{\log\left(\left\lfloor\frac{\timeHorizon}{2(\log^2_2(\timeHorizon)+1)^2}\right\rfloor\frac{ (d+2)\log(1/\rho)\nu^2}{C_1b^2\log(\timeHorizon^{3/2}/b)}\right)}\right)^{- \frac{1}{d+2}}+6\sqrt{\frac{\log(\timeHorizon^{3/2}/b)}{2\hmax}}
	%.
	%\]
\end{corollary}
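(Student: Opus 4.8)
The plan is to read the corollary as a purely cosmetic weakening of Theorem~\ref{th:highnoise}: I would start from the two-term bound of that theorem and replace the Lambert-$W$ factor in the first (approximation) summand by the explicit elementary lower bound on $\lambertW$ recalled in Section~\ref{sectionsorigourous}, carrying the second (estimation) summand along up to an absolute numerical constant. The first move is algebraic: since $\rho^{x}=e^{-x\log(1/\rho)}$, the factor $\log(1/\rho)$ in the exponent cancels against the $\log(1/\rho)$ in the denominator, so
\[
\nu\,\rho^{\frac{1}{(d+2)\log(1/\rho)}\lambertW(\bar n)} = \nu\,e^{-\frac{1}{d+2}\lambertW(\bar n)},
\]
where one checks that the argument of $\lambertW$ in the theorem is exactly the $\bar n$ of the corollary (indeed $\timeHorizon/(2(\log_2\timeHorizon+1)^2)=(\timeHorizon/2)/(\log_2\timeHorizon+1)^2$, so the two displayed definitions of $\bar n$ agree).

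Next I would invoke the standard-branch bound $\lambertW(z)\geq\log(z/\log z)$, valid for $z\geq e$ --- this is exactly the role of the hypothesis $\bar n>e$. Because $e^{-t/(d+2)}$ is decreasing in $t$, a lower bound on $\lambertW(\bar n)$ yields an upper bound on the summand:
\[
\nu\,e^{-\frac{1}{d+2}\lambertW(\bar n)} \leq \nu\,e^{-\frac{1}{d+2}\log(\bar n/\log\bar n)} = \nu\left(\frac{\log\bar n}{\bar n}\right)^{\frac{1}{d+2}},
\]
which is the first term of the corollary. Note that, in contrast to the deterministic Corollary~\ref{c:readable}, no restriction $d>0$ is needed here: the $+2$ shift keeps the exponent $1/(d+2)$ well defined, and bounded away from $0$, even at $d=0$.

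Finally, the estimation term needs no $\lambertW$-surgery; it is the same empirical-average concentration term $2b\sqrt{\log(2\timeHorizon^2/\delta)/\hmax}$ already present in the theorem, and I would simply upper-bound it by the clean expression in the corollary, whose fixed numerical factor is a convenient uniform constant dominating the theorem's concentration term. Adding the two pieces gives the stated bound. I expect the only genuinely delicate step to be the first one: making sure the $\rho$-exponent collapses correctly and that $\lambertW$ is evaluated on its standard ($z\geq 0$, $\lambertW\geq 0$) branch so that the Hoorfar--Hassani inequality applies; the remainder is constant bookkeeping.
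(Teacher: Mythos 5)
Your proposal is correct and follows essentially the same route as the paper: the paper also derives this corollary by taking the Lambert-$W$ bound of Theorem~\ref{th:highnoise}, rewriting $\rho^{\lambertW(\bar n)/((d+2)\log(1/\rho))}=e^{-\lambertW(\bar n)/(d+2)}$, and applying the Hoorfar--Hassani inequality $\lambertW(x)\geq\log(x/\log x)$ for $x\geq e$ to obtain $\nu(\log\bar n/\bar n)^{1/(d+2)}$, with the concentration term carried along (the corollary's factor $\sqrt{18/2}=3$ leaves harmless slack over the theorem's term). Your observations that the two definitions of $\bar n$ coincide and that no restriction $d>0$ is needed are also consistent with the paper.
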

%
%\todov{explicitely say b=0 because of the division by zero}
\begin{restatable}{theorem}{restathlownoise}{\textcolor{gray}{\normalfont\textbf{Low-noise regime\ }}}\label{th:lownoise}
	%\begin{theorem}[\textcolor{gray}{\textit{\textbf{Low-noise regime}}}] \label{th:lownoise}
	After $\timeHorizon$ rounds, for any function $f$ and one of its global optimum $x^\star$ with associated $(\nu,\rho)$, any $C>1$, and near-optimality dimension simply denoted $d=d(\nu,C,\rho)$, with probability at least $1-\delta$, if $b\leq\nu\rho^{\tilde h}/\sqrt{\log(2\timeHorizon^2/\delta)},$ the simple regret of \StroquOOL obeys 
	%
	%\paragraph{\textbullet ~~ High noise regime. If $b\geq\frac{\nu\rho^{\tilde h}}{\sqrt{\log(\timeHorizon^{3/2}/b)}}$:}
	%
	%    \todom{We should not have hmax down here}
	\begin{align*}
	&\text{\textcolor{black}{\textbullet~  If $d=0$,} }
	~~ r_\timeHorizon
	~\leq~ 3\nu\rho^{\frac{1}{4C}\left\lfloor\frac{\timeHorizon/2}{(\log_2(\timeHorizon)+1)^2}\right\rfloor}. 
	&&\text{\textcolor{black}{\textbullet~  If $d>0$,} }
	~~ r_\timeHorizon
	~\leq~
	3\nu e^{ 
		-\frac{1}{d}\lambertW\left(\left\lfloor\frac{\timeHorizon/2}{(\log_2\timeHorizon+1)^2}\right\rfloor\frac{ d\log{\frac{1}{\rho}}}{4C}\right)}.
	\end{align*}
	%where $\lambertW$ is the standard Lambert $\lambertW$ function.
\end{restatable}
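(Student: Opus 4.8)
The plan is to show that in the low-noise regime the behaviour of \StroquOOL essentially collapses onto that of the deterministic \SequOOL, so that the argument mirrors the proof of Theorem~\ref{th:sequool} almost verbatim. First I would condition on the high-probability event $\xi_\delta$ of Lemma~\ref{l:event}, which holds with probability at least $1-\delta$ and simultaneously controls every empirical mean $\hat f_{h,i}$ through a deviation of order $b\sqrt{\log(2\timeHorizon^2/\delta)/\pullsNumber_{h,i}}$. All subsequent reasoning is deterministic on this event.

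The key observation is that when $b\leq \nu\rho^{\tilde h}/\sqrt{\log(2\timeHorizon^2/\delta)}$ the discrimination condition (b) of Lemma~\ref{lem:hstarSto} is already satisfied with a \emph{single} evaluation, i.e.\ with $p=0$: indeed $b\sqrt{\log(2\timeHorizon^2/\delta)/2}\leq \nu\rho^{\tilde h}\leq \nu\rho^h$ for every $h\leq\tilde h$. Hence, restricting attention to $p=0$, the only binding requirement left in Lemma~\ref{lem:hstarSto} is the counting condition (a), which for $p=0$ reads $\hmax/(4h)\geq C\rho^{-dh}$ --- exactly the \SequOOL condition of Lemma~\ref{lem:hstar} up to the constant $4$ and the value of $\hmax$. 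I would then let $h^\star$ be the deepest depth satisfying this inequality and check that $h^\star\leq\tilde h$: by the defining equation of $\tilde h$ and the low-noise hypothesis, the factor $\nu^2\rho^{2\tilde h}/(b^2\log(2\timeHorizon^2/\delta))\geq 1$ forces $\hmax/(4\tilde h)\leq C\rho^{-d\tilde h}$, so $\tilde h$ violates condition (a) and therefore lies beyond the crossover. Consequently the single-evaluation analysis is valid down to $h^\star$, and Lemma~\ref{lem:hstarSto} with $p=0$ yields $\depthOp_{h^\star,0}=h^\star$, i.e.\ the cell containing $x^\star$ is opened at depth $h^\star$.

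Solving the counting condition for $h^\star$ is the same transcendental computation as in Theorem~\ref{th:sequool}: for $d=0$ it gives $h^\star=\lfloor\hmax/(4C)\rfloor$, and for $d>0$ one inverts $\hmax/(4h)=C\rho^{-dh}$ with the Lambert function to get $\rho^{h^\star}=e^{-\frac1d\lambertW(\hmax\, d\log(1/\rho)/(4C))}$, which already produces the exponents of the statement once $\hmax=\lfloor\timeHorizon/(2(\bar\log\timeHorizon+1)^2)\rfloor$ is substituted. It then remains to convert ``$x^\star$'s cell is opened at depth $h^\star$'' into a regret bound on the returned point. Since $\partition_{h^\star,i^\star_{h^\star}}$ is opened, its representative satisfies $f_{h^\star,i^\star_{h^\star}}\geq f(x^\star)-\nu\rho^{h^\star}$ by Assumption~\ref{as:smooth}, and being evaluated at least $2^0=1$ time it is a legitimate $p=0$ candidate. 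I would track the value of the output through the two selection layers: the candidate $x(\timeHorizon,0)=\argmax_{\pullsNumber_{h,i}\geq1}\hat f_{h,i}$ can only improve on this empirical value, and the cross-validation step re-evaluates each candidate $\hmax/2$ times so that its final estimate is accurate up to a negligible term. Chaining the two concentration slacks (each bounded by $\nu\rho^{h^\star}$ in the low-noise regime) with the smoothness deficit $\nu\rho^{h^\star}$ produces the factor-$3$ loss $r_\timeHorizon\leq 3\nu\rho^{h^\star}$.

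The routine Lambert-$W$ inversion is immediate once the \SequOOL proof is in hand; the genuinely delicate points are (i) the bookkeeping of the two concentration slacks across the candidate-generation and cross-validation stages so that the constant stays at $3$, and (ii) the verification that $h^\star\leq\tilde h$, which is what guarantees that a single evaluation per cell suffices and cleanly separates the low-noise regime from the high-noise one of Theorem~\ref{th:highnoise}. I expect (i) to be the main obstacle, since it is where the precise definition of the candidate set $\{x(\timeHorizon,p)\}$ and of the event $\xi_\delta$ must be used with care.
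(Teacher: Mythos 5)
Your proposal is correct and follows essentially the same route as the paper's own proof: condition on the event $\xi_\delta$ of Lemma~\ref{l:event}, specialize Lemma~\ref{lem:hstarSto} to $p=0$ (noting the low-noise hypothesis makes the discrimination condition automatic for all $h\le\tilde h$), solve the counting condition $\hmax/(4h)=C\rho^{-dh}$ for the crossover depth $\bar h$, verify $\bar h\le\tilde h$ by exactly the paper's monotonicity argument, and chain the smoothness deficit with the concentration slacks (each at most $\nu\rho^{\bar h}$ in this regime) to get the factor $3$, finishing with the Lambert-$W$ inversion. The one piece of bookkeeping you drop is the paper's $\depthOp_{\hmax,p}+1$ device: since opening the optimal cell at depth $h^\star=\lfloor\bar h\rfloor$ evaluates its child containing $x^\star$ at depth $\lfloor\bar h\rfloor+1\ge\bar h$, the smoothness deficit can be taken as $\nu\rho^{\lfloor\bar h\rfloor+1}\le\nu\rho^{\bar h}$, whereas your accounting at the opened cell itself yields $3\nu\rho^{\lfloor\bar h\rfloor}$, which is weaker than the stated $3\nu\rho^{\bar h}$ by up to a factor $\rho^{-1}$; the child-level step is precisely what absorbs the floor.
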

This results also hold for the deterministic feedback case, $b=0$, with probability $1$.
\begin{corollary}\label{c:readableLow}
	With the assumptions of Theorem~\ref{th:lownoise}, 
	if $d>0$, then\vspace{-0.05cm}
	%\[
	%r_\timeHorizon
	%    ~\leq~
	%    \nu  \left(\frac{\max(C,1)\log\left(\timeHorizon d_\rho/\max(C,1)\right)}{\left\lfloor\frac{\timeHorizon}{\bar\log\,\timeHorizon}\right\rfloor d_\rho}\right)^{ \frac{1}{d}} .\]
	\[
	r_\timeHorizon
	~\leq~
	3\nu  \left(\frac{\log\left(\tilde n\right)}{\tilde n}\right)^{ \frac{1}{d}} \text{ ~~~~with~~~~ } \tilde n \triangleq 
	\left\lfloor\frac{\timeHorizon/2}{(\log_2\timeHorizon+1)^2}\right\rfloor\frac{ d\log(1/\rho)}{4C}
	\text{ ~and~ }
	\tilde n>e.
	\]\vspace{-0.5cm}
	%    \[
	%r_\timeHorizon
	%~\leq~
	%\nu  \left(\frac{\left\lfloor\frac{\timeHorizon}{2(\log^2_2(\timeHorizon)+1)^2}\right\rfloor\frac{ d\log(1/\rho)}{C_1}}{\log\left(\left\lfloor\frac{\timeHorizon}{2(\log^2_2(\timeHorizon)+1)^2}\right\rfloor\frac{ d\log(1/\rho)}{C_1}\right)}\right)^{- \frac{1}{d}}
	%    .
	%    \]
\end{corollary}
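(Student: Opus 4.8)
The plan is to read the claim off the $d>0$ case of Theorem~\ref{th:lownoise} and convert its Lambert-$\lambertW$ form into the stated readable form by a single scalar estimate. First I would note that, with $\tilde n$ defined as in the corollary, the argument of $\lambertW$ appearing in Theorem~\ref{th:lownoise} is \emph{exactly} $\tilde n$; hence the theorem already supplies, with probability at least $1-\delta$, the bound $r_\timeHorizon \leq 3\nu\, e^{-\lambertW(\tilde n)/d}$. It therefore suffices to show $e^{-\lambertW(\tilde n)/d} \leq (\log\tilde n/\tilde n)^{1/d}$, and since $x\mapsto x^{1/d}$ is increasing on $\Real^+$ for $d>0$, this reduces to the purely scalar inequality $e^{-\lambertW(\tilde n)} \leq \log\tilde n/\tilde n$.

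Next I would dispose of this scalar inequality using only the definition of the standard $\lambertW$. From $\lambertW(z)\,e^{\lambertW(z)}=z$ one gets, for $z>0$, the identity $e^{-\lambertW(z)}=\lambertW(z)/z$, so the target is equivalent to $\lambertW(z)\leq\log z$ at $z=\tilde n$. This upper bound holds for all $z\geq e$: at $z=e$ both sides equal $1$, and since $w\mapsto w e^{w}$ is increasing while $(\log z)e^{\log z}=z\log z>z$ for $z>e$, the preimage $\lambertW(z)$ must be strictly below $\log z$. Equivalently, and matching the remark following Theorem~\ref{th:sequool}, I could invoke directly the Hoorfar--Hassani lower bound $\lambertW(z)\geq\log(z/\log z)$ recalled in Section~\ref{sectionsorigourous}, which when substituted into $e^{-\lambertW(\tilde n)/d}$ yields $(\tilde n/\log\tilde n)^{-1/d}=(\log\tilde n/\tilde n)^{1/d}$ in one step. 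Either route closes the gap; I would present the one through the exact identity, as it keeps constants explicit and avoids the asymptotic $o(1)$ term.

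Finally I would record the role of the hypothesis $\tilde n>e$: it is precisely what makes $\log\tilde n>1>0$, so that the right-hand side $(\log\tilde n/\tilde n)^{1/d}$ is real and positive, and it places $\tilde n$ in the regime where the $\lambertW$-estimate above is valid. Chaining $r_\timeHorizon \leq 3\nu\, e^{-\lambertW(\tilde n)/d} \leq 3\nu\,(\log\tilde n/\tilde n)^{1/d}$ then gives the corollary, on the same high-probability event $\xi_\delta$ as Theorem~\ref{th:lownoise}.

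I do not expect a genuine obstacle here: the corollary is a cosmetic reformulation of the theorem, and the only points demanding care are getting the direction of the $\lambertW$-estimate right (a \emph{lower} bound on $\lambertW$, or equivalently an \emph{upper} bound once the identity $e^{-\lambertW(z)}=\lambertW(z)/z$ is used) and confirming that the condition $\tilde n>e$ is exactly the threshold that licenses that estimate.
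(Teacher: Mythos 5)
Your proposal is correct and takes essentially the same approach as the paper: the paper likewise reads the corollary directly off the $d>0$ case of Theorem~\ref{th:lownoise} (whose Lambert argument is exactly $\tilde n$) and converts it with the Hoorfar--Hassani estimate $\lambertW(x)\geq\log\left(x/\log x\right)$ for $x\geq e$, just as in its proof of Corollary~\ref{c:readable}. Your preferred route via the identity $e^{-\lambertW(z)}=\lambertW(z)/z$ together with $\lambertW(z)\leq\log z$ is the same estimate in disguise---the identity makes the two inequalities equivalent for $z\geq e$---so the difference from the paper is purely presentational (a self-contained monotonicity argument in place of the citation).
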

\subsection{Discussion for the stochastic feedback\vspace{-0.1cm}}
\paragraph{Worst-case comparison to \POO and \StoSOO} \textcolor{gray}{\textbf{When $b$ is large and known\ }}
%    [Comparison with \POO, \HOO, \StoSOO, \SOO, \DOO and \SequOOL]
\StroquOOL is an algorithm designed for the noisy feedback while adapting to the smoothness of the function. Therefore, it can be directly compared to  \POO and \StoSOO that both tackle the same problem.
The results for \StroquOOL, like the ones for \POO, hold for $d\geq 0$, while the theoretical guarantees of \StoSOO are only for the case $d=0$.
The general rate of \StroquOOL in Corollary~\ref{c:readableHigh}\,\footnote{Note that the second term in our bound has at most the same rate as the first one.} is similar to the ones of \POO (for $d\geq 0$) and \StoSOO (for $d=0$) as their loss is $\tcO(\timeHorizon^{-1/(d+2)})$.
More precisely, looking at the log factors, we can first notice an improvement over  \StoSOO when $d=0$. We have  
$r^{\StroquOOL}_{\timeHorizon}=
\cO(\log^{3/2}(\timeHorizon)/\sqrt{\timeHorizon})
\leq
r^{\StoSOO}_{\timeHorizon}=
\cO(\log^{2}\timeHorizon/\sqrt{\timeHorizon})$.   Comparing with \POO,  we obtain a worse logarithmic factor, as 
$r^{\POO}_{\timeHorizon}=
\cO( (\log^2(\timeHorizon)/\timeHorizon))^{1/(d+2)})
\leq r^{\StroquOOL}_{\timeHorizon}=\cO(((\log^3\timeHorizon)/\timeHorizon)^{1/(d+2)} )$. Despite having this (theoretically) slightly worse logarithmic factor compared to \POO,  \StroquOOL has two nice new features. First, our algorithm is conceptually simple, parameter-free, and does not need to call a sub-algorithm: \POO repetitively calls different instances of \HOO which makes it a heavy meta-algorithm. Second, our algorithm, as we detail next,
%Remark~\ref{rem:discussthenoise} and~\ref{rem:discussthedeter}, 
naturally adapts to low noise and, even more,  recovers the rates of \SequOOL in the deterministic case, leading to exponentially decreasing loss when $d=0$. We do not know if  the extra  logarithmic factor for \StroquOOL as compared to \POO to  is the unavoidable price to pay to obtain an adaptation to the deterministic feedback case.\vspace{-0.15cm}
%\end{remark}
%
\paragraph{Comparison to \HOO}
\HOO is also  designed for the noisy optimization setting. \HOO \textit{needs to know the smoothness} of $f$, i.e., $(\nu,\rho)$ are input parameters of \HOO. Using this extra knowledge \HOO is only able to improve the logarithmic factor  to achieve a regret of  $r^{\HOO}_{\timeHorizon}=
\cO( (\log(\timeHorizon)/\timeHorizon)^{1/(d+2)} )$. %\vspace{-0.3cm}$. 
%\end{remark}
%
\paragraph{Adaptation to the range of the noise $b$ without a prior knowledge}%]\label{rem:discussthenoise}
A favorable feature of our bound in Corollary~\ref{c:readableHigh} is that it characterizes  how the range of the noise $b$ affects the rate of the regret  for all $d\geq  0$. Effectively, the regret of \StroquOOL scales with
$\left( n/b^2\right)^{-1/(d+2)}$. 
Note that $b$ is any real non-negative number and it is unknown to \StroquOOL.
To achieve this result, and contrarily to  \HOO, \StoSOO, or \POO, we designed \StroquOOL \emph{without using upper-confidence bounds} (UCBs). Indeed, UCB approaches are overly conservative as they use, in the design of their confidence bound, hard-coded (and often overestimated) upper-bound on $b$ that we denote $\tilde b$.
\HOO, \POO, and \StoSOO, would only obtain a similar regret to \StroquOOL, scaling with $b$, when~$b$ is known to them, in with case $\tilde b$ would be set as $\tilde b=b$. In general, UCB approaches have their regret 
scaling with
$( n/\tilde b^2)^{-1/(d+2)}$.
Therefore, the most significant improvement of \StroquOOL over \HOO, \POO, and \StoSOO is expected when $\tilde b \gg  b$.
% as they directly encode a confidence bound that must include $\tilde b$, in the definition of their code.
%Considering the common case of $d=0$, the regret in Corollary~\ref{c:readableHigh} scales linearly with the %(\textit{unknown})
%range of the noise $b$ leading to potential large improvement for small~$b$.
 %\HOO, \POO, and \StoSOO do not share this feature as they directly encode a confidence bound in the definition of their code and would, therefore, require the knowledge of $b$ to adapt to it in a proper way.
%$\StroquOOL$ simply does not use confidence intervals.

 %
\paragraph{Adaptation to the deterministic case and $d\!\!=\!\!0$}%]\label{rem:discussthedeter}
When the noise is very low, that is, when $b\leq\nu\rho^{\tilde h}/\sqrt{\log(2\timeHorizon^{2}/\delta)}$, which includes  the deterministic feedback,  in Theorem~\ref{th:lownoise} and Corollary~\ref{c:readableLow}, \StroquOOL recovers the same rate as \DOO and \SequOOL up to logarithmic factors. Remarkably, \StroquOOL obtains an exponentially decreasing regret when $d=0$ while \POO, \StoSOO, or \HOO only guarantee  a regret  of $\tcO(\sqrt{1/n})$ when unaware of the range $b$. Therefore, up to log factors, \StroquOOL achieves naturally the \emph{best of both worlds} without being aware of the nature of the feedback (either stochastic or deterministic). 
Again, if the input noise parameter  $\tilde b \gg  b$ (it is often set to $1$ by default)
this is a behavior that one \emph{cannot} expect from \HOO, \POO, or \StoSOO as they explicitly use confidence intervals based on $\tilde b$.
Finally,  using UCB approaches with empirical estimation of the variance $\hat \sigma^2$ \textit{would not} circumvent this behavior. Indeed, the UCB in such case is typically of the form $\sqrt{\hat\sigma^2/T}+\tilde b/T$~\citep{maurer2009empirical}. Then if $\tilde b \gg b$, the term 
$\tilde b/T$ in the upper confidence bound will force  an overly conservative exploration. This \textit{prevents} having  $e^{-\tilde\Omega(n)}$ when $d=0$ and $b\approx0$.
\vspace{-0.1cm}
%\end{remark}
%\todov{REviewer 1: It would be nice if the authors provide more justification for the claim “using UCB approaches with empirical estimation of the variance would not achieve the best of both worlds (deterministic and stochastic)”(page 11, line 3).}

 \vspace{-0.1cm}
%\end{remark}
%\todomi{above we need to have almost-deterministic case AND $d=0$}
%
%\paragraph{Return strategies}
%    Notice, that in the end, \StroquOOL uses cross-validation to pick the final recommendation.
%    An alternative would be to avoid the cross-validation and use the approach of \cite{lepski1997optimal}, that has been recently used in the active setting~\cite{locatelli2017adaptivity,locatelli2018adaptive} for a similar hierarchical partition of $\dom$ with cells.
%    
%    
%\todovout{do we want to discuss this? at the moment I would say no because I don't see how to apply the Alexandra method}
%\todomi{yes, I will draft a sentence}
%\end{remark}
%\todovout{discussion section to get rid of itallic}
%s
	%
	% !TEX root = bartlett2019simple.tex
\section{Experiments}\label{experiements}
We empirically demonstrate how \SequOOL and \StroquOOL adapt to the complexity of the data and compare them to \SOO, \POO, and \HOO.
\begin{figure}
	\center
	\includegraphics[width=0.325\textwidth]{./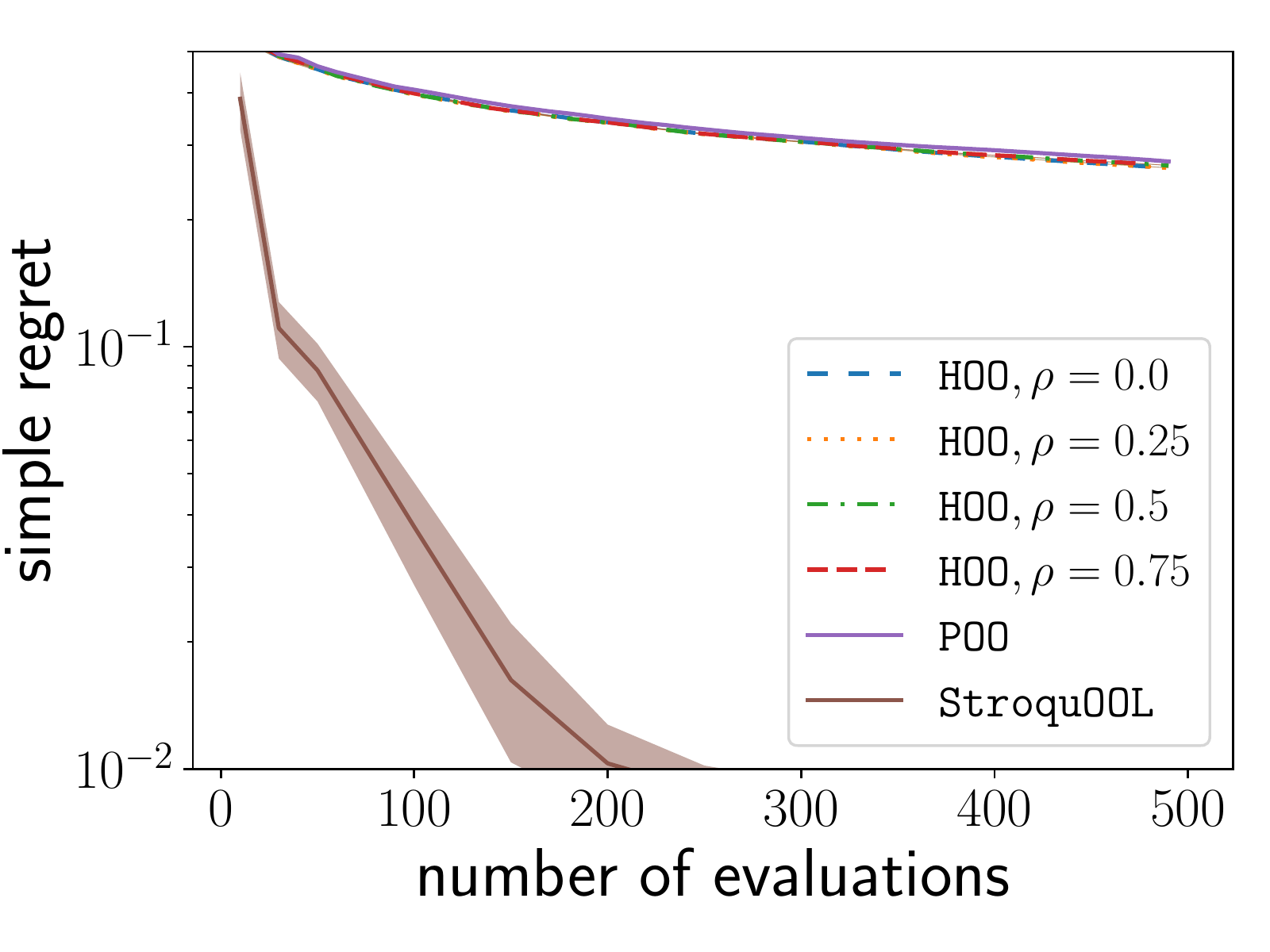}
	\includegraphics[width=0.325\textwidth]{./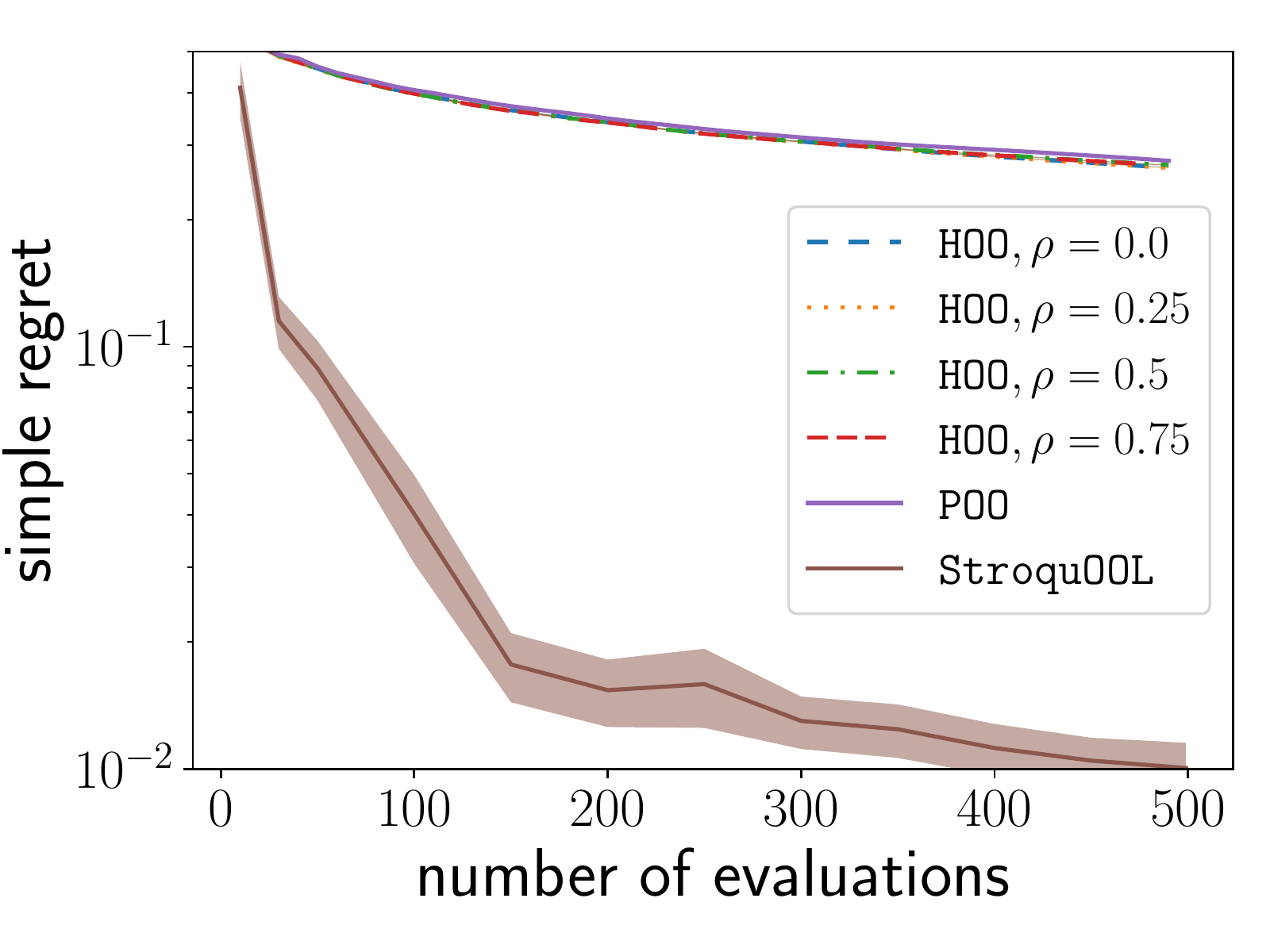}
	\includegraphics[width=0.325\textwidth]{./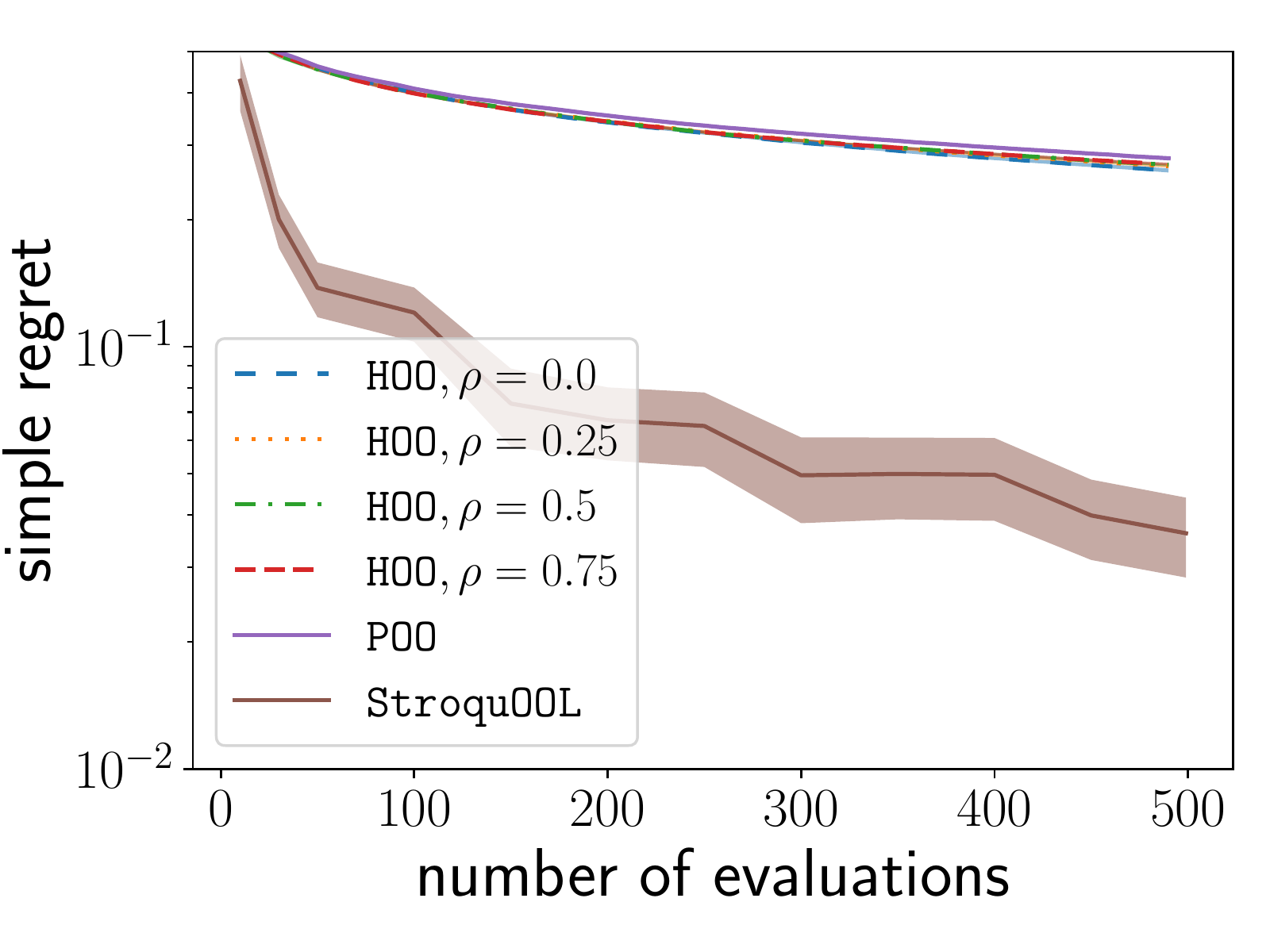}\\
	\includegraphics[width=0.325\textwidth,valign=t]{./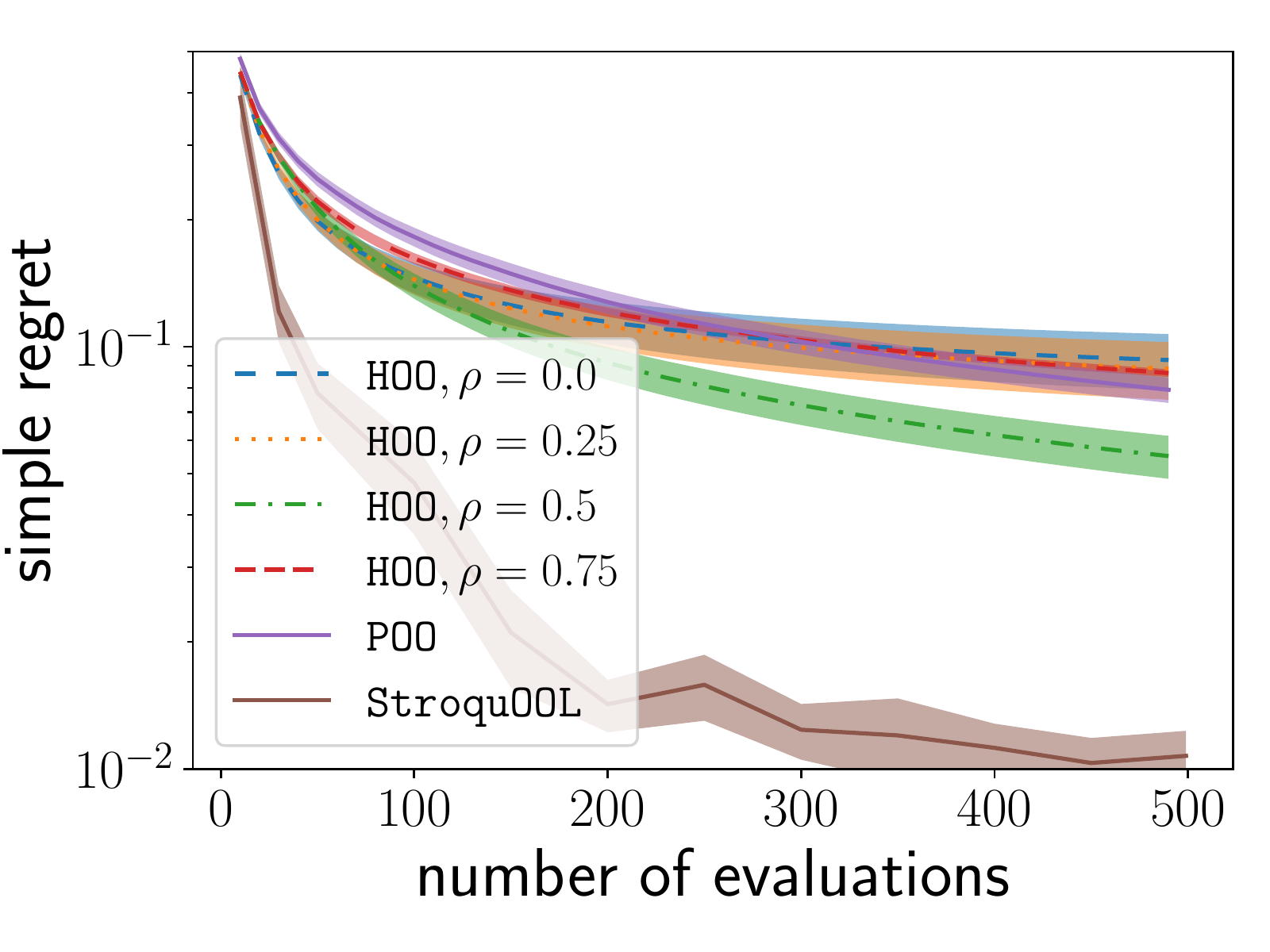}
	\includegraphics[width=0.325\textwidth,valign=t]{./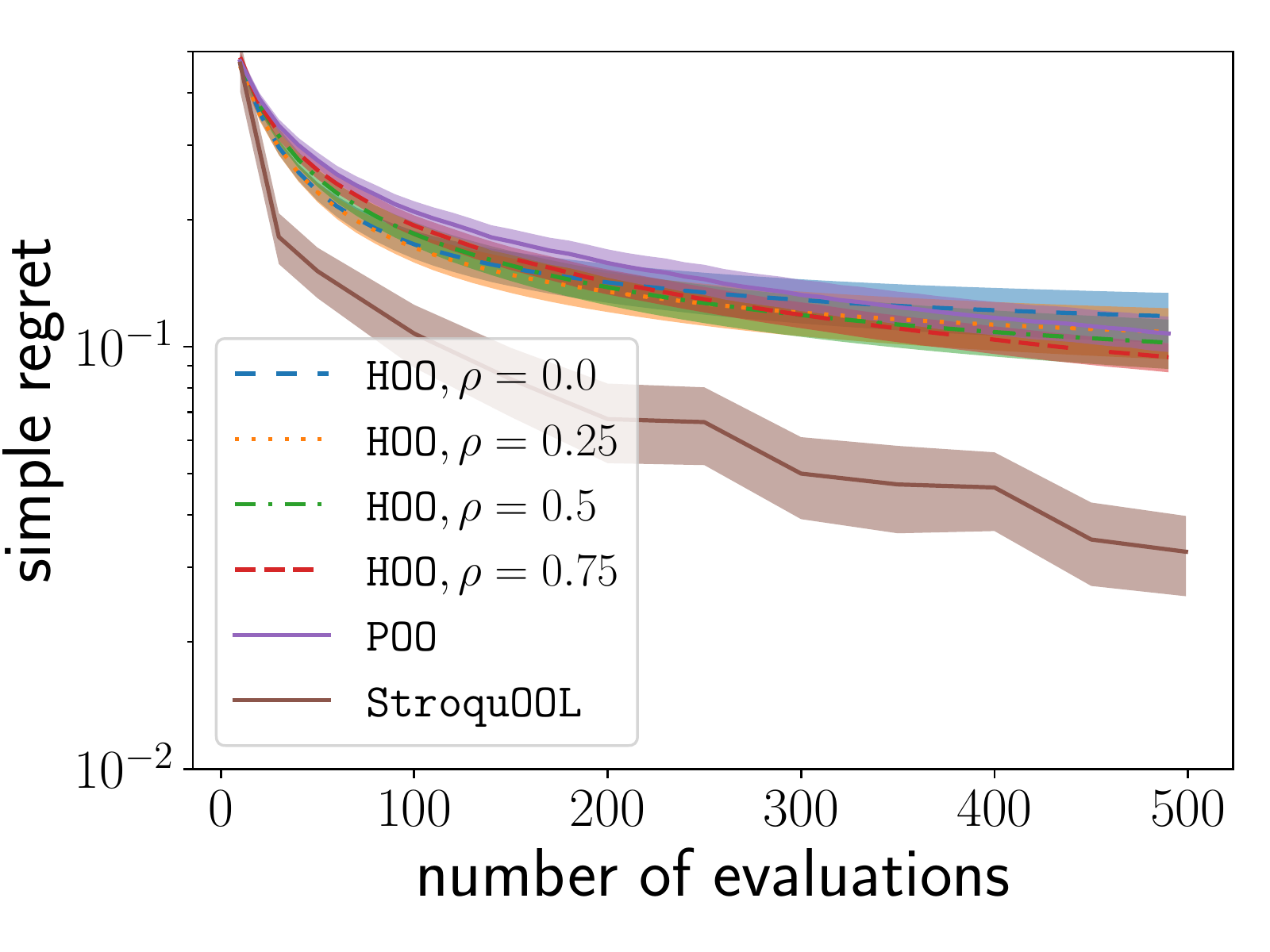}
	~~\includegraphics[width=0.32\textwidth,valign=t]{./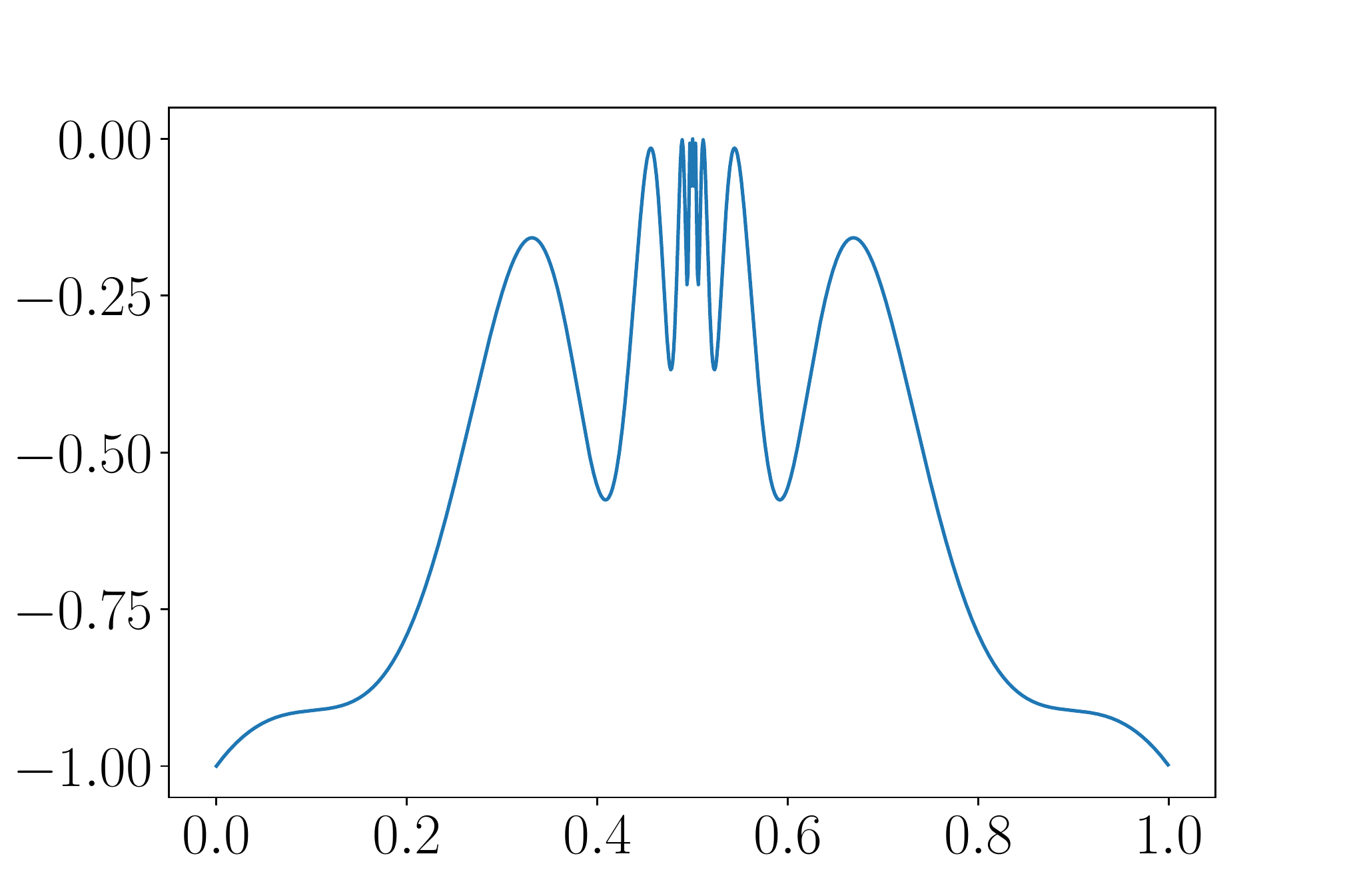}
	
	\caption{\emph{Bottom right: } \textbf{Wrapped-sine} function ($d>0$). The true range of the noise~$b$ and the range 	used by \HOO and \POO is $\tilde b$. 
		\emph{Top:} $b=0,\tilde b=1$ left --- $b=0.1,\tilde b=1$ middle --- $b=\tilde b=1$ right.
		\emph{Bottom:} $b=\tilde b=0.1$ left --- $b=1,\tilde b=0.1$ middle.  }
	\label{Sin}
\end{figure}
We use two functions used by prior work as testbeds for optimization of difficult function without the knowledge of smoothness.
The first one is the \textbf{wrapped-sine} function ($S(x),$ \citealp{Grill15BB}, Figure~\ref{Sin}, bottom right) with $S(x) \triangleq \frac12(\sin(\pi\log_2 (2|x-\frac12|))+1)((2|x-\frac12|)^{-\log .8}-(2|x-\frac12|)^{-\log .3})-(2|x-\frac12|)^{-\log .8}$. This function has $d>0$ for the standard partitioning \citep{Grill15BB}.
	 The second is the \textbf{garland} function ($G(x),$ \citealp{Valko88SS}, Figure~\ref{Gar}, bottom right) with $G(x)\triangleq4x(1-x)(\frac{3}{4}+\frac{1}{4}(1-\sqrt{|\sin(60x)|}))$. Function~$G$ has $d=0$ for the standard partitioning~\citep{Valko88SS}.
	 Both functions are in one dimension, $\dom = \Real$. 
	 Our algorithms work in any dimension, but, with the current computational power available, they would \textit{not} scale beyond a thousand dimensions. %We discuss this curse of dimensionality in Section~\ref{s:dis}.
\paragraph{{\StroquOOL} outperforms \POO and \HOO and adapts  to lower noise.}
In Figure~\ref{Sin}, we report the results of \StroquOOL, \POO, and \HOO for different values of $\rho$. As detailed in the caption, we vary the range of noise $b$ and the range of noise $\tilde b$ used by \HOO and \POO. In all our experiments, \textit{\StroquOOL outperforms \POO and \HOO}. \StroquOOL adapts to low noise, its performance improves when $b$ diminishes. To see that, compare top-left ($b=0$), top-middle ($b=.1$), and top-right ($b=1$) subfigures. On the other hand, \POO and \HOO do not naturally adapt to the range of the noise: For a given parameter $\tilde b=1$, the performance is unchanged when the range of the real noise varies as seen by comparing again top-left ($b=0$), top-middle ($b=.1$), and top-right ($b=1$). However, note that \POO and \HOO \emph{can} adapt to noise and perform empirically well if they have a good estimate of the range $b=\tilde b$ as in bottom-left, or if they underestimate the range of the noise, $\tilde b\ll b$, as in bottom-middle. In Figure~\ref{GarHOO}, we report similar results on the garland function. Finally, \StroquOOL demonstrates its adaptation to both worlds in Figure~\ref{Gar} (left), where it achieves exponential decreasing loss in the case $d=0$ and deterministic feedback. 
%\todom{don't forget to refer to all section of the appendix and call it appendix ABCD and not section}
%
\paragraph{Regrets of {\SequOOL}  and {\StroquOOL}  have exponential decay when $d=0$.} In Figure~\ref{Gar}, we test in the deterministic feedback case with \SequOOL, \StroquOOL, \SOO and the uniform strategy on the garland function (left) and the wrap-sine function (middle). Interestingly, for the garland function, where $d=0$, \SequOOL outperforms \SOO and displays a truly exponential regret decay (y-axis is in log scale). \SOO appears to have the regret of $e^{-\sqrt{n}}$. \StroquOOL which is expected to have a regret $e^{-n/\log^2n}$ lags behind \SOO. Indeed, $n/\log^2n$ exceeds $\sqrt{n}$ for $n>10000$, for which the result is beyond the numerical precision. In Figure~\ref{Gar} (middle), we used the  wrapped-sine. While all algorithms have similar theoretical guaranties since here $d>0$, \SOO outperforms the other algorithms.
\begin{figure}
	\center
	\includegraphics[width=0.32\textwidth,valign=t]{./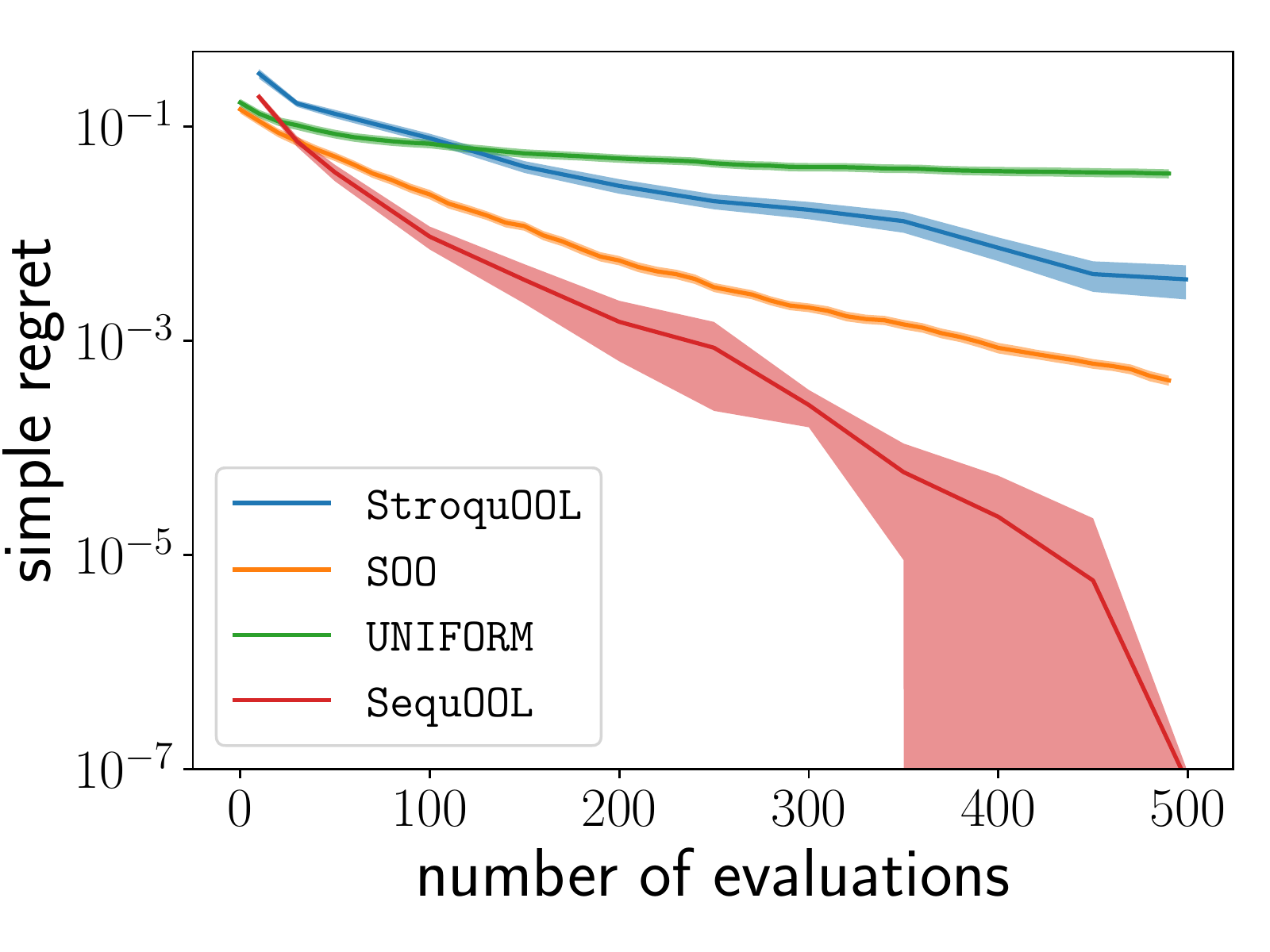}
	\includegraphics[width=0.32\textwidth,valign=t]{./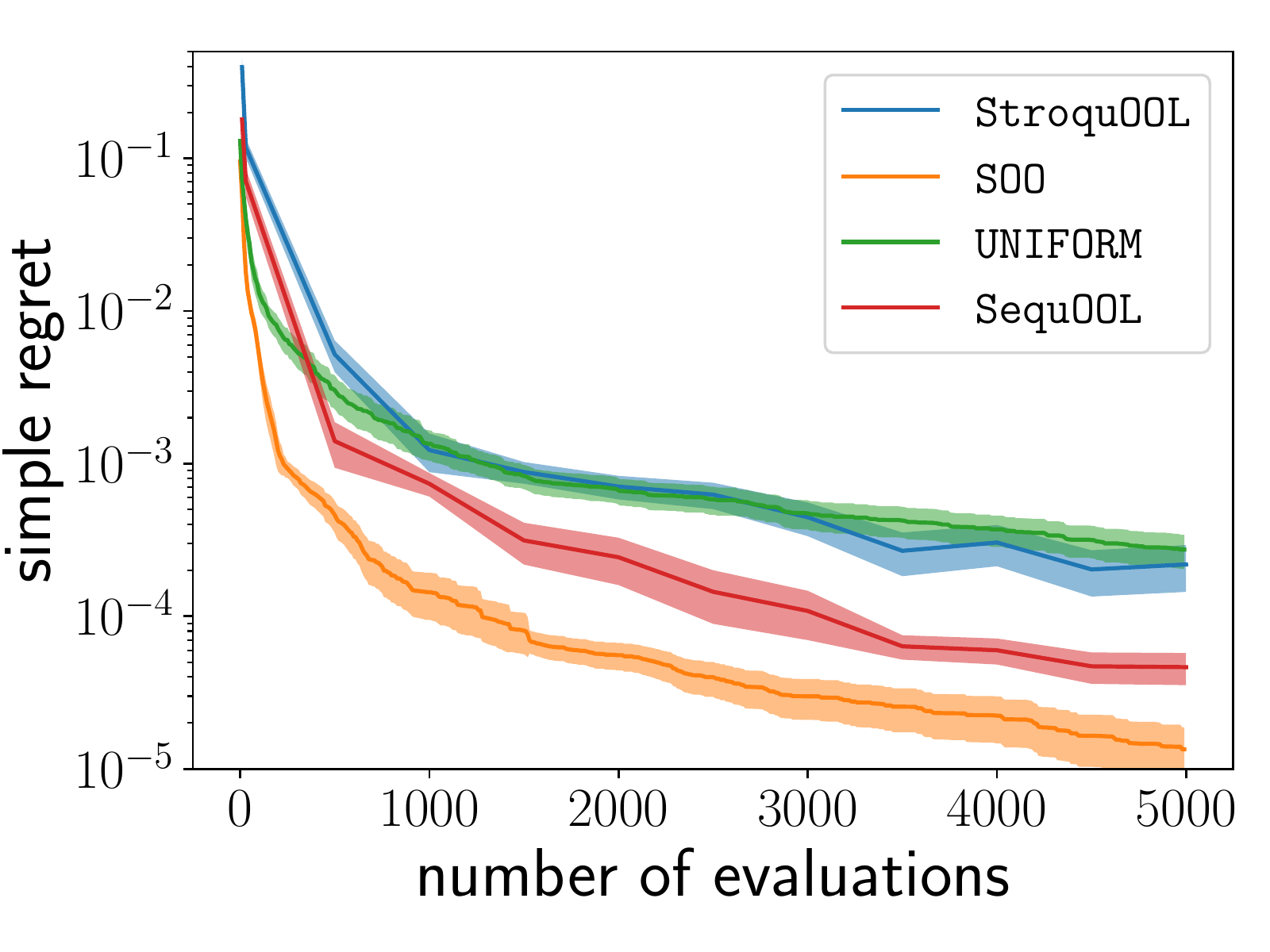}
	~~~	\includegraphics[width=0.31\textwidth,valign=t]{./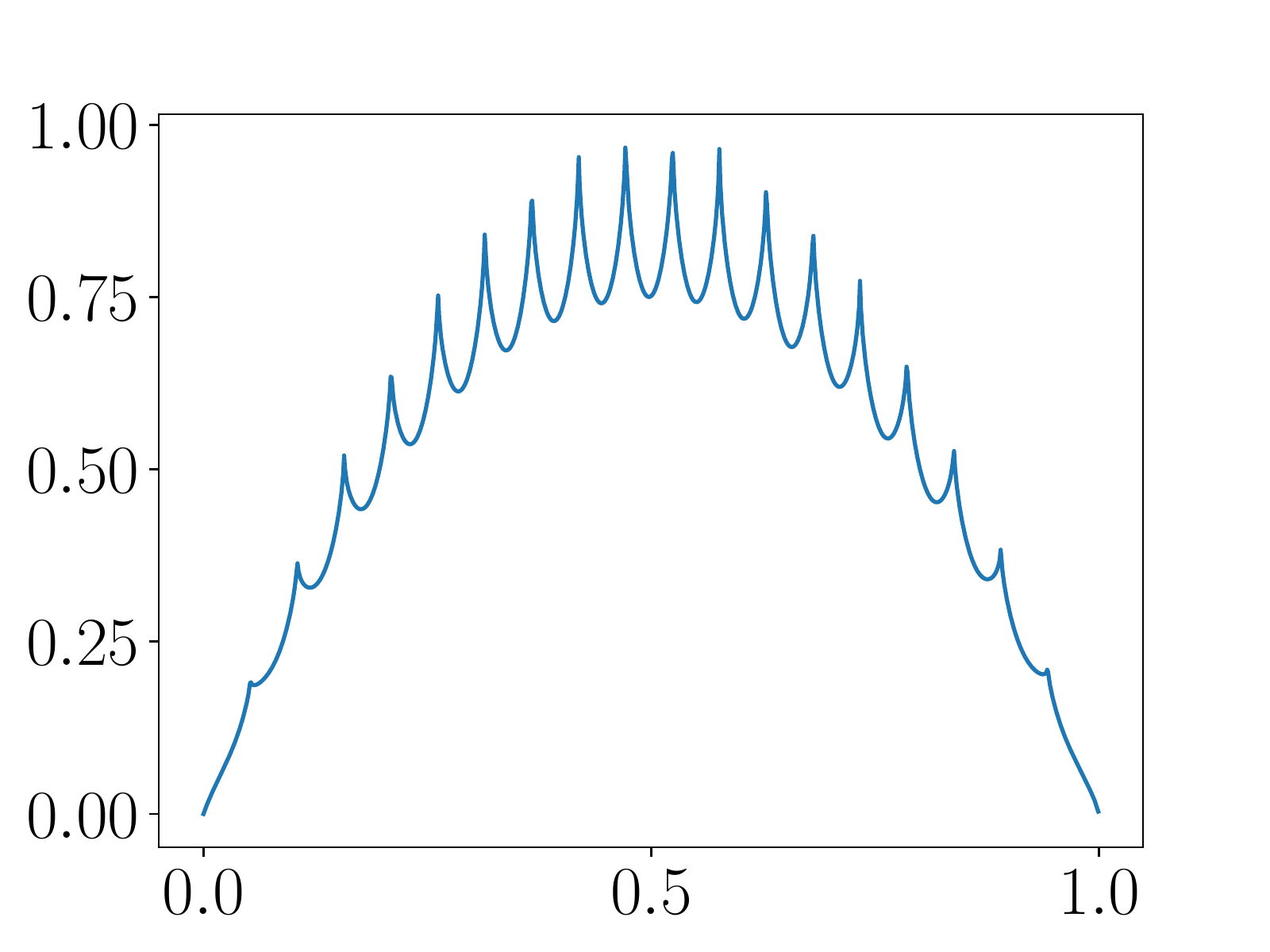}
	
	\caption{\emph{Left \& center: }Deterministic feedback. \emph{Right: }\textbf{Garland} function for which~$d=0$.}
	\label{Gar}
\end{figure}
\begin{figure}
	\center
	\includegraphics[width=0.325\textwidth,valign=t]{./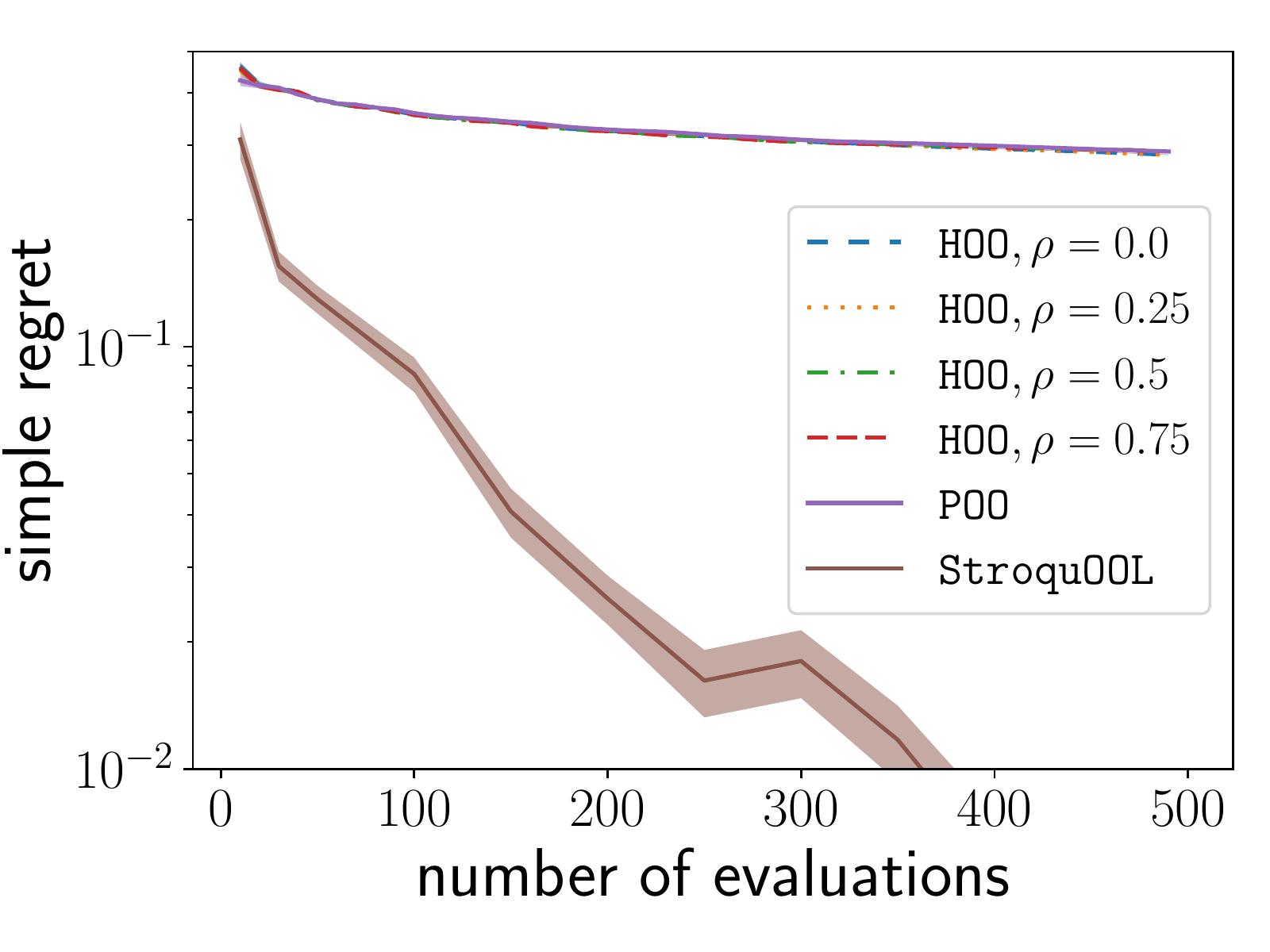}
	\includegraphics[width=0.325\textwidth,valign=t]{./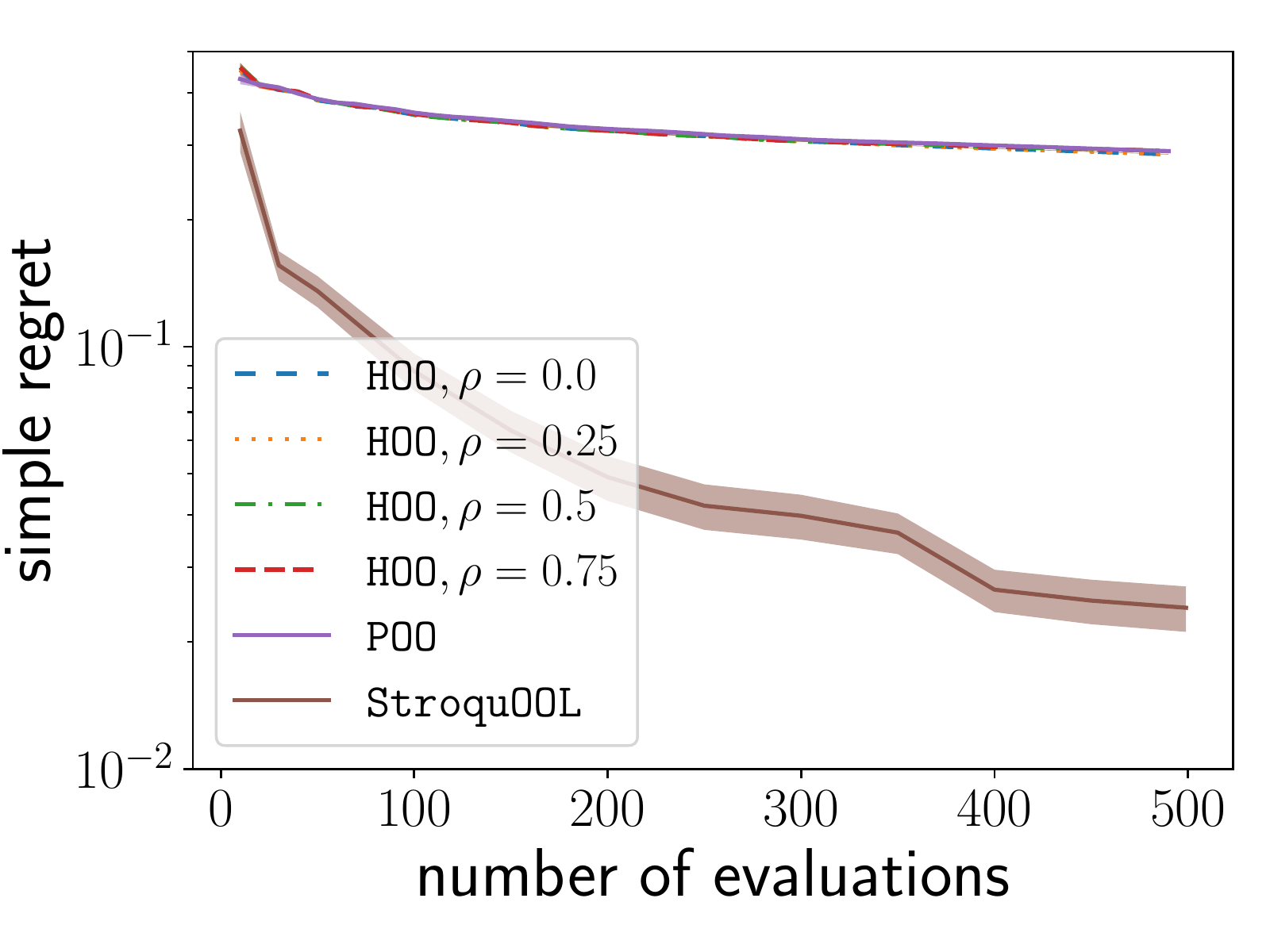}
	\includegraphics[width=0.325\textwidth,valign=t]{./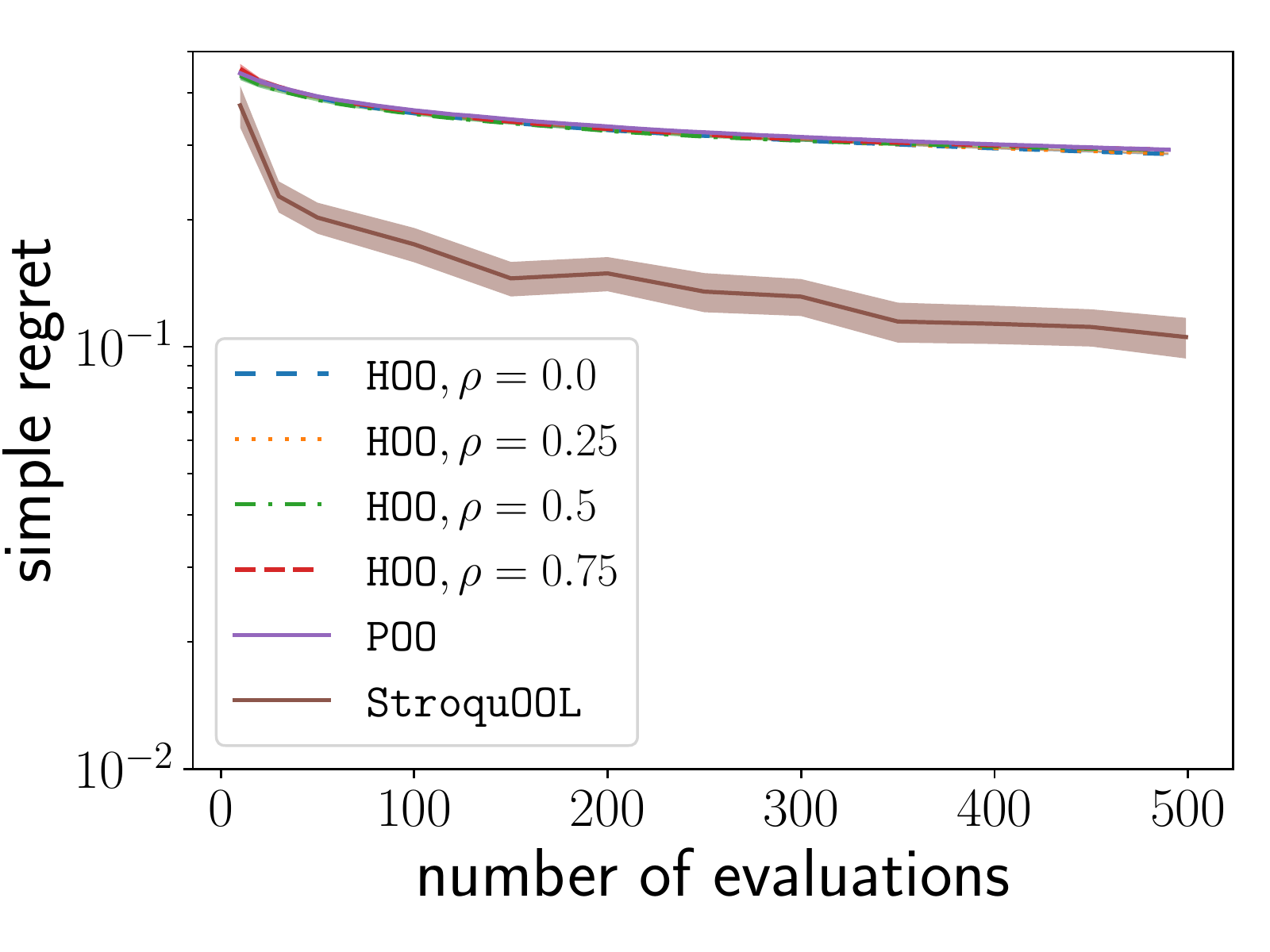}\\
	\includegraphics[width=0.325\textwidth,valign=t]{./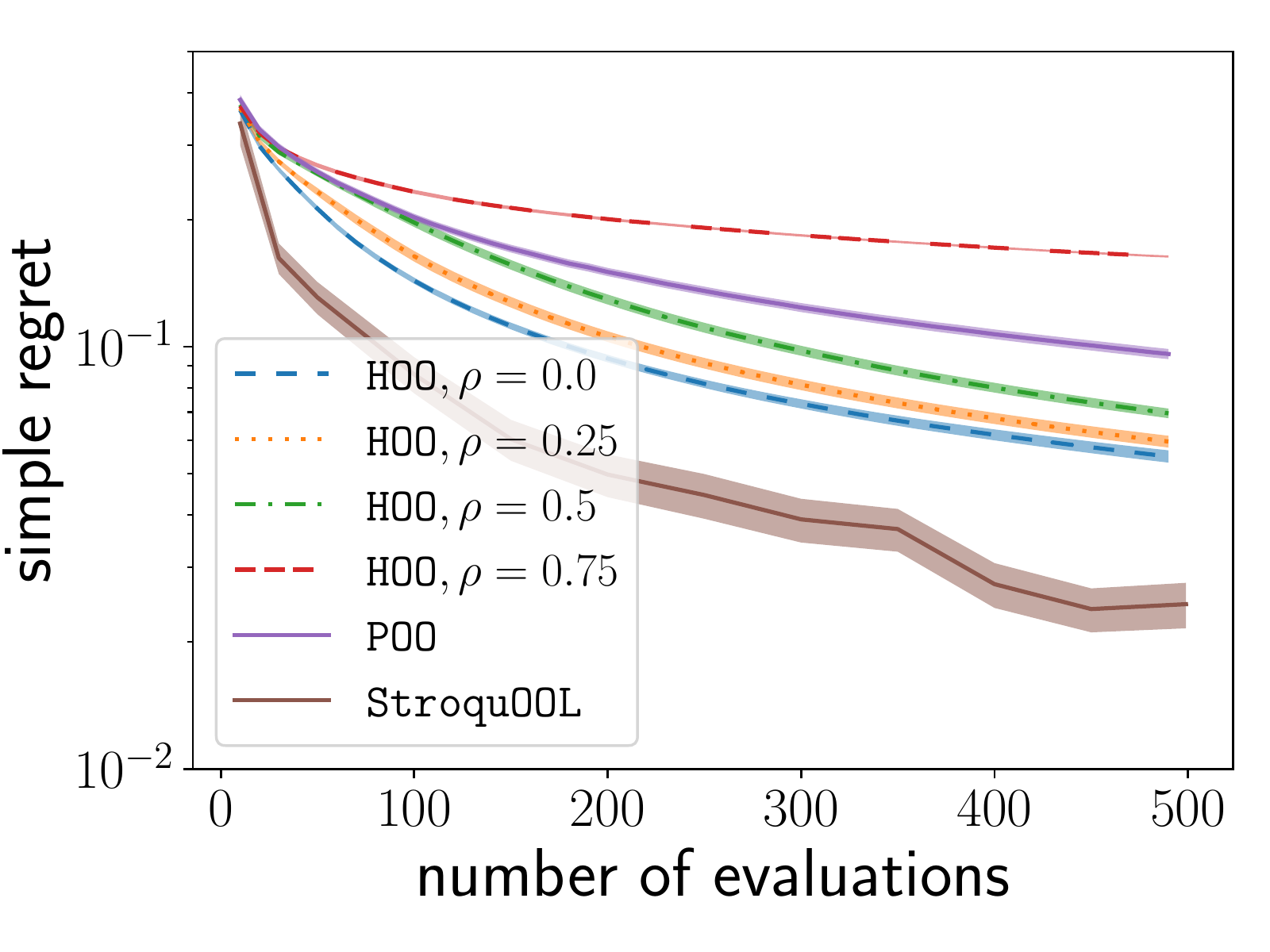}
	\includegraphics[width=0.325\textwidth,valign=t]{./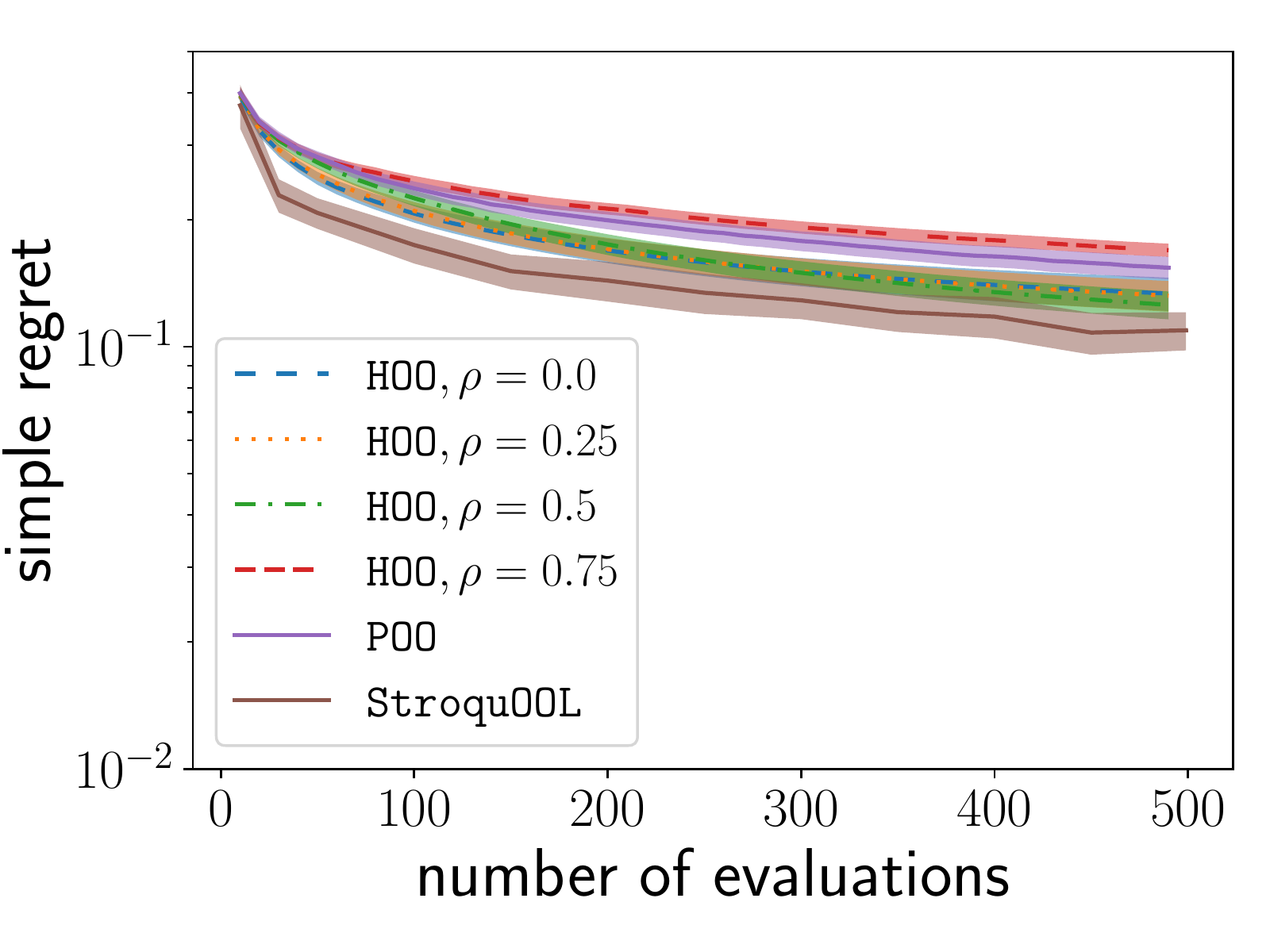}
	~~~    \includegraphics[width=0.3\textwidth,valign=t]{./Garland.pdf}
	\caption{\textbf{Garland} function: The true range of the noise is $b$ and the range of noise used by \HOO and \POO is $\tilde b$ and they are set as
		top:  $b=0,\tilde b=1$ left --- $b=0.1,\tilde b=1$ middle --- $b=1,\tilde b=1$ right,
		bottom: $b=0.1,\tilde b=0.1$ left --- $b=1,\tilde b=0.1$ middle.}
	\label{GarHOO}
\end{figure}

	A more thorough empirical study is desired. Especially  we would like to see how our methods compare with state-of-the-art black-box GO approaches~\citep{pinter2018difficult,pinter2018globally,strongin2000global,sergeyev2013introduction,sergeyev2017deterministic,sergeyev2006global,sergeyev1998global,lera2010information,kvasov2012lipschitz,lera2015deterministic,kvasov2015deterministic}.

	% thanks to JBGRILL for the code
	
		\subsection*{Acknowledgements}
	%\vspace{-.15cm}
	We would like to thank Jean-Bastien Grill for his code and C\^ome Fiegel for helpful discussions and proof reading. We gratefully acknowledge the support of the NSF through grant IIS-1619362 and of the Australian Research Council through an Australian Laureate Fellowship (FL110100281) and through the Australian Research Council Centre of Excellence for Mathematical and Statistical Frontiers (ACEMS).
	The research presented was also supported by European CHIST-ERA project DELTA, French Ministry of
	Higher Education and Research, Nord-Pas-de-Calais Regional Council,
	Inria and Otto-von-Guericke-Universit\"at Magdeburg associated-team north-european project Allocate, and French National Research Agency projects ExTra-Learn (n.ANR-14-CE24-0010-01) and BoB (n.ANR-16-CE23-0003). This research has also benefited from the support of the FMJH Program PGMO and from the support to this program from Criteo.

	%\newpage
	%\vskip 0.2in
%	\bibliographystyle{plainnat}
	\bibliography{library,biblio}
	
	\newpage

%\bookmarksetup{startatroot}
	\appendix
	% !TEX root = bartlett2019simple.tex
\section{Regret analysis of \SequOOL for deterministic feedback}
%\phantomsection
\label{app:firstone}
\setcounter{scratchcounter}{\value{theorem}}\setcounter{theorem}{\the\numexpr\getrefnumber{lem:hstar}-1}
\restalemhstar*
\setcounter{theorem}{\the\numexpr\value{scratchcounter}}
\begin{proof}
%	\todov{cant we simplify this? the proof by induction always looked weird. I think we can say a bit more concisely that the number  of opened cell is non increasing so the results}
	We prove Lemma~\ref{lem:hstar} by induction in the following sense. For a given $h$, we assume the hypotheses of the lemma for that $h$ are true and we prove by induction that $\depthOp_{h'}= h'$ for $h'\in[h]$. \\[.1cm]    
	$1^\circ$ For $h'= 0$, we trivially have $\depthOp_{h'} \geq 0$.  \\    
	$2^\circ$  Now consider $h'>0$ and assume $\depthOp_{h'-1}= h'-1$ with the objective to prove $\depthOp_{h'}= h'$.
	%\todomi{what happens if we have multiple optima 1) do we always expand all of them - that sounds wrong e.g. if the  function is flat}
	Therefore, at the end of the processing of depth $h'-1$, during which we were opening the cells of depth $h'-1$ we managed to open the cell $(h'-1, i_{h'-1}^\star)$
	the optimal node of depth $h'-1$ (i.e., such that
	$x^\star \in \mathcal P_{h'-1,i^\star_{h'-1}})$.
	During phase $h'$, the $\left\lfloor \frac{\hmax}{h'}\right\rfloor$ cells from $\left\{\partition_{h',i}\right\}_i$ with highest values $\left\{f_{h',i}\right\}_i$ are opened. 
	%\todov{proper english? same in storquool}
	%\todomi{the sentence above sound ok to me}
	For the purpose of contradiction, let us assume $\depthOp_{h'}= h'-1$ that is $\partition_{h',i_h^\star}$ is not one of them. This would mean that there exist at least 
	$\left\lfloor \frac{\hmax}{h'}\right\rfloor$ cells from $\left\{\partition_{h',i}\right\}_i$, distinct from $\partition_{h',i_h^\star}$, satisfying 
	$f_{h',i}
	\geq
	f_{h',i^\star_h}$. As $f_{h',i^\star} \geq f(x^\star) -\nu\rho^{h'}$ by Assumption~\ref{as:smooth}, this means we have $ \mathcal N_{h'}(3\nu\rho^{h'})\geq \left\lfloor\frac{\hmax}{h'}\right\rfloor+1$ (the $+1$ is for $\partition_{h',i_h^\star}$). As $h'\leq h$ this gives $\frac{\hmax}{h'} \geq \frac{\hmax}{h}$ and therefore 
	$ \mathcal N_{h'}(3\nu\rho^{h'})\geq \left\lfloor\frac{\hmax}{h}\right\rfloor+1$.
	However by assumption of the lemma we have $ \frac{\hmax}{h}\geq  C\rho^{-d(\nu,C,\rho)h} \geq  C\rho^{-d(\nu,C,\rho)h'}$. 
	It follows that $ \mathcal N_{h'}(3\nu\rho^{h'})> \left\lfloor C\rho^{-d(\nu,C,\rho)h'}\right\rfloor$. 
	This contradicts  $f$ being of near-optimality dimension $d(\nu,C,\rho)$  with associated constant $C$ as defined in Definition~\ref{def:neardim}.
	%\todomi{How can you have a contradiction with a definition? :-)}
	Indeed the condition $\mathcal N_{h'}(3\nu\rho^{h'}) \leq     C\rho^{-dh'}$ in 
	Definition~\ref{def:neardim} is equivalent to the condition
	$\mathcal N_{h'}(3\nu\rho^{h'}) \leq \left\lfloor    C\rho^{-dh'}\right\rfloor$ as  $\mathcal N_{h'}(3\nu\rho^{h'})$ is an integer. 
\end{proof}

\restathsequool*
\setcounter{scratchcounter}{\value{theorem}}\setcounter{theorem}{\the\numexpr\getrefnumber{c:readable}-1}
\restacorohstar*
\setcounter{theorem}{\the\numexpr\value{scratchcounter}}
\begin{proof}
	Let $x^\star$ be a global optimum with associated $(\nu,\rho)$. For simplicity, let $d=d(\nu,C,\rho)$.
	We have
	\[
	f(x(\timeHorizon)) 
	\stackrel{\textbf{(a)}}{\geq}
	f_{\depthOp_{\hmax}+1,i^\star} 
	\stackrel{\textbf{(b)}}{\geq}
	f(x^\star)- \nu\rho^{\depthOp_{\hmax}+1}.
	\]
	where \textbf{(a)} is because $x(\depthOp_{\hmax}+1,i^\star)\in\tree$ and $ x(\timeHorizon)=\argmax_{\partition_{h,i}\in \tree}f_{h,i}$, and 
	  \textbf{(b)} is by Assumption~\ref{as:smooth}.
	Note that the tree has depth $\hmax+1$ in the end.
	From the previous inequality we have 
	$r_\timeHorizon = \sup_{x\in\mathcal X} f\left(x\right)- f\left(x(n)\right)\leq \nu\rho^{\depthOp_{\hmax}+1}$.
	For the rest of the proof, we want to lower bound $\depthOp_{\hmax}$. Lemma~\ref{lem:hstar} provides a sufficient condition  on  $h$ to get lower bounds. This condition is an inequality in which as $h$ gets larger (more depth) the condition is more and more likely not to hold. For our bound on the regret of \SequOOL to be small, we want a quantity  $h$ so that the inequality holds but having $h$ as large as possible. So it makes sense to see when the inequality flip signs which is when it turns to equality.
	This is what we solve next. We solve Equation \ref{eq:eq} and then verify that it gives a valid indication of the behavior of our algorithm in term of its optimal $h$. 
	We denote $\bar h$ the positive real number satisfying 
	\begin{equation}\label{eq:eq}
	\frac{\hmax}{\bar h}=C\rho^{-d\bar h}.
	\end{equation} 
	First we will verify that $\left\lfloor\bar h\right\rfloor$ is a reachable depth by \SequOOL in the sense that $\bar h\leq \hmax$. As $\rho<1$, $d\geq 0$ and $\bar h\geq 0$ we have $\rho^{-d\bar h}\geq 1$.  This gives  $C\rho^{-d\bar h}\geq 1$. Finally as 
	$ \frac{\hmax}{\bar h}=C\rho^{-d\bar h}$, we have $\bar h\leq \hmax$.

	If $d=0$ we have $\bar h=\hmax/C$. 
	If $d>0$  we have  
	$\bar h = \frac{1}{d\log(1/\rho)}\lambertW\left(\hmax d\log(1/\rho)/C\right)$
	where  $\lambertW$ is the standard Lambert $\lambertW$ function. 
	%As a rough approximation, $W(x)\approx\log(x)$ for $x>0$. However, we give a tighter bound in the following. 
	%We denote $\ddot h= \min\left(\left\lfloor\bar h\right\rfloor, \hmax\right)$.
	Using standard properties of the $\lfloor\cdot\rfloor$ function, we have 
	\begin{equation}\label{eq:barbar}
	%\frac{\hmax}{ \ddot h} 
	%\geq 
	\frac{\hmax}{ \left\lfloor\bar h\right\rfloor} 
	\geq  \frac{\hmax}{\bar h} =C\rho^{-d\bar h}\geq C\rho^{-d \left\lfloor\bar h\right\rfloor}.
	%\geq C\rho^{-d \ddot h}.
	\end{equation}
	We always have $\depthOp_{\hmax}\geq 0$. If $\bar h \geq 1$,  as discussed above  $\left\lfloor\bar h\right\rfloor  \in [\hmax]$,
	therefore $\depthOp_{\hmax}\geq \depthOp_{\left\lfloor\bar h\right\rfloor},
	$ as $\depthOp_{\cdot}$ is increasing.
	Moreover $ \depthOp_{\bar h} =  \bar h$ because of Lemma~\ref{lem:hstar} which assumptions are verified because of Equation~\ref{eq:barbar}  and  $\left\lfloor\bar h\right\rfloor \in [0:\hmax]$. So in general we have $\depthOp_{\hmax}   \geq  \left\lfloor\bar h\right\rfloor$.
	%\todomi{$C_1$ - Munos in the book did something similar, remind here the definition of it }
	\noindent
	If $d=0$ we have, 
	$r_\timeHorizon
	\leq
	\nu\rho^{\depthOp_{\hmax}+1}
	\leq
	\nu\rho^{\left\lfloor\bar h\right\rfloor+1}
	=
	\nu\rho^{\left\lfloor\frac{\hmax}{C}\right\rfloor+1}
	\leq
	\nu\rho^{\frac{\hmax}{C}}
	%=
	%\nu\rho^{\left\lfloor\frac{\left\lfloor\frac{\timeHorizon}{\bar\log(\timeHorizon)}\right\rfloor}{C_1}c+1}
	%\leq
	%\nu\rho^{\left\lfloor\frac{\timeHorizon}{\bar\log(\timeHorizon)C_1}-\frac{1}{C_1}\right\rfloor+1}
	=
	\nu\rho^{\frac{1}{C}\left\lfloor\frac{\timeHorizon}{\bar\log\,\timeHorizon}\right\rfloor }.$ 

	\noindent If $d>0$ 
	$
	r_\timeHorizon
	\leq
	\nu\rho^{\depthOp_{\hmax}+1}
	~\leq~ \nu\rho^{ 
		\frac{1}{d\log(1/\rho)}\lambertW\left(\frac{\hmax d\log(1/\rho)}{C}\right)}.
	$
	%\todov{actually the last inequality above is quite sketchy!}
	To obtain the result in Corollary~\ref{c:readable}, we use that
	$W(x)$  verifies for $x\geq e$,
	%\todomi{why we need Lambert $\lambertW$ function? }
	$\lambertW(x)\geq \log\left(\frac{x}{\log x}\right)$~\citep{Hoorfar08IO}.
	Therefore, if $\hmax d\log(1/\rho)/C>e$ we have, denoting $d_{\rho}=d\log(1/\rho)$,
	\begin{align*}
	\frac{r_\timeHorizon}{\nu}
	&\leq 
	\rho^{ \frac{1}{d_{\rho}}
		\left(
		\log\left(\frac{\hmax d_{\rho}/C}{
			\log\left(\hmax d_{\rho}/C   \right)
		}\right)
		\right)
	}
	=  e^{ \frac{1}{d\log(1/\rho)}
		\left(
		\log\left(\frac{\hmax d_{\rho}/C}{
			\log\left(\frac{\hmax d_{\rho}}{C}    \right)
		}\right)
		\right)
		\log(\rho)}
	= \left(\frac{\hmax d_{\rho}/C}{\log\left(\frac{\hmax d_{\rho}}{C}\right)}\right)^{- \frac{1}{d}}.
	\end{align*}\end{proof}
\section{\StroquOOL is not using a budget larger than $n$ } %as claimed in Section~\ref{s:stroalgo} }
%\phantomsection
\label{app:budstro}
Summing over the depths except the depth $0$,  \StroquOOL never uses more evaluations than the budget $\hmax\bar\log^2(\hmax)$ during this depth exploration as 
\begin{align*}
%\sum_{h=0}^{\hmax-1} a_h=
\sum_{h=1}^{\hmax}\sum_{p=0}^{\left\lfloor\hmax/h\right\rfloor}
\left\lfloor \frac{\hmax}{hp}\right\rfloor 
&\leq 
\sum_{h=1}^{\hmax}\sum_{p=0}^{\left\lfloor\hmax/h\right\rfloor}
 \frac{\hmax}{hp}
=
 \sum_{h=1}^{\hmax}\frac{\hmax}{h}\sum_{p=0}^{\left\lfloor\hmax/h\right\rfloor}
 \frac{1}{p}
 =
 \sum_{h=1}^{\hmax}\frac{\hmax}{h}
 \bar\log(\left\lfloor\hmax/h\right\rfloor)\\
&\leq
 \bar\log(\hmax) \sum_{h=1}^{\hmax}\frac{\hmax}{h}
=
\hmax\bar\log^2(\hmax).
%\leq
%(\left\lfloor\log_2(\hmax)\right\rfloor +1) \frac{\hmax}{h}
\end{align*}
We need to add the additional evaluations for the cross-validation at the end, %\todom{we can have $n/4$ at the end and total budget of $n$}
\begin{align*}
%\sum_{h=0}^{\hmax-1} a_h=
\sum_{p=0}^{\pmax}
\frac{1}{2}\left\lfloor \frac{\timeHorizon}{2(\bar\log\timeHorizon+1)^2}\right\rfloor
\leq \frac{\timeHorizon}{4}\cdot
\end{align*}
Therefore, in total the budget is not more than $\frac{\timeHorizon}{2}+\frac{\timeHorizon}{4}+\hmax=\timeHorizon$.
%Again notice we use the budget of $\timeHorizon+1$ only for the notational convenience, we
%could also use $\timeHorizon/4$ for the evaluation in the end to fit under $\timeHorizon$
%(it's important that the amount of evaluations is \emph{linear} in $\timeHorizon$).

%
\section{Lower bound on the probability of event $\xi_\delta$% in Lemma~\ref{l:event} % in Section~\ref{sub:anaStro} 
}
%\phantomsection
\label{app:proofevent}
In this section, we define and consider event $\xi_\delta$ and prove 
it holds with high probability.
%\todov{new version of the lemma?}
%\todov{write $\xi_\delta$}
\begin{lemma}\label{l:event}
	Let  $\mathcal{C}$ be the set of cells evaluated by \StroquOOL during one of its runs. $\mathcal{C}$ is a random quantity.
	Let $\xi_\delta$ be the event under which all average
	estimates in the cells receiving at least one evaluation from \StroquOOL are within their classical confidence interval, then  $P(\xi_\delta)\geq 1 - \delta$, where 
	\[
	\xi_\delta
	\triangleq
	\left\{ \forall \partition_{h,i} \in \mathcal{C}, \,p\in[0:\pmax]: \text{if~ } \pullsNumber_{h,i}= 2^p, \text{ then }
	\left|\hat f_{h,i} - f_{h,i}    \right| \leq b\sqrt{    \frac{\log(2\timeHorizon^2/\delta)}{2^{p+1}}}
	\right\}\!\cdot
	\]

\end{lemma}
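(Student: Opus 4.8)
The plan is to establish Lemma~\ref{l:event} as a uniform concentration bound, obtained by applying \textbf{Hoeffding's inequality} cellwise and then a single \textbf{union bound} over all evaluated cells and all dyadic sample sizes~$2^{p}$. The only probabilistic input is the stochastic-feedback assumption~\eqref{eq:store}: the evaluations of a cell $\partition_{h,i}$ are mutually independent, each has conditional mean $f_{h,i}$, and each lies within $b$ of $f_{h,i}$. Fixing a cell $\partition_{h,i}$ and an index $p\in[0:\pmax]$, and working on the event $\{\pullsNumber_{h,i}=2^{p}\}$, the estimate $\hat f_{h,i}=2^{-p}\sum_{s=1}^{2^{p}}y_{s}$ is an average of $2^{p}$ independent, $b$-bounded deviations, so Hoeffding's inequality yields, for every $\epsilon>0$,
\[
P\!\left(\bigl|\hat f_{h,i}-f_{h,i}\bigr|>\epsilon\right)\;\le\;2\exp\!\left(-\frac{2\cdot 2^{p}\,\epsilon^{2}}{b^{2}}\right),
\]
where the constant in the exponent comes from the boundedness-to-sub-Gaussianity conversion recorded in the Section~1 footnote.

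The second step is to calibrate $\epsilon$ to the radius in the statement. Substituting $\epsilon=b\sqrt{\log(2\timeHorizon^{2}/\delta)/2^{p+1}}$ cancels the $2^{p}$ factors and turns the exponent into exactly $-\log(2\timeHorizon^{2}/\delta)$, so the right-hand side becomes $2\cdot\tfrac{\delta}{2\timeHorizon^{2}}=\delta/\timeHorizon^{2}$. Hence, for each pair $(\partition_{h,i},p)$, the probability that $\pullsNumber_{h,i}=2^{p}$ while $\hat f_{h,i}$ falls outside its confidence interval is at most $\delta/\timeHorizon^{2}$.

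The third step is the union bound. The number of contributing pairs is controlled by the budget analysis of Appendix~\ref{app:budstro}: \StroquOOL opens at most $\hmax\bar\log\hmax\le\timeHorizon$ cells, hence evaluates at most $\maxNumberChildren\timeHorizon+1$ distinct cells, while $p$ ranges over the $\pmax+1\le\log_{2}\timeHorizon+1$ values of $[0:\pmax]$. The number of relevant $(\partition_{h,i},p)$ pairs is therefore at most $\timeHorizon^{2}$, and summing the per-pair bound $\delta/\timeHorizon^{2}$ gives $P(\xi_{\delta}^{c})\le\delta$, i.e.\ $P(\xi_{\delta})\ge1-\delta$, which is the claim.

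The step I expect to be the main obstacle is making this union bound rigorous in the face of \StroquOOL's adaptivity: neither the set $\mathcal C$ of evaluated cells nor the counts $\pullsNumber_{h,i}$ are fixed in advance, since the algorithm selects which cell to open from the \emph{observed} empirical values, and in particular the event $\{\pullsNumber_{h,i}=2^{p}\}$ is itself random. I would handle this in the standard fashion by attaching to every cell of the partition its own i.i.d.\ noise stream and applying the displayed Hoeffding bound to the fixed length-$2^{p}$ prefix of that stream. Each such prefix event is defined independently of the algorithm's choices, so the bound above holds verbatim, and $\xi_{\delta}^{c}$ is contained in the union of these prefix events over the (at most $\maxNumberChildren\timeHorizon+1$) reachable cells and the $\pmax+1$ values of $p$; equivalently one may write $\hat f_{h,i}-f_{h,i}$ as a martingale and invoke Azuma--Hoeffding. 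This removes the dependence on the data-driven sampling order and completes the argument.
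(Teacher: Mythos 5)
Your first two steps (cellwise Hoeffding and the calibration of $\epsilon$ to the radius $b\sqrt{\log(2\timeHorizon^2/\delta)/2^{p+1}}$) are fine and agree with the paper. The genuine gap is in your last step, and it is precisely the point the paper's proof flags as ``the crucial point.'' A union bound needs a \emph{deterministic} index set of polynomial size, and neither reading of your argument supplies one. Write $B_{h,i,p}$ for the prefix event that the average of the first $2^p$ entries of cell $\partition_{h,i}$'s attached noise stream deviates by more than the radius. If ``reachable cells'' means the cells actually evaluated in the run (the random set $\mathcal C$), then the inclusion $\xi_\delta^c\subseteq\bigcup_{(h,i)\in\mathcal C,\,p}B_{h,i,p}$ involves a union over a \emph{random, data-dependent} index set, and bounding its probability by $(\maxNumberChildren\timeHorizon+1)(\pmax+1)\cdot\delta/\timeHorizon^2$ is not a valid union bound: $\mathcal C$ is correlated with the events $B_{h,i,p}$, since the algorithm preferentially evaluates cells whose observed noise is favorable. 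If instead you union over a deterministic superset --- all cells of $\partition$ that could be evaluated under \emph{some} noise realization --- you are summing $\delta/\timeHorizon^2$ over potentially $\sum_h \maxNumberChildren^h$ cells down to depth $\hmax+1$, i.e.\ exponentially many, and the bound is vacuous. Attaching a fresh i.i.d.\ stream to every cell makes each $B_{h,i,p}$ well defined but does not shrink this union, and the ``equivalently, Azuma--Hoeffding'' remark faces the same issue: a martingale argument handles one cell's adaptive count, not the choice of which cells enter the union.

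The paper resolves this by re-indexing the estimators not by cell identity but by \emph{order of first sampling}: the $j$-th distinct node evaluated, $j\in[\timeHorizon]$, together with $p\in[0:\pmax]$, which is a deterministic, polynomial-size index set. The node $\partition_{H_j,I_j}$ is random, but conditionally on its identity the deviations $(Y_j^s)_s$ are i.i.d., so Chernoff--Hoeffding applies conditionally and integrates out to an unconditional per-pair bound; the union bound is then legitimate. A variant of your per-cell-stream idea can be rescued, but it requires an argument you never make: in \StroquOOL's exploration phase the count $\pullsNumber_{h,i}$ is fixed at the moment the \emph{parent} of $\partition_{h,i}$ is opened, hence before any of cell $\partition_{h,i}$'s own noise is revealed, so under the coupling $\{\pullsNumber_{h,i}=2^p\}$ is independent of $B_{h,i,p}$; then $P(\pullsNumber_{h,i}=2^p,\,B_{h,i,p})=P(\pullsNumber_{h,i}=2^p)\,P(B_{h,i,p})$, and summing $P(\pullsNumber_{h,i}=2^p)$ over \emph{all} cells of $\partition$ gives the expected number of evaluated cells, which is polynomial in $\timeHorizon$ (one must still treat the cross-validation evaluations, which \emph{are} triggered by the cell's own estimates, separately). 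Without either this independence step or the paper's re-indexing, the proof as written does not go through. (A minor, non-substantive point: with $|y_t-f(x_t)|\le b$ the observations have range $2b$, so Hoeffding's exponent is $-2^p\epsilon^2/(2b^2)$ rather than $-2\cdot 2^p\epsilon^2/b^2$; the paper is equally loose about this constant.)
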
    
%\todov{is open good vocabulary?}
%The proof of Lemma~\ref{l:event} uses classic arguments
%and is reported in Section~\ref{app:proofevent} of the appendix.

\begin{proof}%[Lemma~\ref{l:event}]
	The proof of this lemma follows  %\todom{go over this}
	 the proof of the equivalent statement given for
	\StoSOO \citep{Valko88SS}. The crucial point is that 
	while we have potentially exponentially many combinations 
	of cells that can be evaluated, given any particular execution 
	we need to consider only a polynomial number of estimators
	for which we can use Chernoff-Hoeffding concentration inequality.

Let $m$ denote the (random) number of different nodes sampled by the algorithm
up to time $n$.
Let $\tau_j^1$ be the first time when the $j$-th new node $\partition_{H_j,I_j}$ is
sampled, i.e., at time $\tau_j^1-1$ there are only $j-1$ different nodes that
have been sampled whereas at time $\tau_j^1$, the $j$-th new node
$\partition_{H_j,I_j}$  is sampled for the first time. Let $\tau_j^s$, for $1\leq
s\leq T_{H_j,I_j}(n)$, be the time when the node $\partition_{H_j,I_j}$ is sampled
for the $s$-th time. Moreover, we denote $Y_j^s = y_{\tau_j^s} -
f(x_{H_j,I_j})$. Using this notation, we rewrite $\xi$ as:
\begin{align}\label{eq:xistosoo}
\xi_\delta = \Bigg\{ &\forall j,p \mbox{ s.t. }, 1\leq i \leq m,
\,p\in[0:\pmax], \text{if }T_{H_i, J_i}(n)=2^p,
 \bigg| \frac{1}{2^p}\sum_{s=1}^{2^p} Y^s_j \bigg| \leq \sqrt{\frac{\log(2\timeHorizon^2 /\delta)}{2^{p+1}}} \Bigg\}.
\end{align}
Now, for any $j$ and $p$, the $(Y_j^s)_{1\leq s\leq u}$ are i.i.d.~from
some distribution $\partition_{H_j,I_j}$. The node $\partition_{H_j,I_j}$ is random and
depends on the past samples (before time $\tau_j^1$) but the  $(Y_j^s)_s$ are
conditionally independent given this node and consequently:
\begin{align*}
\P&\Bigg( \bigg| \frac{1}{2^p}\sum_{s=1}^{2^p} Y_j^s \bigg| \leq \sqrt{\frac{\log(2\timeHorizon^2
		/\delta)}{2^{p+1}}}\Bigg) = \\
& =   \E_{\partition_{H_j,I_j}}\, \P\bigg( \bigg|
\frac{1}{2^p}\sum_{s=1}^u Y^s_i \bigg| \leq \sqrt{\frac{\log(2\timeHorizon^2 /\delta)}{2^{p+1} }}
\ \Bigg| \partition_{H_j,I_j} \bigg) \\
& \geq 1-\frac{\delta}{2n},
\end{align*}
using Chernoff-Hoeffding's inequality. We finish the proof
by taking a union bound over all values of $1\leq j\leq n$ and $1\leq p\leq \pmax$.
\end{proof}%
\section{Proof of Lemma~\ref{lem:hstarSto}}% in Section~\ref{sub:anaStro} }
%\phantomsection
\label{app:prooflemmastro}
\setcounter{scratchcounter}{\value{theorem}}\setcounter{theorem}{\the\numexpr\getrefnumber{lem:hstarSto}-1}
\restahstarSto*
\setcounter{theorem}{\the\numexpr\value{scratchcounter}}
\begin{proof}
	%We want to prove the lemma for a fixed $h,p$ pair verifying  $ \sqrt{\frac{\log(n)}{2^p}} \leq \nu\rho^h$ and if $ \frac{\hmax}{h}
	%    \geq  C\rho^{-d(\nu,\rho)h}$.
	%We prove Lemma~\ref{lem:hstarSto} 
	We place ourselves on  event $\xi_\delta$ defined in Lemma~\ref{l:event} and for which we  proved  that $P(\xi_\delta)\geq 1 - \delta$.  We fix $p$. 
	We prove the statement of the lemma, given that event $\xi_\delta$ holds, by induction in the following sense. For a given $h$ and $p$, we assume the hypotheses of the lemma for that $h$ and $p$ are true and we prove by induction that $\depthOp_{h',p}= h'$ for $h'\in[h]$. \\[.1cm] 
	$1^\circ$ For $h'= 0$, we trivially have that $\depthOp_{h',p} \geq 0$.\\     
	$2^\circ$  Now consider $h'>0$, and assume $\depthOp_{h'-1,p}= h'-1$ with the objective to prove that $\depthOp_{h',p}= h'$.
	%and assume that $ \frac{\hmax}{h2^p}\geq  C\rho^{-d(\nu,\rho)h}$ and $b\sqrt{\frac{\log(2\timeHorizon^2/\delta)}{2^{p+1}}} \leq \nu\rho^h$. We need to show that $\depthOp_{h,p}= h$. As a first step, we show that $\depthOp_{h-1,p}= h-1$.  If $h=1$ we already know $\depthOp_{0,p} \geq 0$ and if  $h>1$, we have that for all $h'\in[h-1]$, 
	%
%	\[\frac{\hmax}{h'2^p}\geq 
%	\frac{\hmax}{h2^p}
%	\geq  C\rho^{-d(\nu,\rho)h}
%	\geq  C\rho^{-d(\nu,\rho)h'}\geq 1\quad \text{and}\quad b\sqrt{\frac{\log(2\timeHorizon^2/\delta)}{2^{p+1}}} \leq \nu\rho^h\leq \nu\rho^{h'},\]
	%
%	which means, assuming  that the proposition of the lemma is true 
%	for  $h'=h-1$ and $p$ 
%	that $\depthOp_{h-1,p}= h-1$.
	%    
	Therefore, at the end of the processing of depth $h'-1$, during which we were opening the cells of depth $h'-1$ we managed to open the cell $\partition_{h'-1, i_{h'-1}^\star}$ with at least $2^p$ evaluations. $\partition_{h'-1, i_{h'-1}^\star}$ is
	the optimal node of depth $h'-1$ (i.e., such that
	$x^\star \in \mathcal P_{h'-1,i^\star})$.
	Let $m$ be the largest integer such that $2^p\leq \frac{\hmax}{2h'm}$.
	We have $\frac{\hmax}{2h'm}	\leq \left\lfloor\frac{\hmax}{h'm}\right\rfloor$ and also $2^p\geq \frac{\hmax}{2h'(m+1)}\geq \frac{\hmax}{4h'm}$.
	During phase $h'$, the $m$ cells from $\left\{\partition_{h',i}\right\}$ with highest values $\left\{\hat f(x_{h',i})\right\}_{h',i}$ and having been evaluated at least $\left\lfloor\frac{\hmax}{h'm}\right\rfloor\geq 2^{p}$ are opened  at least $\left\lfloor\frac{\hmax}{h'm}\right\rfloor\geq 2^{p}$ times.
	For the purpose of contradiction, let us assume that $\partition_{h',i_{h'}^\star}$ is not one of them. This would mean that there exist at least 
	$m$ cells from $\left\{\partition_{h',i}\right\}$, distinct from $\partition_{h',i_h^\star}$, satisfying 
	$\hat f_{h',i}
	\geq
	\hat f_{h',i^\star_{h'}}$ and each having been evaluated at least $2^p$  times. 
	This means that, for these cells we have
	$
		f_{h',i} + \nu\rho^{h'}
\geq 
	f_{h',i} + \nu\rho^{h}
	\stackrel{\textbf{(a)}}{\geq} 
	f_{h',i} + b\sqrt{    \frac{\log(2\timeHorizon^2/\delta)}{    2^{p+1}}}
	\stackrel{\textbf{(b)}}{\geq} 
	\hat f_{h',i}
	\geq
	\hat f_{h',i^\star_{h'}}
	\stackrel{\textbf{(b)}}{\geq} 
	f_{h',i^\star_{h'}}- b\sqrt{    \frac{\log(2\timeHorizon^2/\delta)}{    2^{p+1}}}
		\stackrel{\textbf{(a)}}{\geq} 
	f_{h',i^\star_{h'}}- \nu\rho^{h}
	\geq 
	f_{h',i^\star_{h'}}- \nu\rho^{h'}$,
	where \textbf{(a)} is by assumption of the lemma, 
	\textbf{(b)} is because $\xi$ holds.
	As $f_{h',i^\star_{h'}} \geq f(x^\star) -\nu\rho^{h'}$ by Assumption~\ref{as:smooth}, this means we have $ \mathcal N_{h'}(3\nu\rho^{h'})\geq m+1\geq \frac{\hmax}{4h'2^p}+1$ (the $+1$ is for $\partition_{h',i_{h'}^\star}$). % \todomi{$2\nu\rho^h$? }
	 As $h'\leq h$ this gives $\frac{\hmax}{h'2^p} \geq \frac{\hmax}{h2^p}$ and therefore $ \mathcal N_{h'}(3\nu\rho^{h'})\geq \left\lfloor\frac{\hmax}{4h2^p}\right\rfloor+1$.
	However by assumption of the lemma we have $ \frac{\hmax}{4h2^p}\geq  C\rho^{-d(\nu,C,\rho)h} \geq  C\rho^{-d(\nu,C,\rho)h'}$. 
	It follows that $ \mathcal N_{h'}(3\nu\rho^{h'})> \left\lfloor C\rho^{-d(\nu,C,\rho)h'}\right\rfloor$. 
	This leads to having a contradiction with the function $f$ being of near-optimality dimension $d(\nu,C,\rho)$
	as defined in~Definition~\ref{def:neardim}.
	%\todomi{\textbf{Sure? What if we still have have the same $d$ for a different $C$?
	%    Here we just showed that this $C$ won't work.
	%    Also by consequence no smaller $C$. But how about a higher C?}
	%    }
	Indeed, the condition $\mathcal N_{h'}(3\nu\rho^{h'}) \leq     C\rho^{-dh'}$ in 
	Definition~\ref{def:neardim} is equivalent to the condition
	$\mathcal N_{h'}(3\nu\rho^{h'}) \leq \left\lfloor    C\rho^{-dh'}\right\rfloor$ as  $\mathcal N_{h'}(3\nu\rho^{h'})$ is an integer. 
	%\todomi{why we specifically discuss the integer part $\lfloor C\rho^{-dh}\rfloor$?\\
	%Victor Its just that their is no floors in the Definition}
	Reaching the contradiction proves the claim of the lemma.
\end{proof}
\section{Proof of Theorem~\ref{th:highnoise} and Theorem~\ref{th:lownoise}}%
%\phantomsection %Corollary~\ref{c:readableHigh} and Corollary~\ref{c:readableLow}
%  in Section~\ref{sub:anaStro} }
\label{app:proofALLmastro}
\restathhighnoise*
\restathlownoise*
%\todomi{use restatable enviroment such as for Theorem 2 and Theorem 3)}
%
\begin{proof}[Proof of~Theorem~\ref{th:highnoise} and Theorem~\ref{th:lownoise}]
	We first place ourselves on the event $\xi$ defined in Lemma~\ref{l:event} and where it is proven that $P(\xi)\geq 1 - \delta$. We bound the simple regret of \StroquOOL on $\xi$. 
	%\todom{because we will only care about the expectation?}    
	We consider a global optimum $x^\star$ with associated $(\nu,\rho)$. For simplicity we write $d=d(\nu,C,\rho)$.
	We have for all 
	$p\in[0:\pmax]$
	\begin{align*}
	f(x(\timeHorizon))&+ b\sqrt{\frac{\log(2\timeHorizon^2/\delta)}{\hmax}}       \stackrel{\textbf{(a)}}{\geq}
	\hat f(x(\timeHorizon))
	\stackrel{\textbf{(c)}}{\geq}
	\hat f(x(\timeHorizon,p))
	\stackrel{\textbf{(b)}}{\geq}
	\hat f\left(x(\depthOp_{\hmax,p}+1,i^\star)\right)\\
	&\stackrel{\textbf{(a)}}{\geq}
	f(x(\depthOp_{\hmax,p}+1,i^\star)) -b\sqrt{\frac{\log(2\timeHorizon^2/\delta)}{\hmax}}  
	\stackrel{\textbf{(d)}}{\geq}
	f\left(x^\star\right)- \nu\rho^{\depthOp_{\hmax,p}+1}  -b\sqrt{\frac{\log(2\timeHorizon^2/\delta)}{\hmax}} 
	\end{align*}
	%\todovout{remove the x here but the $i^*$ subscript is big no?}
	%\todomi{$f^\star = f(x^\star)$?}
	%\todomi{How do we have the first inequality above?\\
	%VIcotr: ok now?}
	where \textbf{(a)} is because the $x(\timeHorizon,p)$ are evaluated $\hmax$ times at the end of \StroquOOL and because $\xi$ holds,  \textbf{(b)} is because $x_{\depthOp_{\hmax,p}+1,i^\star}\in \{\left(h,i\right)\in \tree,\pullsNumber_{h,i}\geq 2^p\}$ and  $x(n,p) = \argmax\limits_{\partition_{h,i}\in \tree,\pullsNumber_{h,i}\geq 2^p} \hat f_{h ,i}$, \textbf{(c)} is because $x(\timeHorizon) = \argmax\limits_{\{x(\timeHorizon,p) ,p\in[0:\pmax]\}} \hat f(x(\timeHorizon,p))$, and 
	\textbf{(d)} is by Assumption~\ref{as:smooth}.
	
	From the previous inequality we have 
	$r_\timeHorizon =  f\left(x^\star\right)- f\left(x(\timeHorizon)\right)\leq \nu\rho^{\depthOp_{\hmax,p}+1}+2b\sqrt{\frac{\log(2\timeHorizon^2/\delta)}{\hmax}}$, for     $p\in[0:\pmax]$.
	%

	%\todov{haha trying to make things intuitive!}
	For the rest of proof we want to lower bound $\max_{p\in[0:\pmax]}\depthOp_{\hmax,p}$. Lemma~\ref{lem:hstarSto} provides some sufficient conditions  on $p$ and $h$ to get lower bounds. These conditions are inequalities in which as $p$ gets smaller (fewer samples) or $h$ gets larger (more depth) these conditions are more and more likely not to hold. For our bound on the regret of \StroquOOL to be small, we want quantities $p$ and $h$ where the inequalities hold but using as few samples as possible (small $p$) and having $h$ as large as possible. Therefore we are interested in determining when the inequalities flip signs which is when they turn to equalities. 
	This is what we solve next. 
	We denote $\tilde h$ and $\tilde p$ the real numbers satisfying
	\begin{equation}\label{eq:eqStro}
	\frac{\hmax\nu^2\rho^{2\tilde h}}{4\tilde hb^2\log(2\timeHorizon^2/\delta)}=C\rho^{-d\tilde h}
	\quad \text{and} \quad b\sqrt{\frac{\log(2\timeHorizon^2/\delta)}{2^{\tilde p}}} = \nu\rho^{\tilde h}.  
	\end{equation}
	Our approach is to solve Equation~\ref{eq:eqStro} and then verify that it gives a valid indication of the behavior of our algorithm in term of its optimal $p$ and $h$.     We have  
	\[\tilde h = \frac{1}{(d+2)\log(1/\rho)}\lambertW\left(\frac{\nu^2\hmax (d+2)\log(1/\rho)}{4Cb^2\log(2\timeHorizon^2/\delta)}\right)\]
	where standard $\lambertW$ is the Lambert $\lambertW$ function.  
	
	However after a close look at the Equation~\ref{eq:eqStro}, we notice that it is possible to get values $\tilde p< 0$ which would lead to a number of evaluations $2^p<1$. This actually corresponds to an interesting case when the noise has a small range and where we can expect to obtain an improved result, that is: obtain a regret rate close to the deterministic case. This low range of noise case then has to be considered separately.
	
	Therefore, we distinguish two cases which corresponds to different noise regimes depending on the value of $b$. Looking at the equation on the right of~\eqref{eq:eqStro}, we have that $\tilde p< 0$ if $\frac{\nu^2\rho^{2\tilde h}}{b^2\log(2\timeHorizon^2/\delta)} > 1$. Based on this condition we now  consider the two cases. However for both of them we define some generic $\ddot h$ and     $\ddot p$.
	\paragraph{High-noise regime $\frac{\nu^2\rho^{2\tilde h}}{b^2\log(2\timeHorizon^2/\delta)}\leq 1$:}
	In this case, we denote $\ddot h = \tilde h$ and $\ddot p = \tilde p$.
	%            
	%    As we have $\frac{\hmax}{\bar h}=C_1\rho^{-d\bar h}$ and $\frac{\nu^2\rho^{2\bar h}}{b^2\log(\timeHorizon^{3/2})}\leq 1$, then, 
	%    $\frac{\hmax\nu^2\rho^{2\bar h}}{\bar hb^2\log(\timeHorizon^{3/2})}\leq C_1\rho^{-d\bar h}$. From this last inequality and the fact that $\tilde h$ corresponds to the case of equality for this equation, we deduce that $\bar h\leq \tilde h$ (the left term of the inequality decreases with $h$ while the right term increases).    
	As $\frac{1}{2^{\tilde p} } 
	=
	\frac{\nu^2\rho^{2\tilde h}}{b^2\log(2\timeHorizon^2/\delta)}
	\leq 1$
	by construction,     we have $\tilde p\geq 0$.
	Using standard properties of the $\lfloor\cdot\rfloor$ function, 
	%and denoting $\lfloor x\rfloor_+=\max(\lfloor x\rfloor,0)$ for all real number $x$, 
	we have 
	\begin{equation}\label{eq:barbarSto2}
	b\sqrt{\frac{\log(2\timeHorizon^2/\delta)}{2^{\left\lfloor\tilde p\right\rfloor+1}}}
	\leq
	b\sqrt{\frac{\log(2\timeHorizon^2/\delta)}{2^{\tilde p}}} =
	\nu\rho^{\tilde h} \leq \nu\rho^{\left\lfloor\tilde h\right\rfloor}
	\end{equation}
	\begin{equation}\label{eq:barbarSto}
	%    \frac{\hmax\nu^2\rho^{2\bar h}}{\ddot h\log(\timeHorizon)} 
	%    \geq 
	\text{    and,~~ }\frac{\hmax}{4\left\lfloor\tilde h\right\rfloor 2^{\left\lfloor\tilde p\right\rfloor}}  
	\geq
	\frac{\hmax}{4\left\lfloor\tilde h\right\rfloor 2^{\tilde p}}  
	=
	\frac{\hmax\nu^2\rho^{2\tilde h}}{4\left\lfloor\tilde h\right\rfloor b^2\log(2\timeHorizon^2/\delta)}  
	\geq  \frac{\hmax\nu^2\rho^{2\tilde h}}{4\tilde h b^2 \log(2\timeHorizon^2/\delta)} 
	= C\rho^{-d \tilde h}
	\geq C\rho^{-d \left\lfloor\tilde h\right\rfloor}.
	%    \geq C\rho^{-d \ddot h}.
	\end{equation}
	\paragraph{Low-noise regime $\frac{\nu^2\rho^{2\tilde h}}{b^2\log(2\timeHorizon^2/\delta)}> 1$ or $b=0$:}
	In this case, we can reuse arguments close to the argument used in the deterministic feedback case in the proof of \SequOOL~(Theorem~\ref{th:sequool}), we denote  $\ddot h = \bar h$ and $\ddot p = \bar p$ where $\bar h$ and $\bar p$ verify,    
	%     It actually comes more handy to distinguish the two cases based on the value $\bar h$ of the optimal depth that is reached in the deterministic case (see the proof of Theorem~\ref{th:sequool}). We recall that $\bar h$ is the solution of the following equation, where we keep here the definition of $\hmax$ proper to \StroquOOL,
	\begin{equation}\label{eq:barbarSto56}
	\frac{\hmax}{4\bar h}=C\rho^{-d\bar h}
	\quad \text{and} \quad \bar p = 0.  
	\end{equation}
	If $d=0$ we have $\bar h=\hmax/C$. 
	If $d>0$  we have  
	$\bar h = \frac{1}{d\log(1/\rho)}\lambertW\left(\frac{\hmax d\log(1/\rho)}{4C}\right)$
	where standard  $\lambertW$ is the standard Lambert $\lambertW$ function.    Using standard properties of the $\lfloor\cdot\rfloor$ function, 
	%and denoting $\lfloor x\rfloor_+=\max(\lfloor x\rfloor,0)$ for all real number $x$, 
	we have 
	\begin{equation}\label{eq:barbarSto57}
	b\sqrt{\frac{\log(2\timeHorizon^2/\delta)}{2^{\left\lfloor\ddot p\right\rfloor+1}}}
	\leq
	b\sqrt{\log(2\timeHorizon^2/\delta)} <
	\nu\rho^{\tilde h} \stackrel{\textbf{(a)}}{\leq} \nu\rho^{\bar h} \leq \nu\rho^{\left\lfloor\bar h\right\rfloor}
	\end{equation}
	where \textbf{(a)} is because of the following reasoning. First note that one can assume $b>0$ as for the case $b=0$, the Equation~\ref{eq:barbarSto57} is trivial.    As we have $\frac{\hmax\nu^2\rho^{2\tilde h}}{4\tilde hb^2\log(2\timeHorizon^2/\delta)}=C\rho^{-d\tilde h}$ and $\frac{\nu^2\rho^{2\tilde h}}{b^2\log(2\timeHorizon^2/\delta)}> 1$, then, 
	$\frac{\hmax}{4\tilde h}< C\rho^{-d\tilde h}$. From the inequality $\frac{\hmax}{4\tilde h}< C\rho^{-d\tilde h}$ and the fact that $\bar h$ corresponds to the case of equality  $\frac{\hmax}{4\bar h}= C\rho^{-d\bar h}$, we deduce that $\bar h\leq \tilde h$, since the left term of the inequality decreases with $h$ while the right term increases. Having $\bar h\leq \tilde h$ gives $\rho^{\bar h}\geq \rho^{\tilde h}$.
	
	Given these particular definitions of $\ddot h$ and $\ddot p$ in two distinct cases we now bound the regret.

	%\todov{forget about the max and +1 for now i ahve to make a significant change to make the story look nice with the best of both world story!}
	%    Using standard properties of the $\lfloor\cdot\rfloor$ function, 
	%    %and denoting $\lfloor x\rfloor_+=\max(\lfloor x\rfloor,0)$ for all real number $x$, 
	%    we have 
	
	%    
	%Let $\ddot p = \min\left\{p\in[0:\left\lfloor\log_2(\hmax)\right\rfloor]: \sqrt{\frac{\log(n)}{2^p}} \leq \nu\rho^{\ddot h}\right\}$.
	%Note that this is well-defined as 
	%for the specific value $p = %\left\lfloor\log_2(\hmax)\right\rfloor$,
	%$\sqrt{\frac{\log(n)}{2^p}} \leq  \sqrt{\frac{\log(n)}{2^{\log_2(\hmax)}}} \leq \nu\rho^{\ddot h}$ 
	
		First we will verify that $\left\lfloor\ddot h\right\rfloor$ is a reachable depth by \StroquOOL in the sense that $\ddot h\leq \hmax$ and $\ddot p\leq \log_2(\hmax/h)$ for all $h\leq\ddot h$. 
	As $\rho<1$, $d\geq 0$ and $\ddot h\geq 0$ we have $\rho^{-d\ddot h}\geq 1$. This gives  $C\rho^{-d\ddot h}\geq 1$. Finally as 
	$ \frac{\hmax}{\ddot h  2^{\ddot p}}=C\rho^{-d\ddot h}$, we have $\ddot h\leq \hmax/ 2^{\ddot p}$.
	% Similarly we have $ \frac{\hmax\nu^2\rho^{2\bar h}}{\left\lfloor\bar h\right\rfloor \log(\timeHorizon)}= C_1\rho^{-d\bar h}$ and $C_1\rho^{-d\bar h}\geq 1$. Therefore 
	Note also that from the previous equation we have that if $\ddot h\geq 1$, $\ddot p\leq \log_2(\hmax/h)$ for all $h\leq\ddot h$ . Finally in both regimes we already proved that $\ddot p\geq 0$.

	We always have $\depthOp_{\hmax,\left\lfloor\ddot p\right\rfloor}\geq 0$. If $\ddot h \geq 1$, as discussed above  $\left\lfloor\ddot h \right\rfloor \in \left[\hmax\right]$,
	therefore $\depthOp_{\hmax,\left\lfloor\ddot p\right\rfloor}\geq \depthOp_{\left\lfloor\ddot h\right\rfloor,\left\lfloor\ddot p\right\rfloor},
	$ as $\depthOp_{\cdot,\left\lfloor p\right\rfloor}$ is increasing for all $p\in[0,\pmax ]$.
	Moreover on event~$\xi$, $ \depthOp_{\left\lfloor\ddot h\right\rfloor,\left\lfloor\ddot p\right\rfloor} =  \left\lfloor\ddot h\right\rfloor$ because of Lemma~\ref{lem:hstarSto}  which assumptions on $\left\lfloor\ddot h\right\rfloor$ and $\left\lfloor\ddot p\right\rfloor$ are verified because of Equations~\ref{eq:barbarSto2} and~\ref{eq:barbarSto} in the high-noise regime and because of Equations~\ref{eq:barbarSto56} and~\ref{eq:barbarSto57} in the low-noise regime,  and, in general,  $\left\lfloor\ddot h \right\rfloor \in \left[\left\lfloor\hmax/2^{\ddot p}\right\rfloor\right]$ and  $\left\lfloor\ddot p\right\rfloor\in [0: \pmax]$. So in general we have $\depthOp_{\left\lfloor\hmax/2^{\ddot p}\right\rfloor,\left\lfloor\ddot p\right\rfloor}  \geq  \left\lfloor\ddot h \right\rfloor$.

	%\todomi{make clear that high noise applies allways}
			We can now bound the regret in the two regimes.
	\paragraph{High-noise regime} 
	In general, we have, on event $\xi$,
	\begin{align*}
	r_\timeHorizon
	&~\leq~ \nu\rho^{ 
		\frac{1}{(d+2)\log(1/\rho)}\lambertW\left(\frac{\nu^2\hmax (d+2)\log(1/\rho)}{C\log(2\timeHorizon^2/\delta)}\right)}+2b\sqrt{\frac{\log(2\timeHorizon^2/\delta)}{\hmax}}\cdot
	\end{align*}
	While in the deterministic feedback case, the regret was scaling with $d$ when $d\geq 0$, in the stochastic feedback case, the regret scale with $d+2$. This is because the uncertainty due to the presence of noise diminishes as $n^{-\frac{1}{2}}$ when we collect $n$ observations.
	
	%\todov{actually the last inequality above is quite sketchy!}
	Moreover, as proved by~\citet{Hoorfar08IO}, the Lambert $W(x)$ function verifies for $x\geq e$,
	%\todomi{why we need Lambert $\lambertW$ function? }
	$\lambertW(x)\geq \log\left(\frac{x}{\log x}\right)$.
	Therefore, if $\frac{\nu^2\hmax (d+2)\log(1/\rho)}{4C\log(2\timeHorizon^2/\delta)}>e$ we have, denoting $d'=(d+2)\log(1/\rho)$, 
	%\todom{How is this $d'$ useful?\\
	%Victor: smaller formulas}
	%
	\begin{align*}
	r_\timeHorizon-2b\sqrt{\frac{\log(2\timeHorizon^2/\delta)}{\hmax}}
	&\leq 
	\nu \rho^{ \frac{1}{d'}
		\left(
		\log\left(\frac{\frac{\hmax d'\nu^2}{4C\log(2\timeHorizon^2/\delta)}}{
			\log\left(\frac{\hmax d'\nu^2}{4C\log(2\timeHorizon^2/\delta)}    \right)
		}\right)
		\right)
	} \\
	&= \nu e^{ \frac{1}{(d+2)\log(1/\rho)}
		\left(
		\log\left(\frac{\frac{\hmax d'\nu^2}{4C\log(2\timeHorizon^2/\delta)}}{
			\log\left(\frac{\hmax d'\nu^2}{4C\log(2\timeHorizon^2/\delta)}    \right)
		}\right)
		\right)
		\log(\rho)}
	= \nu \left(\frac{\frac{\hmax d'\nu^2}{4C\log(2\timeHorizon^2/\delta)}}{\log\left(\frac{\hmax d'\nu^2}{4C\log(2\timeHorizon^2/\delta)}\right)}\right)^{- \frac{1}{d+2}}.
	\end{align*}

	\paragraph{Low-noise regime}
	We have $2b\sqrt{\frac{\log(2\timeHorizon^2/\delta)}{\hmax}}\leq 
	2\frac{\nu\rho^{\tilde h}}{\sqrt{\log(2\timeHorizon^2/\delta)}} \sqrt{\frac{\log(2\timeHorizon^2/\delta)}{\hmax}}\leq
	2\nu\rho^{\tilde h}\leq
	2\nu\rho^{\bar h}$.
	Therefore
	$r_\timeHorizon
	\leq
	\nu\rho^{\depthOp_{\hmax,\bar p}+1} +2b\sqrt{\frac{\log(2\timeHorizon^2/\delta)}{\hmax}}
	\leq
	3\nu\rho^{\bar h} $.
	Discriminating between $d=0$ and $d>0$ leads to the claimed results.

	\paragraph{Results in Expectation}	
		We want to  obtain additionally, our final result as an upper bound on the expected simple regret $\Exp r_n$.  Compared to the results in high probability, the following  extra assumption that the function $f$ is bounded is made: For all $x\in \dom, |f(x)|\leq f_{\max}$. Then $\delta$ is set as $\delta=\frac{4b}{f_{\max}\sqrt{\timeHorizon}}$.
			We bound the expected regret now discriminating on whether or not the event $\xi$ holds. We have 
		\begin{align*}
		\Exp r_\timeHorizon &\leq (1-\delta) \left(\nu\rho^{\depthOp_{\hmax,\ddot p}+1}+2b\sqrt{\frac{\log(f_{\max}\timeHorizon^{5/2}/b)}{\hmax}}\right) +\delta\times f_{\max}\\
&		\leq
		\nu\rho^{\depthOp_{\hmax,\ddot p}+1}
		+2b\sqrt{\frac{\log(f_{\max}\timeHorizon^{5/2}/b)}{\hmax}}+\frac{4b}{\sqrt{\timeHorizon}}\\
		&\leq
		\nu\rho^{\depthOp_{\hmax,\ddot p}+1}+6b\sqrt{\frac{\log(f_{\max}\timeHorizon^{5/2}/b)}{\hmax}}\cdot
		\end{align*}
	%	\todov{the $\delta\times 1$ is the place where we make assumption on the range by putting $1$, of it should be fmaxrange we should change delta and include fmax in it i guess}
\end{proof}
%

%

%\section{Miso}\label{koko}

\end{document}